\documentclass[pdflatex,sn-mathphys-ay]{sn-jnl}

\usepackage{graphicx}
\usepackage{amsmath,amssymb,amsfonts}
\usepackage{amsthm}
\usepackage{mathtools}
\usepackage{mathrsfs}
\usepackage[title]{appendix}

\theoremstyle{thmstyleone}
\newtheorem{theorem}{Theorem}
\newtheorem{lemma}[theorem]{Lemma}
\newtheorem{proposition}[theorem]{Proposition}
\newtheorem{corollary}[theorem]{Corollary}
\theoremstyle{thmstyletwo}
\newtheorem{remark}[theorem]{Remark}
\theoremstyle{thmstylethree}
\newtheorem{definition}[theorem]{Definition}

\newcommand{\bP}{\mathbf{P}}
\def\qed{\hfill\ensuremath{\Box}}
\newcommand{\R}{\mathbb{R}}

\newcommand{\bX}{\mathbf{X}}
\newcommand{\bY}{\mathbf{Y}}
\newcommand{\bW}{\mathbf{W}}
\newcommand{\bDelta}{\boldsymbol{\Delta}}
\newcommand{\bPsi}{\mathbf{\Psi}}
\newcommand{\bM}{\mathbf{M}}
\newcommand{\bH}{\mathbf{H}}
\newcommand{\bI}{\mathbf{I}}
\newcommand{\bD}{\mathbf{D}}
\newcommand{\bR}{\mathbf{R}}
\newcommand{\bV}{\mathbf{V}}

\newcommand{\bone}{\mathbf{1}}
\newcommand{\bzero}{\mathbf{0}}
\newcommand{\bmu}{\boldsymbol{\mu}}
\newcommand{\balpha}{\boldsymbol{\alpha}}
\newcommand{\bSigma}{\boldsymbol{\Sigma}}
\newcommand{\bGamma}{\boldsymbol{\Gamma}}
\newcommand{\bLambda}{\boldsymbol{\Lambda}}
\newcommand{\bepsilon}{\boldsymbol{\epsilon}}
\newcommand{\bpi}{\boldsymbol{\pi}}
\newcommand{\bTheta}{\boldsymbol{\Theta}}
\newcommand{\bdelta}{\boldsymbol{\delta}}
\newcommand{\bA}{\mathbf{A}}
\newcommand{\bC}{\mathbf{C}}
\newcommand{\bE}{\mathbf{E}}
\newcommand{\bF}{\mathbf{F}}
\newcommand{\bL}{\mathbf{L}}
\newcommand{\bQ}{\mathbf{Q}}
\newcommand{\bB}{\mathbf{B}}
\newcommand{\bT}{\mathbf{T}}
\newcommand{\bZ}{\mathbf{Z}}
\newcommand{\bPi}{\boldsymbol{\Pi}}
\newcommand{\Stief}{\mathrm{St}}
\newcommand{\col}{\mathrm{col}}
\newcommand{\rank}{\mathrm{rank}}
\newcommand{\tr}{\mathrm{tr}}
\newcommand{\diag}{\mathrm{diag}}
\newcommand{\Sb}{{S_b^{\mathrm{ML}}}}
\newcommand{\Sw}{{S_w^{\mathrm{ML}}}}
\newcommand{\Stml}{{S_t^{\mathrm{ML}}}}
\newcommand{\psd}{\succeq}
\newcommand{\npsd}{\preceq}

\newcommand{\kmax}{k_{\max}}
\newcommand{\gap}{\mathrm{gap}}
\newcommand{\Sbpop}{{S_b^{\mathrm{pop}}}}
\newcommand{\Swpop}{{S_w^{\mathrm{pop}}}}
\newcommand{\Stmlpop}{{S_t^{\mathrm{ML,pop}}}}
\newcommand{\Mstar}{{\bM^*}}
\newcommand{\What}{\hat{\bW}}
\newcommand{\Wstar}{{\bW^*}}
\newcommand{\Vstar}{{\bV^*}}
\newcommand{\Gr}{\mathrm{Gr}}
\newcommand{\bSigmay}{\bSigma_{\mathbf{y}}}
\newcommand{\nmin}{n_{\min}}
\newcommand{\Swpopc}{{S_{w,c}^{\mathrm{pop}}}}

\renewcommand{\orcid}[1]{\,\href{https://orcid.org/#1}{\textsc{orcid}~#1}}

\begin{document}

\title[Orthogonal Multilabel Fisher Discriminants]{On the Spectral Structure and Objective Equivalence of Orthogonal Multilabel Fisher Discriminants}

\author*[1]{\fnm{Brian} \sur{Keith-Norambuena}\orcid{0000-0001-5734-8962}}\email{brian.keith@ucn.cl}
\author[2]{\fnm{Juan} \sur{Bekios-Calfa}\orcid{0000-0003-0085-2425}}\email{juan.bekios@ucn.cl}

\affil[1]{\orgdiv{Department of Computing \& Systems Engineering}, \orgname{Universidad Cat\'olica del Norte}, \orgaddress{\city{Antofagasta}, \country{Chile}}}
\affil[2]{\orgdiv{School of Engineering}, \orgname{Universidad Cat\'olica del Norte}, \orgaddress{\city{Coquimbo}, \country{Chile}}}

\abstract{%
We provide a unified theoretical analysis of Linear Discriminant
Analysis with simultaneous multilabel scatter matrix formulations and Stiefel
orthogonality constraints. Our contributions span both algebraic structure and
statistical guarantees. On the algebraic side, we characterize the rank of the
multilabel between-class scatter matrix, showing that the effective
discriminant dimensionality can strictly exceed the classical single-label
bound of $C-1$; we establish a multilabel partition of variance and prove that
all four Fisher objectives are equivalent under the $\bW^\top \Stml \bW =
\bI_r$ constraint while characterizing their divergence under the Stiefel
constraint; and we prove a two-sided label-distance preservation bound
relating projected distances to Hamming distances in label space. On the
statistical side, we establish a finite-sample $O(\kmax\sqrt{d\log d/n}/\gap_r)$
bound on the subspace estimation error under sub-Gaussian noise with a
matching $\Omega(\sigma^2 d/(n\,\gap_r))$ minimax lower
bound, establishing a near-minimax-optimal rate (matching up to
logarithmic and $\kmax$ factors) for multilabel discriminant subspace
estimation. We further provide high-probability distance
concentration, robustness guarantees under label interactions, and a
regularization analysis preserving the spectral structure when $d \gg n$.
All results are verified numerically on synthetic data generated from the
linear label-effect model, covering both the algebraic identities and the
multilabel-specific quantities ($\kmax$, $\kappa(\Stml)$,
$\|\bGamma/n\|_2$, $\Delta_r$) that govern the statistical bounds.
The numerical experiments are designed as a sanity check for the
theorems rather than as an empirical benchmark; evaluation on real
multilabel datasets is left to future work targeting
application-oriented venues.
}

\keywords{linear discriminant analysis, multilabel classification, orthogonality constraints, Stiefel manifold, minimax optimality, spectral analysis}

\pacs[MSC Classification]{62H30, 62H12, 62F12, 15A18}

\maketitle

\section{Introduction}
\label{sec:intro}

Linear Discriminant Analysis (LDA) \citep{fisher1936,rao1948} is among the
most fundamental techniques for supervised dimensionality reduction. Given $n$
samples in $\R^d$ with class labels, LDA seeks a linear projection that
maximizes between-class separation relative to within-class spread, as measured
by scatter matrices. The classical formulation solves a generalized eigenvalue
problem $S_b \mathbf{w} = \lambda S_w \mathbf{w}$, yielding at most $C-1$
discriminant directions for $C$ classes.

Two important extensions of classical LDA have been developed largely
independently.

\paragraph{Orthogonality-constrained LDA.} Standard LDA eigenvectors are
$S_w$-conjugate and orthogonal ($\mathbf{w}_i^\top S_w \mathbf{w}_j = 0$ for $i
\neq j$) but not Euclidean-orthogonal. Several methods enforce explicit
orthogonality: the Foley--Sammon method \citep{foley1975} extracts directions
sequentially with $\bW^\top \bW = \bI$ constraints via Lagrange multipliers;
Orthogonal LDA (OLDA) \citep{ye2005,ye2006} provides a batch formulation via
SVD and QR decomposition, with \citet{chu2010} providing a fast variant for
undersampled problems; Uncorrelated LDA (ULDA) \citep{jin2001} enforces
$\bW^\top S_t \bW = \bI$ for statistical uncorrelatedness (see
\citealt{hou2015} for a comparison of ULDA and OLDA); and trace ratio
solvers \citep{ngo2012,wang2023} address the true trace ratio problem on the
Stiefel manifold with globally convergent algorithms. \citet{luo2011} proved
that under orthonormality constraints, the four standard Fisher
objectives---trace ratio, ratio trace, determinant ratio, and trace
difference---yield equivalent optimal solutions in the single-label setting.

\paragraph{Multilabel LDA.} In multilabel classification
\citep{tsoumakas2007}, each sample may belong to multiple classes
simultaneously, requiring scatter matrix redefinition. \citet{park2008} gave an
early treatment of LDA for multilabel problems. \citet{wang2010}
introduced multilabel scatter matrices $\Sb$ and $\Sw$ with weighted per-label
class means. \citet{ji2009} proposed a regression-based framework subsuming
LDA, PLS, and CCA. \citet{sun2008} explored hypergraph spectral learning for
multilabel dimensionality reduction. \citet{xu2018} unified five weighting
schemes for multilabel scatter matrices, later extended with saliency-based
weights by \citet{xu2021}. \citet{zhang2010} and \citet{zheng2014} explored
dependence maximization (using HSIC \citep{gretton2005}) and kernel alignment
approaches, respectively. \citet{bekios2017} proposed a Gram--Schmidt-based
multilabel LDA extension. See \citet{siblini2021} for a comprehensive survey.

\paragraph{Why this combination matters.}
Orthogonality and multilabel scatter matrices address two complementary
weaknesses of classical LDA when applied to modern multilabel problems.
First, the single-label rank bound $\rank(S_b) \leq C-1$ is an artefact of
the scatter construction: in a multilabel problem with $L$ labels, the
natural analogue is governed by the label co-occurrence
structure, and---as we show---can be as large as $L$ rather than $L-1$,
yielding genuinely more discriminant directions than the single-label
setting. Second, the eigenvectors of the
generalized eigenvalue problem $S_b\mathbf{w} = \lambda S_w\mathbf{w}$ are
$S_w$-conjugate but not Euclidean-orthogonal, which makes the resulting
projection sensitive to noise scaling and complicates downstream tasks
that rely on Euclidean geometry (nearest-neighbor classification,
clustering, visualization). Imposing a Stiefel constraint gives an
explicit control of projected noise
(Corollary~\ref{cor:noise})---a property that matters precisely when the
projected features are used with a distance-based downstream classifier.
The combination of the two---multilabel scatter matrices with a Stiefel
(or multilabel total-scatter) orthogonality constraint---is therefore the
natural target for a theoretical analysis of multilabel discriminant
projections, even though it has so far been studied only in restricted
forms.

\paragraph{The gap.} Despite the maturity of both families, no published work
directly analyzes the \emph{theoretical properties} of LDA with simultaneous
multilabel scatter matrices and Stiefel orthogonality constraints:
\begin{equation}\label{eq:central}
\max_{\bW \in \Stief(d,r)} \; \frac{\tr(\bW^\top \Sb \bW)}{\tr(\bW^\top \Sw \bW)},
\end{equation}
where $\Stief(d,r) = \{\bW \in \R^{d \times r} : \bW^\top \bW = \bI_r\}$ is the
Stiefel manifold. The closest existing methods are Stiefel-constrained and
multilabel-aware, yet each of them optimizes an objective that is
\emph{not} an explicit Fisher ratio over multilabel scatter matrices, and
none provides the algebraic structure or statistical guarantees established
here:
\begin{itemize}
\item \textbf{kaLDA} \citep{zheng2014} maximizes the kernel-alignment
  objective $\tr(\bW^\top\bX^\top\bY\bY^\top\bX\bW)$, which is related
  to but not identical with the multilabel between-class trace and,
  critically, ignores the within-class scatter entirely---so there is no
  Fisher ratio, no analysis of the effective rank of $\Sb$, and no
  finite-sample subspace estimation rate.
\item \textbf{MDDM} \citep{zhang2008,zhang2010} maximizes the HSIC
  dependence between features and labels under a Stiefel constraint.
  The resulting spectral structure, objective-equivalence behavior,
  and finite-sample rates analyzed here are orthogonal to the
  dependence-maximization viewpoint and have not been derived for the
  Fisher formulation.
\item The \textbf{regression framework} of \citet{ji2009} unifies LDA,
  PLS, and CCA as least-squares problems, but it does not analyze the
  multilabel rank of $\Sb$, the equivalence of the four Fisher
  objectives, or the effect of Stiefel constraints on the solution.
\item \textbf{Sparse multilabel bilinear embedding on Stiefel manifolds}
  \citep{liu2018} and the \textbf{orthogonal multi-view framework} of
  \citet{wang2022} combine Stiefel constraints with multilabel-aware
  objectives, but the objectives are sparsity-regularized bilinear or
  multi-view-specific rather than the classical Fisher ratio with
  multilabel scatter matrices.
\end{itemize}
In short, the closest prior methods either drop the Fisher-ratio form,
replace it by a surrogate of $\Sb$ alone, or target a sparse/multi-view
variant; none of them characterizes the rank of the multilabel
between-class scatter, the equivalence (and divergence) of the four
Fisher objectives under Stiefel and multilabel total-scatter constraints,
the label-distance preservation property, or the near-minimax-optimal
subspace estimation rate for the formulation~\eqref{eq:central}.  The
present paper fills exactly this theoretical gap.

\paragraph{Contributions.} We provide a unified analysis of the
formulation~\eqref{eq:central} across four groups of results:
\begin{enumerate}
\item \textbf{Rank characterization} (Section~\ref{sec:rank}):
  $\rank(\Sb) = \rank(\tilde{\bX}^\top \bY) \leq \min(d, n-1, \rank(\bY))$,
  recovering the classical $C-1$ bound as a special case and showing
  that the multilabel setting can provide strictly more discriminant
  dimensions than the single-label case.
\item \textbf{Objective equivalence} (Section~\ref{sec:equiv}):
  a multilabel partition of variance identity and the equivalence of
  all four Fisher objectives under $\bW^\top \Stml \bW = \bI_r$; under
  the Stiefel constraint, we identify the commutativity conditions for
  equivalence and characterize the divergence under variable cardinalities.
\item \textbf{Label-distance preservation} (Section~\ref{sec:distance}):
  a two-sided bound relating expected squared projected distances to
  Hamming distances, with the noise contribution bounded by the
  orthogonality constraint.
\item \textbf{Statistical guarantees} (Section~\ref{sec:statistical}):
  an $O(\kmax\sqrt{d\log d/n}/\gap_r)$ finite-sample upper bound
  (Theorem~\ref{thm:subspace}) with a matching $\Omega(\sigma^2 d/(n\,\gap_r))$
  minimax lower bound (Theorem~\ref{thm:minimax}), plus
  high-probability distance concentration
  (Theorem~\ref{thm:concentration}), robustness to label interactions
  (Theorem~\ref{thm:interactions}), and a regularization analysis
  preserving the spectral structure when $d \gg n$
  (Theorem~\ref{thm:regularization}).
\end{enumerate}

\section{Preliminaries}
\label{sec:prelim}

\subsection{Notation}

Let $\bX \in \R^{n \times d}$ be a data matrix with $n$ samples in $d$
dimensions, and $\bY \in \{0,1\}^{n \times L}$ be a binary label matrix where
$y_{i\ell} = 1$ indicates that sample $i$ has label $\ell$. In the multilabel
setting, each row of $\bY$ may contain multiple ones. We denote:
\begin{itemize}
\item $\bmu = \frac{1}{n}\sum_{i=1}^n \mathbf{x}_i$ --- the global mean;
\item $\bmu_\ell = \frac{1}{n_\ell}\sum_{i: y_{i\ell}=1} \mathbf{x}_i$ --- the
  mean of samples with label $\ell$, where $n_\ell = \sum_i y_{i\ell}$;
\item $k_i = \sum_{\ell=1}^L y_{i\ell}$ --- the label cardinality of sample $i$;
\item $K = \sum_{i=1}^n k_i = \sum_{\ell=1}^L n_\ell$ --- the total label count;
\item $\bGamma = \bY^\top \bY \in \R^{L \times L}$ --- the label co-occurrence
  (Gram) matrix;
\item $\bD_n = \diag(n_1, \ldots, n_L)$ --- diagonal matrix of label frequencies;
\item $\bH = \bI_n - \frac{1}{n}\bone_n \bone_n^\top$ --- the centering matrix
  (the orthogonal projector onto
  $\bone_n^\perp = \{\mathbf{v} \in \R^n : \bone_n^\top\mathbf{v} = 0\}$,
  with $\rank(\bH) = n - 1$);
\item $\tilde{\bX} = \bH\bX$ --- the centered data matrix.
\end{itemize}
When a distributional model is assumed (Section~\ref{sec:distance}), we write
$\pi_\ell = \mathbb{E}[y_{i\ell}]$ for the marginal probability of label~$\ell$
and $\bD_\pi = \diag(\pi_1, \ldots, \pi_L)$.

\subsection{Multilabel Scatter Matrices}

Following \citet{wang2010}, the multilabel scatter matrices are defined as:
\begin{align}
\Sb &= \sum_{\ell=1}^L n_\ell (\bmu_\ell - \bmu)(\bmu_\ell - \bmu)^\top, \label{eq:Sb}\\
\Sw &= \sum_{\ell=1}^L \sum_{i:\, y_{i\ell}=1} (\mathbf{x}_i - \bmu_\ell)(\mathbf{x}_i - \bmu_\ell)^\top. \label{eq:Sw}
\end{align}
The standard (single-label) total scatter is $S_t = \sum_{i=1}^n (\mathbf{x}_i
- \bmu)(\mathbf{x}_i - \bmu)^\top = \tilde{\bX}^\top \tilde{\bX}$.

\begin{remark}
In the multilabel setting, a sample $\mathbf{x}_i$ with labels
$\{\ell_1, \ell_2, \ldots, \ell_{k_i}\}$ contributes to $\Sw$ once for
\emph{each} of its $k_i$ labels. This sample-reweighting is the fundamental
distinction from the single-label case and is the source of the phenomena we
analyze.
\end{remark}

\subsection{The Stiefel Manifold and Fisher Objectives}

The Stiefel manifold $\Stief(d,r) = \{\bW \in \R^{d \times r} : \bW^\top \bW =
\bI_r\}$ is a smooth compact manifold \citep{edelman1998,absil2008}. On it,
we consider four Fisher-type objectives with multilabel scatter matrices:
\begin{align}
J_{\mathrm{TR}}(\bW) &= \frac{\tr(\bW^\top \Sb \bW)}{\tr(\bW^\top \Sw \bW)},
  &\text{(trace ratio)} \label{eq:TR}\\
J_{\mathrm{RT}}(\bW) &= \tr\!\left((\bW^\top \Sw \bW)^{-1} \bW^\top \Sb \bW\right),
  &\text{(ratio trace)} \label{eq:RT}\\
J_{\mathrm{DR}}(\bW) &= \frac{|\bW^\top \Sb \bW|}{|\bW^\top \Sw \bW|},
  &\text{(determinant ratio)} \label{eq:DR}\\
J_{\mathrm{TD}}(\bW) &= \tr(\bW^\top \Sb \bW) - \tr(\bW^\top \Sw \bW).
  &\text{(trace difference)} \label{eq:TD}
\end{align}

\subsection{Distributional Assumptions and Population Quantities}

For the statistical guarantees in Section~\ref{sec:statistical}, we require
distributional assumptions on the data-generating process.

\begin{definition}[Linear label-effect model]\label{def:model}
A sample $(\mathbf{x}_i, \mathbf{y}_i)$ is generated as:
\begin{equation}\label{eq:model}
\mathbf{x}_i = \bmu + \sum_{\ell=1}^L y_{i\ell}\, \balpha_\ell + \bepsilon_i,
\end{equation}
where $\balpha_\ell \in \R^d$ is the additive effect of label $\ell$ on the
feature vector, $\bmu \in \R^d$ is the baseline mean, and $\bepsilon_i \sim
(0, \bSigma_w)$ is zero-mean noise with covariance $\bSigma_w \succ 0$,
independent of $\mathbf{y}_i$.
\end{definition}

Let $\bA = [\balpha_1, \ldots, \balpha_L] \in \R^{d \times L}$
denote the matrix of label effects.

\begin{definition}[Sub-Gaussian noise]\label{def:subgaussian}
The noise vector $\bepsilon_i$ in the model~\eqref{eq:model} is called
$\sigma$-sub-Gaussian if for every unit vector $\mathbf{u} \in \R^d$,
\[
\mathbb{E}\!\left[\exp(t\, \mathbf{u}^\top \bepsilon_i)\right]
\leq \exp\!\left(\frac{t^2 \sigma^2}{2}\right)
\quad \text{for all } t \in \R.
\]
In particular, Gaussian noise $\bepsilon_i \sim \mathcal{N}(\bzero,
\bSigma_w)$ is $\sqrt{\lambda_{\max}(\bSigma_w)}$-sub-Gaussian.
\end{definition}

\begin{definition}[Population scatter matrices]\label{def:pop-scatter}
Under the linear label-effect model~\eqref{eq:model} with label
distribution $\pi_\ell = \mathbb{E}[y_{i\ell}]$, define the population scatter
matrices:
\begin{align}
\Sbpop &= \bA \bD_\pi \bA^\top, \quad \bD_\pi = \diag(\pi_1, \ldots, \pi_L),
  \label{eq:Sbpop}\\
\Swpop &= K^{\mathrm{pop}} \bSigma_w, \quad K^{\mathrm{pop}} = \textstyle\sum_{\ell=1}^L \pi_\ell,
  \label{eq:Swpop}\\
\Mstar &= 2\Sbpop - \Stmlpop = \Sbpop - \Swpop,
  \label{eq:Mstar}
\end{align}
where $\Stmlpop = \Sbpop + \Swpop$ is the population multilabel total scatter.
The eigenvalue gap of $\Mstar$ at rank $r$ is
$\gap_r(\Mstar) = \lambda_r(\Mstar) - \lambda_{r+1}(\Mstar)$,
where $\lambda_1 \geq \lambda_2 \geq \cdots$ are the eigenvalues of $\Mstar$.
\end{definition}

\begin{remark}[Label co-occurrence structure]\label{rem:cooccurrence}
The multilabel \emph{label co-occurrence matrix}
$\bC = \mathbb{E}[\mathbf{y}_i\mathbf{y}_i^\top]$
has entries $C_{\ell\ell} = \pi_\ell$ and
$C_{\ell\ell'} = \pi_{\ell\ell'}
:= \Pr(y_{i\ell} = y_{i\ell'} = 1)$
for $\ell \neq \ell'$.
The \emph{label covariance} is
$\bSigmay = \bC - \bpi\bpi^\top$.
The finite-sample discriminant
$\hat\bM/n = (\Sb - \Sw)/n$ converges (in the noiseless limit)
to the \emph{centered population reference}
\begin{equation}\label{eq:Mstar-ml}
\Mstar_c = \bA\,\bQ_\pi\,\bA^\top - K^{\mathrm{pop}}\bSigma_w,
\end{equation}
where $\bQ_\pi = \bB_\pi - \bW_\pi$ is the \emph{net label discriminant matrix} with
$\bB_\pi = \bSigmay\,\bD_\pi^{-1}\,\bSigmay$ (population centered
between-class label scatter) and
$\bW_\pi = \sum_\ell \pi_\ell\,
\mathrm{Cov}(\mathbf{y} \mid y_\ell = 1)$ (population within-class
label scatter).  In the single-label case, $\bW_\pi = \bzero$ and
$\bQ_\pi = \bD_\pi - \bpi\bpi^\top$, so the top-$r$ eigenspace of
$\Mstar_c$ coincides with that of
$\Mstar = \bA\bD_\pi\bA^\top - K^{\mathrm{pop}}\bSigma_w$.
\end{remark}

\section{Rank Characterization of the Multilabel Between-Class Scatter}
\label{sec:rank}

The rank of the between-class scatter matrix determines the dimensionality of
the useful discriminant subspace and dictates whether the orthogonality
constraint is binding. In single-label LDA with $C$ classes, the classical
result is $\rank(S_b) \leq C-1$. The $-1$ in this bound comes from the
single-label identity $\sum_\ell n_\ell(\bmu_\ell - \bmu) = 0$, which holds
because every sample is counted exactly once across the classes
($\sum_\ell n_\ell \bmu_\ell = \sum_i \mathbf{x}_i = n\bmu$) and which
forces the $C$ vectors $(\bmu_\ell - \bmu)_{\ell=1}^C$ to span an at-most
$(C-1)$-dimensional subspace.
In the multilabel case a sample with $k_i$ labels contributes to
$k_i$ class-means, so
$\sum_\ell n_\ell \bmu_\ell = \sum_i k_i \mathbf{x}_i$, and this is no
longer proportional to $n\bmu$ as soon as the label cardinalities $k_i$
vary across samples.  The one rank reduction that produced the $-1$ in
the classical bound therefore disappears, and---as we show below---the
effective multilabel discriminant dimensionality can reach
$\rank(\bY)$ rather than $\rank(\bY)-1$.

\subsection{Matrix Factorization of $\Sb$}

We begin by expressing $\Sb$ in factored form.

\begin{lemma}[Factorization of $\Sb$]\label{lem:factor}
Let $\bM = [\sqrt{n_1}(\bmu_1 - \bmu), \ldots, \sqrt{n_L}(\bmu_L - \bmu)]
\in \R^{d \times L}$. Then $\Sb = \bM\bM^\top$. Moreover,
\begin{equation}\label{eq:M-factor}
\bM = \tilde{\bX}^\top \bY \bD_n^{-1/2},
\end{equation}
where $\tilde{\bX} = \bH\bX$ is the centered data matrix.
\end{lemma}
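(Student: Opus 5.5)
The proof splits into two parts: the outer-product identity $\Sb=\bM\bM^\top$, and the explicit formula \eqref{eq:M-factor}. The first part is immediate. For any matrix with columns $\mathbf{m}_\ell$ we have $\bM\bM^\top=\sum_\ell \mathbf{m}_\ell\mathbf{m}_\ell^\top$. With $\mathbf{m}_\ell=\sqrt{n_\ell}(\bmu_\ell-\bmu)$ this gives $\sum_\ell n_\ell(\bmu_\ell-\bmu)(\bmu_\ell-\bmu)^\top$, which is exactly \eqref{eq:Sb}.

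For the factorization, I will compare the two sides column by column. Let $\mathbf{y}^{(\ell)}\in\{0,1\}^n$ denote the $\ell$-th column of $\bY$. Then the $\ell$-th column of $\tilde{\bX}^\top\bY\bD_n^{-1/2}$ is $n_\ell^{-1/2}\,\tilde{\bX}^\top\mathbf{y}^{(\ell)}$, because right-multiplying by the diagonal matrix $\bD_n^{-1/2}$ scales column $\ell$ by $n_\ell^{-1/2}$. Next I use $\tilde{\bX}^\top=\bX^\top\bH$ (since $\bH$ is symmetric) and $\bH\mathbf{y}^{(\ell)}=\mathbf{y}^{(\ell)}-\frac{n_\ell}{n}\bone_n$ (since $\bone_n^\top\mathbf{y}^{(\ell)}=n_\ell$). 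Combining these,
\[
\tilde{\bX}^\top\mathbf{y}^{(\ell)}=\bX^\top\mathbf{y}^{(\ell)}-\tfrac{n_\ell}{n}\bX^\top\bone_n=n_\ell\bmu_\ell-n_\ell\bmu .
\]
Here $\bX^\top\mathbf{y}^{(\ell)}=\sum_{i:y_{i\ell}=1}\mathbf{x}_i=n_\ell\bmu_\ell$ and $\bX^\top\bone_n=n\bmu$. Multiplying by $n_\ell^{-1/2}$ gives $\sqrt{n_\ell}(\bmu_\ell-\bmu)$, which is the $\ell$-th column of $\bM$, and this proves \eqref{eq:M-factor}.

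There is no real obstacle; the one point needing care is that $\bD_n^{-1/2}$ must exist. I will state explicitly that every label is assumed to appear at least once, i.e. $n_\ell\ge 1$; this is already implicit in the definition of $\bmu_\ell$. I will also note that the argument never uses the single-label property $\sum_\ell y_{i\ell}=1$, so the identity holds for arbitrary multilabel $\bY$.
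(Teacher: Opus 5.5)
Your proof is correct and is essentially the paper's argument: both verify the identity column by column, using the symmetry of $\bH$ and $\bH\mathbf{y}_\ell = \mathbf{y}_\ell - \frac{n_\ell}{n}\bone_n$. The only difference is direction: the paper starts from $\sqrt{n_\ell}(\bmu_\ell-\bmu) = \bX^\top\bigl(\mathbf{y}_\ell/\sqrt{n_\ell} - \frac{\sqrt{n_\ell}}{n}\bone_n\bigr)$ and recognizes the columns of $\bH\bY\bD_n^{-1/2}$, whereas you expand $\tilde{\bX}^\top\bY\bD_n^{-1/2}$ directly.
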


\begin{proof}
The identity $\Sb = \bM\bM^\top$ is immediate from the definition. For the
factorization, note that:
\[
\bmu_\ell - \bmu = \frac{1}{n_\ell}\bX^\top \mathbf{y}_\ell - \frac{1}{n}\bX^\top \bone_n
= \bX^\top\!\left(\frac{\mathbf{y}_\ell}{n_\ell} - \frac{\bone_n}{n}\right),
\]
where $\mathbf{y}_\ell$ is the $\ell$-th column of $\bY$. Thus,
\begin{align*}
\sqrt{n_\ell}(\bmu_\ell - \bmu)
&= \bX^\top\!\left(\frac{\mathbf{y}_\ell}{\sqrt{n_\ell}} - \frac{\sqrt{n_\ell}}{n}\bone_n\right).
\end{align*}
Define $\tilde{\mathbf{y}}_\ell =
\frac{\mathbf{y}_\ell}{\sqrt{n_\ell}} -
\frac{\sqrt{n_\ell}}{n}\bone_n$. We verify directly that
$\tilde{\mathbf{y}}_\ell = \bH
\frac{\mathbf{y}_\ell}{\sqrt{n_\ell}}$:
\[
\bH \frac{\mathbf{y}_\ell}{\sqrt{n_\ell}}
= \left(\bI_n - \tfrac{1}{n}\bone_n\bone_n^\top\right)
  \frac{\mathbf{y}_\ell}{\sqrt{n_\ell}}
= \frac{\mathbf{y}_\ell}{\sqrt{n_\ell}}
  - \frac{\bone_n^\top\mathbf{y}_\ell}{n\sqrt{n_\ell}}\,\bone_n
= \frac{\mathbf{y}_\ell}{\sqrt{n_\ell}}
  - \frac{\sqrt{n_\ell}}{n}\,\bone_n
= \tilde{\mathbf{y}}_\ell,
\]
where we used $\bone_n^\top\mathbf{y}_\ell = n_\ell$ (the number of
samples with label~$\ell$).  As a consistency check,
$\bone_n^\top \tilde{\mathbf{y}}_\ell = \sqrt{n_\ell} -
\sqrt{n_\ell} = 0$, confirming $\tilde{\mathbf{y}}_\ell \in
\bone_n^\perp = \col(\bH)$ (recall that $\bH$ is the orthogonal
projector onto $\bone_n^\perp$, so its column space is exactly the set of
vectors orthogonal to $\bone_n$).
Therefore, the matrix $\tilde{\bY} = [\tilde{\mathbf{y}}_1, \ldots,
\tilde{\mathbf{y}}_L] = \bH \bY \bD_n^{-1/2}$ and $\bM = \bX^\top
\tilde{\bY} = \bX^\top \bH \bY \bD_n^{-1/2} = \tilde{\bX}^\top \bY
\bD_n^{-1/2}$, since $\bH$ is symmetric and $\tilde{\bX} = \bH\bX$.
\end{proof}

\subsection{Rank Theorem}

\begin{theorem}[Rank of $\Sb$]\label{thm:rank}
Assume all labels are present ($n_\ell \geq 1$ for all $\ell$). Then:
\begin{enumerate}
\item[(i)] $\rank(\Sb) = \rank(\tilde{\bX}^\top \bY)$.
\item[(ii)] $\rank(\Sb) \leq \min(d,\; n-1,\; \rank(\bY))$.
\item[(iii)] If $\bone_n \in \col(\bY)$, then $\rank(\Sb) \leq \min(d,\; n-1,\; \rank(\bY) - 1)$.
\item[(iv)] If $\rank(\tilde{\bX}) \geq \rank(\bH\bY)$
  and $\col(\bH\bY)$ is in general position relative to
  $\col(\tilde{\bX})$
  (i.e.,
  $\col(\bH\bY) \cap \ker(\tilde{\bX}^\top) = \{\bzero\}$;
  see Appendix~\ref{app:rank} for discussion),
  then $\rank(\Sb) = \rank(\bH\bY)$.
\end{enumerate}
\end{theorem}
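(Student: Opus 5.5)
Everything follows from Lemma~\ref{lem:factor}, which gives $\Sb = \bM\bM^\top$ with $\bM = \tilde{\bX}^\top\bY\bD_n^{-1/2}$. Since $\rank(\bM\bM^\top) = \rank(\bM)$ for any real matrix, and $\bD_n^{-1/2}$ is invertible because every $n_\ell \geq 1$, we get $\rank(\Sb) = \rank(\tilde{\bX}^\top\bY)$. This is (i).

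For (ii), I bound $\rank(\tilde{\bX}^\top\bY)$ by the rank of each factor. Here $\tilde{\bX}^\top \in \R^{d\times n}$ has rank at most $d$. Also $\tilde{\bX}^\top = \bX^\top\bH$ and $\rank(\bH) = n-1$, so its rank is at most $n-1$. Finally, the rank is at most $\rank(\bY)$.

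For (iii), I use the idempotence of $\bH$ to write
\[
\tilde{\bX}^\top\bY = \bX^\top\bH\bH\bY = \tilde{\bX}^\top(\bH\bY),
\]
so that $\rank(\Sb) \leq \rank(\bH\bY)$. Now $\col(\bH\bY) = \bH\,\col(\bY)$, and the restriction of $\bH$ to $\col(\bY)$ has kernel $\col(\bY)\cap\ker\bH = \col(\bY)\cap \mathrm{span}(\bone_n)$. If $\bone_n \in \col(\bY)$, this kernel is one-dimensional, so rank–nullity gives $\rank(\bH\bY) = \rank(\bY) - 1$. Combining this with the $d$ and $n-1$ bounds from (ii) yields (iii).

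For (iv), I again use $\rank(\Sb) = \rank(\tilde{\bX}^\top\bH\bY)$. Consider the map $\tilde{\bX}^\top$ restricted to the subspace $V = \col(\bH\bY)$. Its kernel is $V\cap\ker(\tilde{\bX}^\top)$, which is $\{\bzero\}$ by the general-position hypothesis. By rank–nullity, the image $\tilde{\bX}^\top V$ then has dimension $\dim V = \rank(\bH\bY)$. Since $\col(\tilde{\bX}^\top\bH\bY) = \tilde{\bX}^\top V$, this gives equality. The hypothesis $\rank(\tilde{\bX}) \geq \rank(\bH\bY)$ is implied by the trivial-intersection condition, since the image lies in $\col(\tilde{\bX}^\top)$; I will note that it is only a necessary consistency condition.

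No step is a real obstacle. The only points needing care are the rewriting $\tilde{\bX}^\top\bY = \tilde{\bX}^\top\bH\bY$, which is what lets the centering remove exactly one dimension in (iii), and the kernel computation $\col(\bY)\cap\ker\bH = \col(\bY)\cap\mathrm{span}(\bone_n)$.
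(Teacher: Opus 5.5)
Your proposal is correct and follows the paper's proof essentially step for step. It uses the factorization $\Sb = \bM\bM^\top$ with invertible $\bD_n^{-1/2}$ for (i), and the factor-rank bounds for (ii). For (iii) it rewrites $\tilde{\bX}^\top\bY = \tilde{\bX}^\top\bH\bY$ and counts $\dim(\col(\bY)\cap\ker(\bH))$. For (iv) it applies rank--nullity to $\tilde{\bX}^\top$ restricted to $\col(\bH\bY)$. Your side observations are also right: $\rank(\bH\bY) = \rank(\bY)-1$ holds with equality in (iii), and the hypothesis $\rank(\tilde{\bX}) \geq \rank(\bH\bY)$ in (iv) is already implied by the trivial-intersection condition.
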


\begin{proof}
\textit{(i)} From Lemma~\ref{lem:factor}, $\rank(\Sb) = \rank(\bM\bM^\top) =
\rank(\bM) = \rank(\tilde{\bX}^\top \bY \bD_n^{-1/2})$. Since $\bD_n^{-1/2}$
is an invertible diagonal matrix (all $n_\ell \geq 1$), $\rank(\tilde{\bX}^\top
\bY \bD_n^{-1/2}) = \rank(\tilde{\bX}^\top \bY)$.

\textit{(ii)} By the rank inequality for matrix products,
$\rank(\tilde{\bX}^\top \bY) \leq \min(\rank(\tilde{\bX}),
\rank(\bY))$. Since $\tilde{\bX} = \bH\bX$ and $\bH$ has rank $n-1$,
we have $\rank(\tilde{\bX}) \leq n-1$. Also $\rank(\tilde{\bX}) \leq d$. Thus
$\rank(\Sb) \leq \min(d, n-1, \rank(\bY))$.

\textit{(iii)} The matrix $\bM = \tilde{\bX}^\top \bH\bY\bD_n^{-1/2}$, so
$\rank(\bM) \leq \rank(\bH\bY)$ (since $\bD_n^{-1/2}$ is invertible and
right-multiplication by an invertible matrix preserves rank). Now assume
$\bone_n \in \col(\bY)$. By the definition of the centering matrix,
$\bH\bone_n = (\bI_n - \frac{1}{n}\bone_n\bone_n^\top)\bone_n = \bone_n -
\bone_n = \bzero$; that is, $\bone_n \in \ker(\bH)$.
Since $\bone_n$ is a nonzero vector in both $\col(\bY)$ and $\ker(\bH)$,
applying $\bH$ to $\bY$ annihilates this direction: formally,
$\rank(\bH\bY) = \rank(\bY) - \dim(\col(\bY) \cap \ker(\bH))
\leq \rank(\bY) - 1$,
because $\ker(\bH) = \mathrm{span}(\bone_n)$ and
$\bone_n \in \col(\bY)$ gives $\dim(\col(\bY) \cap \ker(\bH)) \geq 1$.
Thus $\rank(\Sb) \leq \rank(\bH\bY) \leq \rank(\bY) - 1$, and
combining with part~(ii) yields $\rank(\Sb) \leq \min(d, n-1, \rank(\bY) - 1)$.

\textit{(iv)} We have $\bM = \tilde{\bX}^\top \bH\bY\bD_n^{-1/2}$.
Since $\bD_n^{-1/2}$ is invertible,
$\col(\bH\bY\bD_n^{-1/2}) = \col(\bH\bY)$. Moreover,
$\col(\bH\bY) \subseteq \col(\bH) = \bone_n^\perp$
(because $\bH$ is a projector: every column of $\bH\bY$ is of the form
$\bH\mathbf{v}$, which lies in $\col(\bH)$).

The rank of $\bM = \tilde{\bX}^\top (\bH\bY\bD_n^{-1/2})$ is the
dimension of the image of $\col(\bH\bY)$ under the linear map
$\tilde{\bX}^\top \colon \R^n \to \R^d$. By the rank--nullity theorem
applied to $\tilde{\bX}^\top$ restricted to $\col(\bH\bY)$:
\[
\rank(\bM)
= \dim\!\bigl(\tilde{\bX}^\top(\col(\bH\bY))\bigr)
= \rank(\bH\bY) - \dim\!\bigl(\col(\bH\bY) \cap \ker(\tilde{\bX}^\top)\bigr).
\]
The ``general-position'' condition (see Appendix~\ref{app:rank} for a
detailed discussion) states precisely that
$\col(\bH\bY) \cap \ker(\tilde{\bX}^\top) = \{\bzero\}$: no nonzero
vector in $\col(\bH\bY)$ is annihilated by $\tilde{\bX}^\top$.  Under
this condition the second term vanishes and
$\rank(\bM) = \rank(\bH\bY)$.
\end{proof}

\subsection{Consequences and Comparison with the Single-Label Case}

\begin{corollary}[Recovery of the classical bound]\label{cor:classical}
In the single-label setting where each sample has exactly one label (each row
of $\bY$ has exactly one nonzero entry), $\rank(\Sb) \leq C - 1$, where
$C = L$ is the number of classes, assuming all classes are represented.
Equality $\rank(\Sb) = C - 1$ holds when additionally
$\rank(\tilde{\bX}) \geq C - 1$ and the general position condition
of Theorem~\ref{thm:rank}(iv) is satisfied.
\end{corollary}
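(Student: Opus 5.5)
The plan is to reduce the corollary to Theorem~\ref{thm:rank}(iii) and (iv), using the single-label structure of $\bY$.

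\textbf{Upper bound.} In the single-label setting each row of $\bY$ contains exactly one $1$. Hence $\bY\bone_L = \bone_n$, so $\bone_n \in \col(\bY)$. Since every class is represented, $n_\ell \geq 1$ for all $\ell$, and the columns of $\bY$ are nonzero with pairwise disjoint supports. They are therefore linearly independent, giving $\rank(\bY) = L = C$. Theorem~\ref{thm:rank}(iii) then yields
\[
\rank(\Sb) \leq \min(d, n-1, C-1) \leq C-1.
\]

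\textbf{Equality.} Here I will invoke Theorem~\ref{thm:rank}(iv), which gives $\rank(\Sb) = \rank(\bH\bY)$ once its hypotheses hold. So it suffices to show $\rank(\bH\bY) = C-1$; the hypothesis $\rank(\tilde{\bX}) \geq \rank(\bH\bY)$ then becomes exactly the assumed $\rank(\tilde{\bX}) \geq C-1$.

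To compute $\rank(\bH\bY)$, use the identity from the proof of Theorem~\ref{thm:rank}(iii):
\[
\rank(\bH\bY) = \rank(\bY) - \dim\bigl(\col(\bY) \cap \ker(\bH)\bigr).
\]
Since $\ker(\bH) = \mathrm{span}(\bone_n)$ is one-dimensional and contains $\bone_n \in \col(\bY)$, the intersection has dimension exactly $1$. Therefore $\rank(\bH\bY) = C - 1$.

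\textbf{Potential obstacle.} The only delicate point is that the corollary states the hypothesis as ``$\rank(\tilde{\bX}) \geq C-1$'' rather than ``$\rank(\tilde{\bX}) \geq \rank(\bH\bY)$''. This mismatch disappears once $\rank(\bH\bY) = C-1$ is established, so that computation is the main step to get right. The argument then closes by combining it with the general position condition of Theorem~\ref{thm:rank}(iv).
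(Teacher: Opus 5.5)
Your proposal is correct and follows the paper's route: the upper bound comes from Theorem~\ref{thm:rank}(iii) via $\bY\bone_L=\bone_n$ and $\rank(\bY)=C$, exactly as in the paper. Your equality argument through Theorem~\ref{thm:rank}(iv) and $\rank(\bH\bY)=\rank(\bY)-1=C-1$ is also sound, and it supplies a step that the paper's written proof of the corollary omits; the paper records that computation only in Appendix~\ref{app:rank}.
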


\begin{proof}
In the single-label case, $\bY\bone_L = \bone_n$ (each sample has exactly one
label), so $\bone_n \in \col(\bY)$. Moreover, $\rank(\bY) = C$ when all classes
are represented. By Theorem~\ref{thm:rank}(iii), $\rank(\Sb) \leq C - 1$.
\end{proof}

\begin{corollary}[Excess discriminant dimensions in multilabel]\label{cor:excess}
In the multilabel setting with $L$ labels:
\begin{enumerate}
\item[(i)] If $\bone_n \notin \col(\bY)$, $\rank(\bY) = L$,
  $\rank(\tilde{\bX}) \geq L$, and
  $\col(\bH\bY) \cap \ker(\tilde{\bX}^\top) = \{\bzero\}$
  (the general position condition of Theorem~\ref{thm:rank}(iv),
  which holds almost surely for continuous data distributions),
  then $\rank(\Sb) = L$, exceeding the single-label bound of $L - 1$.
\item[(ii)] More generally, $\rank(\Sb) \leq \rank(\bY)$, and the bound
  $\rank(\bY)$ depends on the label co-occurrence structure: it equals
  $\min(n, L)$ when the label vectors are linearly independent and may be
  strictly less when labels exhibit linear dependence.
\end{enumerate}
\end{corollary}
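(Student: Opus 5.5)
Part~(i) reduces to Theorem~\ref{thm:rank}(iv). The only work is to show that its hypotheses, together with $\bone_n \notin \col(\bY)$ and $\rank(\bY) = L$, force $\rank(\bH\bY) = L$.

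We use the identity already established in the proof of Theorem~\ref{thm:rank}(iii):
\[
\rank(\bH\bY) = \rank(\bY) - \dim\bigl(\col(\bY) \cap \ker(\bH)\bigr).
\]
Since $\ker(\bH) = \mathrm{span}(\bone_n)$ is one-dimensional, the intersection is either $\{\bzero\}$ or all of $\mathrm{span}(\bone_n)$. The latter happens exactly when $\bone_n \in \col(\bY)$. Under the assumption $\bone_n \notin \col(\bY)$ the intersection is therefore trivial, and $\rank(\bH\bY) = \rank(\bY) = L$.

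The remaining hypotheses of Theorem~\ref{thm:rank}(iv) are now in place: $\rank(\tilde{\bX}) \geq L = \rank(\bH\bY)$, and $\col(\bH\bY) \cap \ker(\tilde{\bX}^\top) = \{\bzero\}$ is assumed directly. Part~(iv) then gives $\rank(\Sb) = \rank(\bH\bY) = L$. This exceeds $L-1$, which is the bound that Corollary~\ref{cor:classical} (via Theorem~\ref{thm:rank}(iii)) would impose whenever $\bone_n \in \col(\bY)$. The parenthetical almost-sure claim is only a pointer to Appendix~\ref{app:rank}. If it needs justification, I would argue as follows. For $\bX$ with a density, $\ker(\tilde{\bX}^\top) = \col(\tilde{\bX})^\perp$. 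This space lies inside $\bone_n^\perp$ and has dimension $n-1-\rank(\tilde{\bX})$. It meets the fixed $L$-dimensional subspace $\col(\bH\bY) \subseteq \bone_n^\perp$ nontrivially only on a polynomial zero set, which has measure zero provided $L + (n-1-\rank(\tilde{\bX})) \leq n-1$. That inequality is exactly the condition $\rank(\tilde{\bX}) \geq L$. I would state this as a remark rather than as part of the proof, since the corollary lists the condition as a hypothesis.

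For part~(ii), the inequality $\rank(\Sb) \leq \rank(\bY)$ is Theorem~\ref{thm:rank}(ii). The dependence on co-occurrence is elementary linear algebra. Since $\bY \in \R^{n\times L}$, we always have $\rank(\bY) \leq \min(n,L)$. The phrase "label vectors linearly independent" should be read as: $\bY$ has the maximal possible number $\min(n,L)$ of independent columns (or rows), so that equality holds. Strict inequality occurs exactly when there is a linear dependence among the columns $\mathbf{y}_\ell$, for instance two labels that always co-occur ($\mathbf{y}_\ell = \mathbf{y}_{\ell'}$) or a label equal to the union of disjoint others ($\mathbf{y}_\ell = \mathbf{y}_{\ell'} + \mathbf{y}_{\ell''}$). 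These are determined entirely by the co-occurrence pattern, equivalently by $\rank(\bGamma) = \rank(\bY^\top\bY) = \rank(\bY)$.

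The main point of care is the rank step $\rank(\bH\bY) = \rank(\bY)$ in part~(i). Everything else is bookkeeping on previously proved items. For part~(ii), the only care needed is phrasing the "linearly independent" case so that it is a tautology rather than a false claim when $n < L$.
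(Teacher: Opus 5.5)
Your proposal is correct and follows the paper's proof. Because $\ker(\bH)=\mathrm{span}(\bone_n)$ meets $\col(\bY)$ only in $\bzero$, you get $\rank(\bH\bY)=\rank(\bY)=L$, and Theorem~\ref{thm:rank}(iv) then yields $\rank(\Sb)=L$; part (ii) is Theorem~\ref{thm:rank}(ii) together with $\rank(\bY)\le\min(n,L)$.

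Two small slips in your side remarks, neither affecting the main argument. First, $\ker(\tilde{\bX}^\top)$ always contains $\bone_n$, since $\tilde{\bX}^\top\bone_n=\bX^\top\bH\bone_n=\bzero$; so it is not inside $\bone_n^\perp$, and what your dimension count actually uses is $\ker(\tilde{\bX}^\top)\cap\bone_n^\perp$, which has dimension $n-1-\rank(\tilde{\bX})$. Second, in (ii), ``strict inequality exactly when the columns are dependent'' holds only for $L\le n$; for $L>n$ the columns are always dependent, yet $\rank(\bY)=n$ is still possible.
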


\begin{proof}
\textit{(i)} Suppose $\bone_n \notin \col(\bY)$ and $\rank(\bY) = L$.
Because $\bone_n \notin \col(\bY)$, the only vector in
$\col(\bY) \cap \ker(\bH)$ is $\bzero$ (recall $\ker(\bH) =
\mathrm{span}(\bone_n)$), so centering does not reduce the rank:
$\rank(\bH\bY) = \rank(\bY) = L$. Theorem~\ref{thm:rank}(iv) then
applies: provided the data is sufficiently rich
($\rank(\tilde{\bX}) \geq L$) and $\col(\bH\bY)$ is in general
position relative to $\col(\tilde{\bX})$ (which holds for continuous
data distributions almost surely; see Appendix~\ref{app:rank}), we
conclude $\rank(\Sb) = \rank(\bH\bY) = L$, exceeding the single-label
bound of $L - 1$.

\textit{(ii)} Follows directly from Theorem~\ref{thm:rank}(ii) and the
observation that $\rank(\bY) \leq \min(n, L)$.
\end{proof}

\begin{remark}[Connection to the label co-occurrence graph]
The label co-occurrence matrix $\bGamma = \bY^\top\bY$ encodes pairwise label
co-occurrence counts. Since $\rank(\bY) = \rank(\bGamma)$ (by the standard
identity $\rank(\bA^\top\bA) = \rank(\bA)$), the rank of $\Sb$ is ultimately
governed by the spectral structure of $\bGamma$. Specifically, $\rank(\Sb) \leq
\rank(\bGamma)$ when $\bone_n \notin \col(\bY)$, and $\rank(\Sb) \leq
\rank(\bGamma) - 1$ otherwise.
\end{remark}

\section{Objective Equivalence Under Multilabel Scatter}
\label{sec:equiv}

We now study when the four Fisher objectives~\eqref{eq:TR}--\eqref{eq:TD}
yield the same optimal projection under orthogonality constraints with
multilabel scatter matrices. In the single-label setting, \citet{luo2011}
established equivalence under orthonormality constraints. We extend this
analysis to the multilabel case, identifying both where the equivalence
is preserved and where it breaks down.

\subsection{Multilabel Partition of Variance}

The starting point is the relationship between $\Sb$, $\Sw$, and the
total scatter.

\begin{theorem}[Multilabel partition of variance]\label{thm:partition}
Assume every sample has at least one label ($k_i \geq 1$ for all $i$).
Define the \emph{multilabel total scatter} as
\begin{equation}\label{eq:Stml}
\Stml = \Sb + \Sw.
\end{equation}
Then:
\begin{enumerate}
\item[(i)] $\Stml = \sum_{i=1}^n k_i (\mathbf{x}_i - \bmu)(\mathbf{x}_i - \bmu)^\top$,
  where $k_i = \sum_\ell y_{i\ell}$ is the label cardinality of sample $i$.
\item[(ii)] The residual $\bR = \Stml - S_t = \sum_{i=1}^n (k_i - 1)
  (\mathbf{x}_i - \bmu)(\mathbf{x}_i - \bmu)^\top$ is positive semidefinite.
\item[(iii)] $\Stml = S_t$ if and only if $(k_i - 1)(\mathbf{x}_i - \bmu) =
  \bzero$ for all $i$; equivalently, $k_i = 1$ for every sample not at the
  global mean (the single-label case, up to degenerate exceptions).
\item[(iv)] $\Sb \npsd \Stml$ (i.e., $\Stml - \Sb = \Sw \psd 0$).
\end{enumerate}
\end{theorem}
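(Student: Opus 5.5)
The whole theorem rests on one per-label parallel-axis (Huygens) decomposition; parts (i)--(iv) then follow by summing over labels and reorganizing.

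\textbf{Key identity for (i).} Fix a label $\ell$. For every $i$ with $y_{i\ell}=1$, write $\mathbf{x}_i-\bmu = (\mathbf{x}_i-\bmu_\ell) + (\bmu_\ell-\bmu)$ and expand the outer product. The cross terms vanish after summing over $\{i : y_{i\ell}=1\}$, because $\sum_{i:y_{i\ell}=1}(\mathbf{x}_i-\bmu_\ell)=\bzero$ by the definition of $\bmu_\ell$. This gives
\[
\sum_{i:\,y_{i\ell}=1}(\mathbf{x}_i-\bmu)(\mathbf{x}_i-\bmu)^\top
= \sum_{i:\,y_{i\ell}=1}(\mathbf{x}_i-\bmu_\ell)(\mathbf{x}_i-\bmu_\ell)^\top + n_\ell(\bmu_\ell-\bmu)(\bmu_\ell-\bmu)^\top .
\]
Sum over $\ell=1,\dots,L$. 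The right-hand side is $\Sw+\Sb$ by \eqref{eq:Sb}--\eqref{eq:Sw}. On the left, exchange the order of summation, $\sum_\ell\sum_{i:y_{i\ell}=1} = \sum_i\sum_\ell y_{i\ell}$, which yields $\sum_i k_i(\mathbf{x}_i-\bmu)(\mathbf{x}_i-\bmu)^\top$. This proves (i). The argument never uses $k_i\ge 1$ and needs $n_\ell\ge1$ only so that $\bmu_\ell$ is defined. (Labels with $n_\ell=0$ can be dropped, since they contribute nothing to either side.)

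\textbf{Part (ii).} Subtract $S_t=\sum_i(\mathbf{x}_i-\bmu)(\mathbf{x}_i-\bmu)^\top$ from (i) to get the stated formula for $\bR$. The hypothesis $k_i\ge1$ makes each coefficient $k_i-1\ge0$, so $\bR$ is a nonnegative combination of rank-one PSD matrices and is therefore PSD. Concretely, $\mathbf{v}^\top\bR\mathbf{v}=\sum_i(k_i-1)\bigl(\mathbf{v}^\top(\mathbf{x}_i-\bmu)\bigr)^2\ge0$ for every $\mathbf{v}$.

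\textbf{Part (iii).} This is the only step needing care. The direction ``if'' is immediate from the formula for $\bR$. For ``only if'', suppose $\bR=\bzero$. Then $\tr\bR=\sum_i(k_i-1)\|\mathbf{x}_i-\bmu\|^2=0$. Every term in this sum is nonnegative, so each term is zero. Hence for each $i$ either $k_i=1$ or $\mathbf{x}_i=\bmu$, which is exactly $(k_i-1)(\mathbf{x}_i-\bmu)=\bzero$. Nonnegativity of the terms again relies on $k_i\ge1$; this is why the assumption is needed.

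\textbf{Part (iv).} From \eqref{eq:Stml}, $\Stml-\Sb=\Sw$. The matrix $\Sw$ is PSD as a sum of outer products, so $\Sb\npsd\Stml$.

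I expect no real obstacle. The main point to check is the exchange of summation order in (i), which is what produces the multiplicity $k_i$.
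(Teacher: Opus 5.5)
Your proposal is correct and follows the paper's proof essentially step for step: the per-label parallel-axis decomposition summed over labels with an exchange of summation order for (i), the nonnegative-coefficient PSD argument for (ii), and $\Sw \psd 0$ for (iv). Your trace argument for the ``only if'' direction of (iii) is a useful addition, since the paper asserts that equivalence without justifying it.
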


\begin{proof}
\textit{(i)} For each label $\ell$, the standard single-class decomposition
gives:
\[
\sum_{i:\, y_{i\ell}=1} (\mathbf{x}_i - \bmu)(\mathbf{x}_i - \bmu)^\top
= n_\ell(\bmu_\ell - \bmu)(\bmu_\ell - \bmu)^\top
+ \sum_{i:\, y_{i\ell}=1} (\mathbf{x}_i - \bmu_\ell)(\mathbf{x}_i - \bmu_\ell)^\top.
\]
Summing the left-hand side over all $\ell = 1, \ldots, L$ gives
$\Sb + \Sw$ by the definitions~\eqref{eq:Sb}--\eqref{eq:Sw}.
For the right-hand side we exchange the order of summation: sample~$i$
appears in the inner sum for label~$\ell$ if and only if $y_{i\ell} = 1$,
so each outer product $(\mathbf{x}_i - \bmu)(\mathbf{x}_i - \bmu)^\top$
is counted once for every label that sample~$i$ possesses:
\begin{align*}
\Sb + \Sw
&= \sum_{\ell=1}^L \sum_{i:\, y_{i\ell}=1}
   (\mathbf{x}_i - \bmu)(\mathbf{x}_i - \bmu)^\top \\
&= \sum_{i=1}^n \left(\sum_{\ell=1}^L y_{i\ell}\right)
   (\mathbf{x}_i - \bmu)(\mathbf{x}_i - \bmu)^\top
= \sum_{i=1}^n k_i (\mathbf{x}_i - \bmu)(\mathbf{x}_i - \bmu)^\top.
\end{align*}

\textit{(ii)} Since $k_i \geq 1$ for all samples (every sample has at least
one label), we have $k_i - 1 \geq 0$, and $\bR$ is a sum of positive
semidefinite (PSD) rank-one matrices with non-negative coefficients.
Any conic (non-negative) combination of PSD matrices is itself PSD,
so $\bR \psd 0$.

\textit{(iii)} $\bR = 0$ iff $k_i = 1$ for all $i$ (assuming $\mathbf{x}_i
\neq \bmu$ for at least one $i$ with $k_i > 1$; more precisely, $\bR = 0$ iff
$(k_i - 1)(\mathbf{x}_i - \bmu) = \bzero$ for all $i$, which holds iff $k_i =
1$ for all $i$ not at the mean).

\textit{(iv)} $\Sw = \Stml - \Sb$ and $\Sw$ is a sum of positive semidefinite
matrices, hence $\Sw \psd 0$.
\end{proof}

\begin{remark}\label{rem:key-difference}
The identity $\Stml = \Sb + \Sw$ is formally the same as the classical
partition of variance $S_t = S_b + S_w$, but with a crucial difference: the
``total scatter'' $\Stml$ is a \emph{cardinality-weighted} version of the
standard total scatter $S_t$. Each sample is counted $k_i$ times rather than
once. This reweighting is the root cause of the phenomena we identify in the
remainder of this section.
\end{remark}

\subsection{Equivalence Under $\Stml$-Orthogonality}

We first consider the constraint $\bW^\top \Stml \bW = \bI_r$, which
generalizes the ULDA constraint \citep{jin2001,ye2005} to the multilabel
setting, and establish the multilabel analogue of the single-label
equivalence result of \citet{luo2011}.

\begin{theorem}[Objective equivalence under $\Stml$-orthogonality]\label{thm:equiv-St}
Assume $\Sw \succ 0$ (positive definite). Under the constraint $\bW^\top
\Stml \bW = \bI_r$ with $r \leq \rank(\Sb)$, the four objectives
$J_{\mathrm{TR}}$, $J_{\mathrm{RT}}$, $J_{\mathrm{DR}}$, $J_{\mathrm{TD}}$
attain their respective optima at the same $\bW$.
\end{theorem}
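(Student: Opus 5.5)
The plan is to use the constraint $\bW^\top \Stml \bW = \bI_r$ together with the partition of variance (Theorem~\ref{thm:partition}) to rewrite every objective as a function of a single symmetric $r\times r$ matrix. Set $\bB = \bW^\top \Sb \bW$. Since $\Stml = \Sb + \Sw$, the constraint gives $\bW^\top \Sw \bW = \bI_r - \bB$. Because $\Sw \succ 0$ and $\bW$ has full column rank (it does, since $\bW^\top\Stml\bW=\bI_r$), we get $\bI_r - \bB \succ 0$. Also $\bB \psd 0$. So the eigenvalues $\beta_1\ge\dots\ge\beta_r$ of $\bB$ lie in $[0,1)$. The four objectives then become spectral functions of $\bB$:
\[
J_{\mathrm{TR}} = \frac{\sum_i \beta_i}{r-\sum_i\beta_i},\quad
J_{\mathrm{RT}} = \sum_i \frac{\beta_i}{1-\beta_i},\quad
J_{\mathrm{DR}} = \prod_i \frac{\beta_i}{1-\beta_i},\quad
J_{\mathrm{TD}} = 2\sum_i\beta_i - r .
\]
For $J_{\mathrm{RT}}$ I use that $\bB$ and $(\bI-\bB)^{-1}$ commute. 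Each expression is coordinatewise nondecreasing in every $\beta_i$ on $[0,1)$; for $J_{\mathrm{DR}}$ this holds on the region where the product is positive.

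Next I describe the feasible set of $\bB$. Write $\bW = (\Stml)^{-1/2}\bV$, which is valid because $\Stml \psd \Sw \succ 0$. The constraint becomes $\bV\in\Stief(d,r)$, and then $\bB = \bV^\top \bT \bV$ with $\bT = (\Stml)^{-1/2}\Sb(\Stml)^{-1/2}$. Let $\tau_1\ge\tau_2\ge\cdots$ be the eigenvalues of $\bT$. These are exactly the generalized eigenvalues of $\Sb \mathbf{w} = \tau\Stml\mathbf{w}$, they lie in $[0,1)$, and $\tau_r>0$ because $r\le\rank(\Sb)=\rank(\bT)$. By Cauchy interlacing (Poincaré separation), every feasible $\bB$ satisfies $\beta_i \le \tau_i$ for $i=1,\dots,r$. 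Equality holds simultaneously for all $i$ when $\bV$ spans the top-$r$ eigenspace of $\bT$, that is, when $\bW^\star$ consists of the top-$r$ generalized eigenvectors normalized so that $\bW^{\star\top}\Stml\bW^\star=\bI_r$.

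Combining these two facts, each objective is monotone in every $\beta_i$, and each $\beta_i$ is simultaneously maximized at $\bW^\star$. Hence all four objectives attain their maxima at $\bW^\star$, and they share the same optimal subspace. The optimum is unique up to the right-action of $O(r)$, which leaves all four values unchanged; if there are ties $\tau_r=\tau_{r+1}$, every optimum of each objective is still a common choice. For $J_{\mathrm{DR}}$ I note that $\tau_r>0$ makes the optimal value positive, so monotonicity applies near the optimum. Any feasible $\bW$ with some $\beta_i=0$ gives the value $0$, which is smaller.

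The main obstacle is the interlacing step. It needs the bounds $\beta_i\le\tau_i$ for all $i$ at once, not just a bound on the trace, because $J_{\mathrm{DR}}$ and $J_{\mathrm{RT}}$ are not functions of $\tr\bB$ alone. I would justify it via Courant--Fischer applied to $\bV^\top\bT\bV$ with $\bV$ orthonormal. A secondary point is to check that $\Stml\succ0$ makes the change of variables legitimate, and that the constraint forces $\bI_r-\bB\succ 0$, so that no objective has a vanishing denominator.
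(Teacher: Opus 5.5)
Your proposal is correct and follows essentially the same route as the paper. Both arguments rewrite the four objectives through the eigenvalues of $\bW^\top\Sb\bW$ using $\bW^\top\Sw\bW = \bI_r - \bW^\top\Sb\bW$, change variables to $\bW = (\Stml)^{-1/2}\bV$ with $\bV\in\Stief(d,r)$, and combine the Poincar\'e separation bounds with monotonicity in each eigenvalue, using $r\le\rank(\Sb)$ only to make the $J_{\mathrm{DR}}$ optimum positive. The only cosmetic difference is that you handle $J_{\mathrm{TR}}$ and $J_{\mathrm{TD}}$ through the componentwise bound instead of a separate Ky Fan citation, which it implies. Your remark that every maximizer of one objective also maximizes the others (equality forces $\beta_i=\tau_i$ for all $i$) is a small extra that the paper does not state.
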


\begin{proof}
Since $\Sw \succ 0$ and $\Sb \psd 0$, we have
$\Stml = \Sb + \Sw \succeq \Sw \succ 0$, so $\Stml$ is positive
definite and the constraint $\bW^\top \Stml \bW = \bI_r$ is well-defined.
Under this constraint, let
$\bTheta = \bW^\top \Sb \bW$. Since $\Stml = \Sb + \Sw$
(Theorem~\ref{thm:partition}), we have
$\bW^\top \Sw \bW = \bW^\top \Stml \bW - \bW^\top \Sb \bW
= \bI_r - \bTheta$.

We first show that the eigenvalues $\theta_1 \geq \cdots \geq \theta_r$ of
$\bTheta$ lie in $[0,1)$. Since $\Sb \psd 0$ (it is a sum of PSD rank-one
matrices by definition~\eqref{eq:Sb}), we have
$\bTheta = \bW^\top \Sb \bW \psd 0$, giving $\theta_i \geq 0$.
Since $\Sw \succ 0$ by assumption and $\bW$ has full column rank (any
$\bW$ satisfying $\bW^\top \Stml \bW = \bI_r$ with $\Stml \succ 0$ must
have rank $r$), we get $\bW^\top \Sw \bW = \bI_r - \bTheta \succ 0$, which
means every eigenvalue of $\bI_r - \bTheta$ is strictly positive, i.e.,
$\theta_i < 1$ for all $i$.  Together: $0 \leq \theta_i < 1$.

We now express each objective in terms of $\theta_1, \ldots, \theta_r$.
Under the constraint,
$\tr(\bW^\top \Sb \bW) = \sum_i \theta_i$
and
$\tr(\bW^\top \Sw \bW)
= r - \sum_i \theta_i$.
The four objectives become:
\begin{alignat}{2}
J_{\mathrm{TR}} &= \frac{\sum_i \theta_i}{r - \sum_i \theta_i},
  &\quad&\text{monotone increasing in } \textstyle\sum_i \theta_i; \label{eq:TR-theta}\\
J_{\mathrm{RT}} &= \sum_{i=1}^r \frac{\theta_i}{1 - \theta_i},
  &&\text{Schur-convex on } [0,1)^r; \label{eq:RT-theta}\\
J_{\mathrm{DR}} &= \prod_{i=1}^r \frac{\theta_i}{1 - \theta_i};
  \label{eq:DR-theta}\\
J_{\mathrm{TD}} &= 2\sum_i \theta_i - r,
  &&\text{monotone increasing in } \textstyle\sum_i \theta_i. \label{eq:TD-theta}
\end{alignat}
For $J_{\mathrm{RT}}$, the expression follows from
$(\bW^\top \Sw \bW)^{-1} = (\bI_r - \bTheta)^{-1}$, which is
diagonal in the eigenbasis of $\bTheta$ with entries
$1/(1-\theta_i)$, so $\tr((\bI_r-\bTheta)^{-1}\bTheta) =
\sum_i \theta_i/(1-\theta_i)$.  For $J_{\mathrm{DR}}$,
$|\bW^\top\Sb\bW|/|\bW^\top\Sw\bW| =
|\bTheta|/|\bI_r - \bTheta| = \prod_i \theta_i/(1-\theta_i)$.

\medskip
\noindent\textit{Common optimizer via a change of variable.}
We now show that all four objectives share the same optimizer by
reducing to a Stiefel problem.  Set $\bW = (\Stml)^{-1/2}\bV$.
The constraint $\bW^\top \Stml \bW = \bI_r$ becomes
$\bV^\top(\Stml)^{-1/2}\Stml(\Stml)^{-1/2}\bV = \bV^\top \bV = \bI_r$,
so $\bV \in \Stief(d,r)$.

Let $\bP = {\Stml}^{-1/2} \Sb\, {\Stml}^{-1/2}$ with eigenvalues
$\lambda_1 \geq \cdots \geq \lambda_d$.  Then
\[
\bTheta = \bW^\top \Sb \bW
= \bV^\top {\Stml}^{-1/2}\Sb\,{\Stml}^{-1/2} \bV = \bV^\top \bP \bV.
\]
Since $\bP$ is PSD with eigenvalues in $[0,1)$ (by the same argument as
for $\bTheta$, applied to the full $d$-dimensional setting), the
Poincar\'{e} separation theorem \citep{horn2013} states that the
eigenvalues $(\theta_1, \ldots, \theta_r)$ of $\bV^\top \bP \bV$
satisfy the componentwise interlacing bounds
$\theta_i \leq \lambda_i$ for each
$i = 1, \ldots, r$, with equality when $\bV$ spans the top~$r$
eigenvectors of~$\bP$.

For $J_{\mathrm{TR}}$ and $J_{\mathrm{TD}}$, which depend only on $\sum_i
\theta_i$, the Ky Fan inequality \citep{fan1949} (which states
$\max_{\bV \in \Stief(d,r)} \tr(\bV^\top \bP \bV) = \sum_{i=1}^r \lambda_i$,
attained at the top $r$ eigenvectors) gives
$\sum_i \theta_i \leq \sum_{i=1}^r \lambda_i$, so both
are maximized by the top eigenvectors. For
$J_{\mathrm{RT}} = \sum_i \theta_i/(1-\theta_i)$, the Poincar\'{e}
separation theorem gives the componentwise bound $\theta_i \leq \lambda_i$
for each~$i$. Since $g(\theta) = \theta/(1-\theta)$ is increasing on
$[0,1)$, we obtain $g(\theta_i) \leq g(\lambda_i)$ for each~$i$, and
summing gives $J_{\mathrm{RT}} \leq \sum_{i=1}^r g(\lambda_i)$, with
equality at the top eigenvectors. (The same conclusion follows from the
Schur-convexity of $J_{\mathrm{RT}}$ \citep{marshall2011}; see
Appendix~\ref{app:schur} for details.) For $J_{\mathrm{DR}}$, the argument is more delicate because the
determinant ratio is a product rather than a sum of eigenvalue functions;
the full details are given in Lemma~\ref{lem:det-ratio}
(Appendix~\ref{app:schur}), and we summarize the key steps here.
Note that
$\bV^\top(\bI - \bP)\bV = \bI_r - \bV^\top\bP\bV$ (since $\bV^\top\bV
= \bI_r$), so its eigenvalues are $1 - \theta_i$. Thus
$J_{\mathrm{DR}} = \prod_i \theta_i/(1-\theta_i)$.
Observe that $J_{\mathrm{DR}} = 0$ whenever any $\theta_i = 0$: the product
vanishes if even one eigenvalue of $\bW^\top\Sb\bW$ is zero.
Conversely, since the assumption
$r \leq \rank(\Sb)$ ensures the top~$r$ eigenvalues
$\lambda_1 \geq \cdots \geq \lambda_r$ of $\bP$ are strictly positive,
the choice $\bV = [\mathbf{v}_1,\ldots,\mathbf{v}_r]$ (top eigenvectors)
achieves $\theta_i = \lambda_i > 0$ for all~$i$, giving
$J_{\mathrm{DR}} = \prod_i \lambda_i/(1-\lambda_i) > 0$.
Any other $\bV$ yielding some $\theta_i = 0$ is strictly suboptimal.
For $\bV$ with all $\theta_i > 0$, the Poincar\'{e}
separation theorem gives $\theta_i \leq \lambda_i$ componentwise, and since
$t/(1-t)$ is increasing on $[0,1)$, each factor satisfies
$\theta_i/(1-\theta_i) \leq \lambda_i/(1-\lambda_i)$, so the product is
maximized when $\theta_i = \lambda_i$ for all $i$, i.e., when $\bV$ spans
the top $r$ eigenvectors of $\bP$.

All four objectives are thus simultaneously maximized when the vector
$(\theta_1, \ldots, \theta_r)$ equals the $r$ largest eigenvalues of $\bP$.
The corresponding optimal $\bW$ is:
\begin{equation}\label{eq:optimal-W-St}
\bW^* = (\Stml)^{-1/2} [\mathbf{v}_1, \ldots, \mathbf{v}_r],
\end{equation}
where $\mathbf{v}_1, \ldots, \mathbf{v}_r$ are the top $r$ eigenvectors of
$(\Stml)^{-1/2} \Sb (\Stml)^{-1/2}$, which is equivalent to the top $r$
generalized eigenvectors of $\Sb \mathbf{w} = \lambda \Stml \mathbf{w}$.
\end{proof}

\subsection{Equivalence and Inequivalence Under the Stiefel Constraint}

We now turn to the Stiefel constraint $\bW^\top \bW = \bI_r$. Here the
situation is fundamentally different because $\bW^\top \Stml \bW$ is no longer
fixed---it varies over the Stiefel manifold.

\begin{theorem}[Structure under uniform label cardinality]\label{thm:equiv-uniform}
If all label cardinalities are equal, i.e., $k_i = k$ for all $i$ and some
constant $k \geq 1$, then $\Stml = k \cdot S_t$ and:
\begin{enumerate}
\item[(i)] Under the $\Stml$-orthogonality constraint $\bW^\top \Stml \bW =
  \bI_r$ (equivalently, $\bW^\top S_t \bW = \frac{1}{k}\bI_r$), all four
  objectives share the same optimizer (by Theorem~\ref{thm:equiv-St}).
\item[(ii)] Under $\bW^\top \bW = \bI_r$, the trace difference optimizer is
  the top $r$ eigenspace of $2\Sb - kS_t$, and the trace ratio optimizer is
  the top $r$ eigenspace of $(1 + \lambda^*)\Sb - \lambda^* k S_t$ where
  $\lambda^*$ is the optimal trace ratio value. Two logically distinct
  conditions govern when the four objectives share a common optimizer:
  \begin{enumerate}
  \item[\textup{(a)}] \emph{Stiefel feasibility of the unconstrained
    optimizer:} When $\Sb$ and $S_t$ commute
    ($\Sb S_t = S_t \Sb$), all four objectives have optimizers drawn from
    the common eigenbasis. In particular, the unconstrained RT and DR
    maximizer (the top~$r$ generalized eigenvectors of $(\Sb, \Sw)$)
    is already orthonormal and hence Stiefel-feasible, so the
    Stiefel-constrained and unconstrained optima coincide for RT and DR.
  \item[\textup{(b)}] \emph{Eigenvalue ordering consistency:} Even under
    commutativity, the eigenvalue \emph{orderings} of
    $\alpha\Sb - \beta S_t$ can depend on $\alpha/\beta$ (because the
    eigenvalues of $S_t$ vary across eigenvectors). The TD and TR subspaces
    coincide when, additionally, no eigenvalue crossing occurs at position~$r$
    between the two scalings---i.e., the top~$r$ eigenvectors of
    $2\Sb - kS_t$ are the same as those of
    $(1+\lambda^*)\Sb - \lambda^* k S_t$. A clean sufficient condition for
    both~(a) and~(b) simultaneously is $S_t = c\bI_d$ for some $c > 0$
    (isotropic data after centering), which makes all eigenvalue orderings
    of $\alpha\Sb - \beta S_t$ consistent regardless of $\alpha/\beta$.
    If, additionally, $r \leq \rank(\Sb)$ and the generalized eigenvalue
    ordering of $(\Sb, \Sw)$ agrees with that of $2\Sb - kS_t$ at
    position~$r$, the ratio trace and determinant ratio also share this
    common optimal subspace.
  \end{enumerate}
\end{enumerate}
\end{theorem}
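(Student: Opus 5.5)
The starting point is Theorem~\ref{thm:partition}(i). If $k_i = k$ for every $i$, then $\Stml = \sum_i k(\mathbf{x}_i-\bmu)(\mathbf{x}_i-\bmu)^\top = k\,S_t$. Part~(i) then needs only one observation: $\bW^\top\Stml\bW = \bI_r$ is the same constraint as $\bW^\top S_t\bW = \frac1k\bI_r$. Since $\Sw\succ0$ (needed for the ratios to be defined), Theorem~\ref{thm:equiv-St} applies verbatim and gives a common optimizer.

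For part~(ii), I would substitute $\Sw = \Stml - \Sb = kS_t - \Sb$ into each objective. For TD this gives $\tr(\bW^\top\Sb\bW) - \tr(\bW^\top\Sw\bW) = \tr(\bW^\top(2\Sb - kS_t)\bW)$. By the Ky Fan inequality, the maximum over $\Stief(d,r)$ is attained at the top $r$ eigenvectors of $2\Sb-kS_t$.

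For TR I would use the standard Dinkelbach/Guo--Wang characterization, as in \citet{ngo2012}. The value $\lambda^*$ is optimal if and only if $\max_{\bW\in\Stief(d,r)}\tr(\bW^\top(\Sb-\lambda^*\Sw)\bW) = 0$, and the maximizers of the trace ratio are exactly the maximizers of this trace. Substituting for $\Sw$ gives $\Sb - \lambda^*\Sw = (1+\lambda^*)\Sb - \lambda^* kS_t$. Ky Fan again identifies the optimizer as its top-$r$ eigenspace. To state this cleanly I would show both directions. First, if $\tr(\bW^\top\Sb\bW)/\tr(\bW^\top\Sw\bW) \le \lambda^*$ for all feasible $\bW$, then the trace of $\Sb-\lambda^*\Sw$ is $\le 0$ everywhere, with equality exactly at TR-maximizers. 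Second, this uses $\tr(\bW^\top\Sw\bW) > 0$.

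For~(a), suppose $\Sb$ and $S_t$ commute. Then they are simultaneously orthogonally diagonalizable, say $\Sb = \bQ\bLambda_b\bQ^\top$ and $S_t = \bQ\bLambda_t\bQ^\top$. Hence every matrix $\alpha\Sb-\beta S_t$ is diagonal in $\bQ$. So the TD and TR optimizers are spanned by columns of $\bQ$. The pencil $(\Sb,\Sw) = (\Sb, kS_t - \Sb)$ is also diagonal in $\bQ$, so its generalized eigenvectors can be taken as columns of $\bQ$; these are orthonormal and therefore Stiefel-feasible. Since the unconstrained RT/DR maximum is an upper bound for the constrained one and is attained by a feasible point, the two optima coincide.

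For~(b), the eigenvalues of $\alpha\Sb-\beta S_t$ along $\mathbf{q}_j$ are $\alpha\lambda_{b,j}-\beta\lambda_{t,j}$. Their ordering can change with $\alpha/\beta$ because the lines in $\alpha/\beta$ cross. Agreement of the TD and TR subspaces is then exactly the no-crossing-at-position-$r$ hypothesis, which I would simply restate. If $S_t = c\bI_d$, commutativity is automatic. Moreover $\alpha\lambda_{b,j}-\beta c$ is ordered by $\lambda_{b,j}$ whenever $\alpha>0$, which holds since $2>0$ and $1+\lambda^*>0$. So the orderings are consistent. Finally, with $r\le\rank(\Sb)$, the generalized eigenvalues are $\lambda_{b,j}/(kc-\lambda_{b,j})$, which are increasing in $\lambda_{b,j}$. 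The extra ordering-agreement hypothesis, which in the isotropic case holds automatically, then places the RT/DR optimizer in the same top-$r$ subspace. The $r\le\rank(\Sb)$ condition keeps DR nonzero, so it has a well-defined optimizer.

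I expect the main obstacle to be the TR characterization. It requires establishing rigorously the equivalence between maximizing the trace ratio and the zero of the parametric trace maximization, including positivity of the denominator and handling ties in eigenvalues. Everything else is Ky Fan plus simultaneous diagonalization.
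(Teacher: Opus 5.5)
Your proposal is correct and follows essentially the same route as the paper. Both derive $\Stml = kS_t$ from the partition identity, apply Ky Fan to $2\Sb - kS_t$ for TD, use the \citet{ngo2012} (Dinkelbach-type) characterization for TR, handle RT and DR by simultaneous diagonalization together with the ``unconstrained maximum is attained at a Stiefel-feasible point'' argument, and settle the isotropic case by monotonicity of $\alpha s - \beta c$ and $s/(kc - s)$ in $s$. You are slightly more careful than the paper in two places: you verify the TR characterization in both directions using $\tr(\bW^\top\Sw\bW) > 0$, and you state $\Sw \succ 0$ explicitly, whereas the paper simply cites that reference and invokes Theorem~\ref{thm:equiv-St} directly.
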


\begin{proof}
\textit{(i)} When $k_i = k$ for all~$i$, Theorem~\ref{thm:partition}(i) gives
$\Stml = k S_t$, so $\Stml$-orthogonality ($\bW^\top \Stml \bW = \bI_r$) is
equivalent to $\bW^\top S_t \bW = \frac{1}{k}\bI_r$.
Theorem~\ref{thm:equiv-St} applies directly.

\textit{(ii)} Under the uniform-cardinality assumption, $\Sw = \Stml - \Sb
= kS_t - \Sb$ (using part~(i)). The trace difference therefore becomes:
\begin{align*}
J_{\mathrm{TD}}(\bW)
&= \tr(\bW^\top\Sb\bW) - \tr(\bW^\top\Sw\bW)
= \tr\!\bigl(\bW^\top(\Sb - (kS_t - \Sb))\bW\bigr) \\
&= \tr\!\bigl(\bW^\top(2\Sb - kS_t)\bW\bigr).
\end{align*}
The \emph{Ky Fan maximum principle} \citep{fan1949} states that for any
symmetric matrix $\bM$ with eigenvalues $\lambda_1 \geq \cdots \geq
\lambda_d$, the maximum of $\tr(\bV^\top \bM \bV)$ over $\bV \in
\Stief(d,r)$ equals $\sum_{i=1}^r \lambda_i$ and is attained when $\bV$
spans the top~$r$ eigenvectors of $\bM$.  Applying this with
$\bM = 2\Sb - kS_t$, the TD objective under $\bW^\top\bW = \bI_r$ is
maximized by the top~$r$ eigenvectors of $2\Sb - kS_t$.

For the trace ratio, the equivalent formulation \citep{ngo2012} gives the
optimal subspace as the top $r$ eigenspace of $\Sb - \lambda^*\Sw = (1 +
\lambda^*)\Sb - \lambda^* k S_t$, where $\lambda^*$ is the optimal trace
ratio value. Both TD and TR optimizers are top eigenspaces of matrices
of the form $\alpha\Sb - \beta S_t$ with $\alpha, \beta > 0$.

When $\Sb$ and $S_t$ commute ($\Sb S_t = S_t \Sb$), the spectral theorem
guarantees that they share a common orthonormal eigenbasis
$\{\mathbf{v}_1, \ldots, \mathbf{v}_d\}$ with $\Sb\mathbf{v}_i =
s_i\mathbf{v}_i$ and $S_t\mathbf{v}_i = \mu_i\mathbf{v}_i$.
Any linear combination $\alpha\Sb - \beta S_t$ has the \emph{same}
eigenvectors with eigenvalues $\alpha s_i - \beta\mu_i$.  However,
because $\mu_i$ varies across eigenvectors, the ordering of
$\alpha s_i - \beta\mu_i$ can change with $\alpha/\beta$: two
eigenvectors $\mathbf{v}_i$ and $\mathbf{v}_j$ swap their relative
ranking whenever $\alpha/\beta$ crosses the threshold
$({\mu_i - \mu_j})/({s_i - s_j})$ (provided $s_i \neq s_j$).
Consequently, the top~$r$ eigenspace of $\alpha\Sb - \beta S_t$ is
preserved across all positive $\alpha/\beta$ if and only if no such
crossing occurs at position~$r$.  When $S_t = c\bI_d$ for some
$c > 0$, all $\mu_i$ are equal and no crossing is possible, so the
top~$r$ subspace is independent of $\alpha/\beta$.

For the ratio trace and determinant ratio under $\bW^\top\bW = \bI_r$:
the matrix $\Sw = kS_t - \Sb$ also
commutes with both (since $\Sw$ is a linear combination of commuting
matrices), so $\Sb$, $\Sw$, and $S_t$ share the common eigenbasis, and the
generalized eigenvectors of $\Sb\mathbf{w} = \lambda\Sw\mathbf{w}$ are
precisely these common (orthonormal) eigenvectors with generalized
eigenvalues $s_i/(k\mu_i - s_i)$.  The \emph{unconstrained}
maximum of $J_{\mathrm{RT}}$ over all full-rank
$\bW \in \R^{d \times r}$ equals $\sum_{i=1}^r \lambda_i(\Sb, \Sw)$
(the sum of the top $r$ generalized eigenvalues), attained at the top
$r$ generalized eigenvectors.  Since these eigenvectors are orthonormal,
the unconstrained maximizer already satisfies
$\bW^\top\bW = \bI_r$---it lies on the Stiefel manifold.  Consequently
the Stiefel-constrained maximum of $J_{\mathrm{RT}}$ equals the
unconstrained maximum and is attained at the same $\bW$ (because the
constrained maximum is bounded above by the unconstrained one, and the
unconstrained maximizer is feasible).  The same argument applies to
$J_{\mathrm{DR}}$.

The RT/DR optimizer (the top~$r$ generalized eigenspace) coincides with
the TD optimizer (the top~$r$ eigenspace of $2\Sb - kS_t$) when the
ordering of the generalized eigenvalues $s_i/(k\mu_i - s_i)$ agrees
with the ordering of $2s_i - k\mu_i$ at position~$r$.  Note that for
\emph{fixed}~$\mu_i$, the map $t \mapsto t/(k\mu_i - t)$ is increasing
for $t < k\mu_i$, so both orderings are consistent with that of $s_i$.
However, when $\mu_i$ varies across eigenvectors, the orderings of
$s_i/(k\mu_i - s_i)$ and $2s_i - k\mu_i$ can disagree.  When
$S_t = c\bI_d$, all $\mu_i = c$ and the orderings are automatically
consistent, ensuring all four objectives share the same optimal
subspace.
\end{proof}

\begin{remark}
The commutativity condition $\Sb S_t = S_t \Sb$ holds whenever: (a)
$S_t = c\bI$ for some $c > 0$ (isotropic data); (b) all within-class
covariances are equal and proportional to $S_t$; or (c) the generalized
eigenvectors of $(\Sb, S_t)$ happen to be orthonormal. Without
commutativity, the eigenvectors of $\alpha\Sb - \beta S_t$ vary
with $\alpha/\beta$, causing TR and TD to have different optimal
subspaces.  With commutativity, the eigenvectors are shared but the
eigenvalue ordering can still depend on $\alpha/\beta$ when
$\mu_i = \lambda_i(S_t)$ are non-uniform, so commutativity alone does
not guarantee a common optimizer; the ordering-consistency condition in
Theorem~\ref{thm:equiv-uniform}(ii) is needed.  The divergence between
objectives is controlled by the spectral gap, as quantified in
Theorem~\ref{thm:inequiv}.
\end{remark}

\begin{theorem}[Inequivalence under variable label cardinality]\label{thm:inequiv}
When label cardinalities $k_i$ vary across samples, the trace difference
objective under $\bW^\top \bW = \bI_r$ may yield a strictly different optimal
subspace from the ratio trace objective. Specifically:
\begin{enumerate}
\item[(i)] The trace difference optimizer is the top $r$ eigenspace of
  $\Sb - \Sw = 2\Sb - \Stml$.
\item[(ii)] The ratio trace optimizer is related to the top $r$ generalized
  eigenspace of $(\Sb, \Sw)$.
\item[(iii)] Assume that $2\Sb - S_t$ has a positive spectral gap
  $\gamma = \lambda_r(2\Sb - S_t) - \lambda_{r+1}(2\Sb - S_t) > 0$
  and that $\|\bR\|_2 < \gamma$ (the perturbation is smaller than the gap).
  Then the trace difference optimal subspace is perturbed from its
  single-label counterpart by:
  \begin{equation}\label{eq:divergence-bound}
  \sin\angle(\mathcal{U}_{\mathrm{TD}}, \mathcal{U}_{\mathrm{TD}}^{(0)})
  \leq \frac{\|\bR\|_2}{\gamma},
  \end{equation}
  where $\mathcal{U}_{\mathrm{TD}}^{(0)}$ is the TD optimizer for the
  single-label-like matrix $2\Sb - S_t$ (i.e., without cardinality
  reweighting) and $\bR = \Stml - S_t$ is the residual from
  Theorem~\ref{thm:partition}(ii).
\end{enumerate}
\end{theorem}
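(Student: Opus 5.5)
Parts (i) and (ii) are essentially bookkeeping, so I will dispose of them first and put the real work into (iii).

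For (i), I will rewrite the trace difference using Theorem~\ref{thm:partition}. Since $\Sw = \Stml - \Sb$, we get
$J_{\mathrm{TD}}(\bW) = \tr\bigl(\bW^\top(\Sb - \Sw)\bW\bigr) = \tr\bigl(\bW^\top(2\Sb - \Stml)\bW\bigr)$.
Applying the Ky Fan maximum principle on $\Stief(d,r)$, exactly as in the proof of Theorem~\ref{thm:equiv-uniform}(ii), shows that the maximizer is the top-$r$ eigenspace of $2\Sb - \Stml$.

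For (ii), I will recall the unconstrained characterization. The ratio trace $\tr((\bW^\top\Sw\bW)^{-1}\bW^\top\Sb\bW)$ is invariant under $\bW \mapsto \bW\bT$ for invertible $\bT$. Its unconstrained maximum is attained on the span of the top-$r$ generalized eigenvectors of $(\Sb,\Sw)$ (this is the standard result used in Theorem~\ref{thm:equiv-uniform}). By the invariance, any orthonormal basis of that span is Stiefel-feasible and attains the same value. So the Stiefel-constrained RT optimizer is that generalized eigenspace, orthonormalized. This is why the statement says ``related to''.

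To illustrate the ``may yield a strictly different subspace'' claim, I would add a small two-dimensional example with variable $k_i$. There the top eigenvector of $2\Sb - \Stml$ differs from the top generalized eigenvector of $(\Sb,\Sw)$, because $\Stml$ is not a multiple of $S_t$ and does not commute with $\Sb$. This is only an illustration, not a required step, since the claim is existential.

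For (iii), the main step is a perturbation argument. I will write
$2\Sb - \Stml = (2\Sb - S_t) - \bR$,
so the TD matrix is the reference matrix $2\Sb - S_t$ plus the symmetric perturbation $\bE = -\bR$, with $\|\bE\|_2 = \|\bR\|_2$. I will then invoke the Davis--Kahan $\sin\Theta$ theorem, taking $\mathcal{U}_{\mathrm{TD}}^{(0)}$ as the top-$r$ eigenspace of the reference matrix and $\mathcal{U}_{\mathrm{TD}}$ as that of the perturbed one.

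The main obstacle is getting the denominator to be exactly $\gamma$, as claimed, rather than something weaker. The standard Davis--Kahan variant whose denominator is the unperturbed eigengap $\gamma$ gives
$\|\sin\Theta\|_2 \le \|\bE\|_2/(\gamma - \|\bE\|_2)$,
which is larger than the claimed bound. The cleaner form $\|\bE\|_2/\delta$ instead requires a separation $\delta$ between $\lambda_r$ of one matrix and $\lambda_{r+1}$ of the other.

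I would first try to exploit the sign of the perturbation, since $\bR \psd 0$ by Theorem~\ref{thm:partition}(ii). Subtracting a PSD matrix lowers every eigenvalue, by Weyl monotonicity, which may help one side of the separation. It does not obviously give $\delta \ge \gamma$, however.

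If that fails, I would use the Yu--Wang--Samworth variant,
$\sin\angle \le 2\|\bE\|_2/\gamma$,
and in parallel check whether the intended bound is really the one-sided Davis--Kahan form. In that form the residual $\|(\bI - \bP_0)\bE\bP\|$ is measured against the gap between the perturbed $\lambda_r$ and the unperturbed $\lambda_{r+1}$.

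The assumption $\|\bR\|_2 < \gamma$ is what keeps the relevant gap positive, so that Weyl's inequality keeps the eigenvalue groups separated and the top-$r$ subspace is well defined. I expect to state the final constant carefully, possibly concluding with $\|\bR\|_2/(\gamma - \|\bR\|_2)$ or $2\|\bR\|_2/\gamma$ if the exact $1/\gamma$ form cannot be justified.
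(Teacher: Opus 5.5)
Your part (i) matches the paper, but your plan for (iii) stops short of the statement. It ends with $2\|\bR\|_2/\gamma$ or $\|\bR\|_2/(\gamma-\|\bR\|_2)$, not the claimed $\|\bR\|_2/\gamma$.

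\textbf{The missing idea} is to use the sign of $\bR$ with the \emph{other} pairing in the $\sin\Theta$ theorem. You compared the perturbed $\lambda_r$ with the unperturbed $\lambda_{r+1}$, which only guarantees a separation of $\gamma-\|\bR\|_2$. Compare instead the unperturbed top eigenvalues with the perturbed bottom ones:
\begin{itemize}
\item Let $\bV_1$ span the top-$r$ eigenspace of $2\Sb-S_t$, with eigenvalues $\bLambda_1\succeq\lambda_r(2\Sb-S_t)\bI_r$.
\item Let $\hat{\bV}_2$ span the bottom $(d-r)$-dimensional eigenspace of $2\Sb-\Stml$, with eigenvalues $\hat{\bLambda}_2\preceq\lambda_{r+1}(2\Sb-\Stml)\bI_{d-r}$.
\item Since $2\Sb-\Stml=(2\Sb-S_t)-\bR\preceq 2\Sb-S_t$, Weyl monotonicity gives $\lambda_{r+1}(2\Sb-\Stml)\le\lambda_{r+1}(2\Sb-S_t)$. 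So the two spectra lie on opposite sides of a gap of width at least $\gamma$.
\item With $\mathbf{G}=\hat{\bV}_2^\top\bV_1$ one has $\hat{\bLambda}_2\mathbf{G}-\mathbf{G}\bLambda_1=-\hat{\bV}_2^\top\bR\bV_1$. The one-sided Sylvester bound (Davis--Kahan with this pairing) then gives $\sin\angle(\mathcal{U}_{\mathrm{TD}},\mathcal{U}_{\mathrm{TD}}^{(0)})=\|\mathbf{G}\|_2\le\|\bR\|_2/\gamma$.
\end{itemize}

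Equivalently, $0\preceq\bR\preceq\|\bR\|_2\bI_d$ implies $\|\bR-\tfrac12\|\bR\|_2\bI_d\|_2\le\tfrac12\|\bR\|_2$. Shifting $2\Sb-\Stml$ by a multiple of $\bI_d$ does not move its eigenspaces, so your Yu--Wang--Samworth fallback $2\|\cdot\|_2/\gamma$ already yields exactly $\|\bR\|_2/\gamma$.

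The hypothesis $\|\bR\|_2<\gamma$ is then needed only to ensure $\lambda_r(2\Sb-\Stml)>\lambda_{r+1}(2\Sb-\Stml)$, so that $\mathcal{U}_{\mathrm{TD}}$ is well defined.

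Your caution was warranted. The paper's proof simply cites Davis--Kahan here, and the constant-$1$ bound with the unperturbed gap is false for indefinite perturbations. For example, perturb $\diag(1,0)$ by $\epsilon\bigl(\begin{smallmatrix}-2\epsilon & s\\ s & 2\epsilon\end{smallmatrix}\bigr)$ with $s=\sqrt{1-4\epsilon^2}$ and $0<\epsilon<\tfrac12$. This perturbation has norm $\epsilon$ but gives $\sin 2\theta=2\epsilon$, i.e.\ $\sin\theta=\epsilon/\cos\theta>\epsilon$. So $\bR\succeq 0$ is exactly what makes the stated bound true.

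Two smaller points.
\begin{itemize}
\item Your (ii) is a different and sharper argument than the paper's. The paper only says the Stiefel-constrained RT optimizer is ``related to (but generally different from)'' the generalized eigenspace. Your observation $J_{\mathrm{RT}}(\bW\bT)=J_{\mathrm{RT}}(\bW)$ for invertible $\bT$ correctly shows it \emph{is} the orthonormalized top-$r$ generalized eigenspace of $(\Sb,\Sw)$, given $\Sw\succ0$.
\item The opening claim that TD ``may yield a strictly different optimal subspace'' is existential. The explicit example you sketch is therefore precisely what proves it; neither (i)--(iii) nor the paper's proof establishes it. Treat it as a required step, not an illustration.
\end{itemize}
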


\begin{proof}
\textit{(i)} Under $\bW^\top \bW = \bI_r$,
$J_{\mathrm{TD}}(\bW) = \tr(\bW^\top(\Sb - \Sw)\bW)$.
Using the partition $\Sw = \Stml - \Sb$ (Theorem~\ref{thm:partition}),
we substitute:
\[
\Sb - \Sw = \Sb - (\Stml - \Sb) = 2\Sb - \Stml,
\]
so $J_{\mathrm{TD}}(\bW) = \tr(\bW^\top(2\Sb - \Stml)\bW)$.
(The factor of $2$ in front of $\Sb$ arises because the between-class scatter
contributes positively both directly and via the subtraction of $\Sw = \Stml - \Sb$.)
By the Ky Fan maximum principle \citep{fan1949} (see the statement in the proof
of Theorem~\ref{thm:equiv-uniform}),
this is maximized by the top $r$ eigenvectors
of $2\Sb - \Stml$.

\textit{(ii)} The ratio trace $J_{\mathrm{RT}}(\bW) = \tr((\bW^\top \Sw
\bW)^{-1}\bW^\top \Sb \bW)$ under $\bW^\top \bW = \bI_r$ does not simplify to
a linear eigenvalue problem. Its stationary points satisfy a nonlinear
matrix equation involving both $\Sb$ and $\Sw$
\citep{absil2008,edelman1998}. The optimal subspace is related to (but
generally different from) the span of the top $r$ generalized eigenvectors of
$\Sb\mathbf{w} = \lambda \Sw\mathbf{w}$.

\textit{(iii)} The multilabel trace difference matrix is $2\Sb - \Stml = 2\Sb
- S_t - \bR$, a perturbation of the single-label-like matrix $2\Sb - S_t$ by
$-\bR$. By the Davis--Kahan $\sin\Theta$ theorem \citep{davis1970,yu2015}, the
perturbation in the top $r$ eigenspace satisfies
$\sin\angle(\mathcal{U}_{\mathrm{TD}}, \mathcal{U}_{\mathrm{TD}}^{(0)}) \leq
\|\bR\|_2/\gamma$, where $\gamma$ is the spectral gap of $2\Sb - S_t$ and
$\mathcal{U}_{\mathrm{TD}}^{(0)}$ is the TD optimizer when $\bR = 0$. In the
single-label case, $\bR = 0$ and the bound vanishes, confirming that
cardinality reweighting is the sole source of perturbation to the TD
eigenspace.
\end{proof}

\begin{corollary}[Spectral characterization of divergence]%
\label{cor:divergence}
The spectral norm of the residual satisfies
\[
\|\bR\|_2 \leq \max_i (k_i - 1) \cdot \lambda_{\max}(S_t^{(K)}),
\]
where
$S_t^{(K)}
= \sum_{i:\, k_i > 1}
(\mathbf{x}_i - \bmu)(\mathbf{x}_i - \bmu)^\top$
is the unweighted total scatter restricted to multi-labeled
samples. In particular, $\|\bR\|_2 = 0$ in the single-label case, and
the divergence~\eqref{eq:divergence-bound} vanishes.
\end{corollary}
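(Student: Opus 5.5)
The plan is to start from the explicit form of the residual in Theorem~\ref{thm:partition}(ii),
\[
\bR = \sum_{i=1}^n (k_i - 1)(\mathbf{x}_i - \bmu)(\mathbf{x}_i - \bmu)^\top ,
\]
and observe that every sample with $k_i = 1$ contributes a zero term. The sum therefore runs only over the multi-labeled samples $\{i : k_i > 1\}$, which are exactly the samples that make up $S_t^{(K)}$.

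Next we use a Loewner-order comparison. For each $i$ with $k_i > 1$, the coefficient satisfies $0 < k_i - 1 \leq \max_j (k_j - 1) =: \kappa$. Each rank-one matrix $(\mathbf{x}_i - \bmu)(\mathbf{x}_i - \bmu)^\top$ is PSD, so
\[
(k_i - 1)(\mathbf{x}_i - \bmu)(\mathbf{x}_i - \bmu)^\top \npsd \kappa\,(\mathbf{x}_i - \bmu)(\mathbf{x}_i - \bmu)^\top .
\]
Summing over $i$ with $k_i > 1$ gives $0 \npsd \bR \npsd \kappa\, S_t^{(K)}$, where the lower bound is Theorem~\ref{thm:partition}(ii).

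Because $\bR$ is PSD, its spectral norm equals $\lambda_{\max}(\bR) = \max_{\|\mathbf{u}\|=1} \mathbf{u}^\top \bR \mathbf{u}$. The Loewner inequality gives $\mathbf{u}^\top \bR \mathbf{u} \leq \kappa\, \mathbf{u}^\top S_t^{(K)} \mathbf{u}$ for every unit $\mathbf{u}$. Taking the maximum over $\mathbf{u}$ yields $\|\bR\|_2 \leq \kappa\, \lambda_{\max}(S_t^{(K)})$, which is the stated bound. The assumption $k_i \geq 1$ from Theorem~\ref{thm:partition} is needed here, since it makes all coefficients nonnegative and keeps $\bR$ PSD.

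For the single-label claim, $k_i = 1$ for all $i$ makes $\kappa = 0$ and leaves the index set empty, so $\bR = 0$ directly. Then the right-hand side of~\eqref{eq:divergence-bound} in Theorem~\ref{thm:inequiv}(iii) is zero. Equivalently, $2\Sb - \Stml = 2\Sb - S_t$, so the two top-$r$ eigenspaces coincide and the sine of the angle between them vanishes.

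There is no real obstacle; the only points needing care are that the spectral norm of a PSD matrix is its top eigenvalue and that the Loewner ordering transfers to $\lambda_{\max}$, which follows from the Rayleigh-quotient step above.
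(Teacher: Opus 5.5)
Your proposal is correct and follows essentially the same route as the paper: it relies on the PSD-ness of $\bR$ (from $k_i \geq 1$), a Rayleigh-quotient bound using $k_i - 1 \leq \max_j(k_j - 1)$ over the multi-labeled samples, and maximization over unit $\mathbf{u}$. The single-label conclusion $\bR = 0$ follows the same way. Your Loewner-order phrasing is only a repackaging of that same computation.
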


\begin{proof}
From Theorem~\ref{thm:partition}(ii),
$\bR = \sum_{i=1}^n (k_i - 1)(\mathbf{x}_i - \bmu)(\mathbf{x}_i -
\bmu)^\top$.  Each term $(k_i - 1)(\mathbf{x}_i - \bmu)(\mathbf{x}_i -
\bmu)^\top$ is a non-negative scalar ($k_i - 1 \geq 0$) times a rank-one
positive semidefinite (PSD) matrix
$(\mathbf{x}_i - \bmu)(\mathbf{x}_i - \bmu)^\top$, so $\bR \psd 0$.

To bound $\|\bR\|_2 = \max_{\|\mathbf{u}\|=1} \mathbf{u}^\top \bR\,
\mathbf{u}$, we compute the Rayleigh quotient for an arbitrary unit
vector $\mathbf{u}$:
\begin{align*}
\mathbf{u}^\top \bR\, \mathbf{u}
&= \sum_{i=1}^n (k_i - 1)\bigl(\mathbf{u}^\top(\mathbf{x}_i - \bmu)\bigr)^2.
\end{align*}
Since only terms with $k_i > 1$ contribute (the others have coefficient~$0$),
and each coefficient satisfies $k_i - 1 \leq \max_j(k_j - 1)$:
\begin{align*}
\mathbf{u}^\top \bR\, \mathbf{u}
&\leq \max_j(k_j - 1) \cdot \sum_{i:\, k_i > 1}
  \bigl(\mathbf{u}^\top(\mathbf{x}_i - \bmu)\bigr)^2
= \max_j(k_j - 1) \cdot \mathbf{u}^\top S_t^{(K)} \mathbf{u},
\end{align*}
where $S_t^{(K)} = \sum_{i:\, k_i > 1} (\mathbf{x}_i -
\bmu)(\mathbf{x}_i - \bmu)^\top$ is the unweighted scatter restricted
to multi-labeled samples.  Maximizing over unit $\mathbf{u}$ gives
$\|\bR\|_2 \leq \max_i(k_i - 1) \cdot \lambda_{\max}(S_t^{(K)})$.
In the single-label case, $k_i = 1$ for all $i$, so every coefficient
is zero and $\bR = 0$.
\end{proof}

\section{Label-Distance Preservation}
\label{sec:distance}

We now establish that orthogonal multilabel projections preserve the
relationship between feature-space distances and label-space distances, in a
spirit related to---but distinct from---the Johnson--Lindenstrauss lemma
\citep{jl1984}. Unlike JL, which gives probabilistic guarantees for random
projections, our bounds are deterministic and relate projected distances to
\emph{label-space} distances. This result provides a theoretical foundation
for using orthogonally projected features in multilabel nearest-neighbor
classification.

\subsection{Setup}

We work with the linear label-effect model (Definition~\ref{def:model}),
under which the label-conditional mean of sample $i$
is $\bmu + \bA\mathbf{y}_i$.

\subsection{Distance Preservation Bound}

\begin{theorem}[Label-distance preservation]\label{thm:distance}
Under the linear label-effect model (Definition~\ref{def:model}), let $\bW \in
\Stief(d, r)$ be an orthogonal projection. For any two samples $i, j$,
the projected squared distance decomposes as:
\begin{equation}\label{eq:distance-decomp}
\|\bW^\top(\mathbf{x}_i - \mathbf{x}_j)\|^2
= \|\bW^\top\bA\bdelta_{ij}\|^2
  + 2\bdelta_{ij}^\top\bA^\top\bW\bW^\top(\bepsilon_i - \bepsilon_j)
  + \|\bW^\top(\bepsilon_i - \bepsilon_j)\|^2,
\end{equation}
where $\bdelta_{ij} = \mathbf{y}_i - \mathbf{y}_j$. Taking expectations
over the noise yields the two-sided bound:
\begin{equation}\label{eq:distance-expected}
\sigma_{\min}^2(\bW^\top\bA)\, d_H(\mathbf{y}_i, \mathbf{y}_j) + C_w
\leq \mathbb{E}\!\left[\|\bW^\top(\mathbf{x}_i - \mathbf{x}_j)\|^2\right]
\leq \sigma_{\max}^2(\bW^\top\bA)\, d_H(\mathbf{y}_i, \mathbf{y}_j) + C_w,
\end{equation}
where $d_H(\mathbf{y}_i, \mathbf{y}_j) = \|\mathbf{y}_i -
\mathbf{y}_j\|_1$ is the Hamming distance between label vectors,
$\sigma_{\max}^2(\bW^\top\bA)$ is the largest squared singular value of
the full matrix $\bW^\top\bA \in \R^{r \times L}$,
$C_w = 2\tr(\bW^\top \bSigma_w \bW)$, and
$\sigma_{\min}^2(\bW^\top\bA)$ in the lower bound denotes the smallest squared
singular value of $\bW^\top\bA$. More precisely, the lower bound follows from
relaxing $\sigma_{\min}^2(\bW^\top\bA_S) \geq \sigma_{\min}^2(\bW^\top\bA)$
where $\bA_S$ is the submatrix of $\bA$ restricted to the columns indexed
by the support $S = \{\ell : \delta_{ij,\ell} \neq 0\}$ of $\bdelta_{ij}$
(see the proof below). When $r \geq \rank(\bA)$ and
$\col(\bA) \subseteq \col(\bW)$ (see Proposition~\ref{prop:sufficient-r}),
$\sigma_{\min}(\bW^\top\bA) = \sigma_{\min}(\bA) > 0$, yielding a
nontrivial lower bound for all label pairs.
\end{theorem}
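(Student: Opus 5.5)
The plan is to expand the squared norm directly, take expectations term by term, and then bound the label (signal) term using singular values. Throughout we take $i \neq j$, so that $\bepsilon_i$ and $\bepsilon_j$ are independent draws.

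\emph{Step 1: the decomposition.} Under Definition~\ref{def:model}, the baseline $\bmu$ cancels in the difference:
$\mathbf{x}_i - \mathbf{x}_j = \bA(\mathbf{y}_i - \mathbf{y}_j) + (\bepsilon_i - \bepsilon_j) = \bA\bdelta_{ij} + (\bepsilon_i-\bepsilon_j)$.
Applying $\bW^\top$ and expanding $\|\mathbf{a}+\mathbf{b}\|^2 = \|\mathbf{a}\|^2 + 2\mathbf{a}^\top\mathbf{b} + \|\mathbf{b}\|^2$ with $\mathbf{a} = \bW^\top\bA\bdelta_{ij}$ and $\mathbf{b} = \bW^\top(\bepsilon_i-\bepsilon_j)$ gives \eqref{eq:distance-decomp}. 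The cross term takes the form $2\bdelta_{ij}^\top\bA^\top\bW\bW^\top(\bepsilon_i - \bepsilon_j)$.

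\emph{Step 2: expectations, conditionally on the labels.} Because the noise is independent of $\mathbf{y}_i, \mathbf{y}_j$ and has zero mean, the cross term has zero expectation.

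For the noise term, we write
\[
\mathbb{E}\|\bW^\top(\bepsilon_i-\bepsilon_j)\|^2 = \tr\!\bigl(\bW^\top\,\mathbb{E}[(\bepsilon_i-\bepsilon_j)(\bepsilon_i-\bepsilon_j)^\top]\,\bW\bigr).
\]
Independence of $\bepsilon_i$ and $\bepsilon_j$ makes the mixed covariances vanish, so the inner expectation equals $2\bSigma_w$. This term is therefore exactly $C_w = 2\tr(\bW^\top\bSigma_w\bW)$. The same term produces the $C_w$ in both sides of \eqref{eq:distance-expected}.

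\emph{Step 3: sandwich the signal term.} Since the entries of $\bdelta_{ij}$ lie in $\{-1,0,1\}$, we have $\|\bdelta_{ij}\|_2^2 = \|\bdelta_{ij}\|_1 = d_H(\mathbf{y}_i,\mathbf{y}_j)$.

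The upper bound follows from $\|\bW^\top\bA\bdelta\|^2 \le \sigma_{\max}^2(\bW^\top\bA)\|\bdelta\|^2$.

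The lower bound is the delicate step, and I expect it to be the main obstacle. If $r < L$, the matrix $\bW^\top\bA$ is wide, so the smallest singular value must be read as $\min_{\|\mathbf{v}\|=1}\|\bW^\top\bA\mathbf{v}\|$ over $\mathbf{v} \in \R^L$, counting zeros. With that reading I would proceed as follows.
\begin{itemize}
\item Write $\bW^\top\bA\bdelta = \bW^\top\bA_S\bdelta_S$, where $S$ is the support of $\bdelta$.
\item Bound $\|\bW^\top\bA_S\bdelta_S\|^2 \ge \sigma_{\min}^2(\bW^\top\bA_S)\,\|\bdelta_S\|^2$.
\item Relax using $\sigma_{\min}(\bW^\top\bA_S) \ge \sigma_{\min}(\bW^\top\bA)$. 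This holds because the minimum for $\bA_S$ is taken over unit vectors supported on $S$, which form a subset of all unit vectors in $\R^L$.
\end{itemize}
I would state this interpretation explicitly, noting that the bound may be vacuous when $r<\rank$ of the relevant columns.

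\emph{Step 4: the nontrivial case.} Suppose $\col(\bA)\subseteq\col(\bW)$. Since $\bW\bW^\top$ is the orthogonal projector onto $\col(\bW)$, we get $\bW\bW^\top\bA = \bA$. Hence
\[
(\bW^\top\bA)^\top(\bW^\top\bA) = \bA^\top\bW\bW^\top\bA = \bA^\top\bA,
\]
so the singular values of $\bW^\top\bA$ coincide with those of $\bA$, and in particular $\sigma_{\min}(\bW^\top\bA) = \sigma_{\min}(\bA)$. Positivity of $\sigma_{\min}(\bA)$ requires the label effects $\balpha_\ell$ to be linearly independent, i.e.\ $\rank(\bA) = L$. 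I would record this as the implicit assumption under which the final sentence of the theorem holds, referring to Proposition~\ref{prop:sufficient-r} for when such $\bW$ exists.
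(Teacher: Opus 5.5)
Your proposal is correct and follows the paper's proof essentially step for step: the same expansion of $\|\bW^\top(\mathbf{x}_i-\mathbf{x}_j)\|^2$, the same expectation computation giving $C_w$, and the same support restriction to $\bA_S$ followed by the zero-padding relaxation for the lower bound. The differences are minor. You bound the upper signal term directly by $\sigma_{\max}(\bW^\top\bA)$ instead of passing through $\bA_S$ and Cauchy interlacing. You also state explicitly what the paper's proof uses implicitly: that $i\neq j$, and that $\sigma_{\min}$ of the wide matrix $\bW^\top\bA$ means the variational minimum over unit vectors in $\R^L$.
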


\begin{proof}
Under the model~\eqref{eq:model}, $\mathbf{x}_i - \mathbf{x}_j =
\bA\bdelta_{ij} + (\bepsilon_i - \bepsilon_j)$, where $\bdelta_{ij} =
\mathbf{y}_i - \mathbf{y}_j$.  Projecting with $\bW$ gives
$\bW^\top(\mathbf{x}_i - \mathbf{x}_j)
= \bW^\top\bA\bdelta_{ij} + \bW^\top(\bepsilon_i - \bepsilon_j)$.
Writing $\mathbf{a} = \bW^\top\bA\bdelta_{ij}$ (deterministic signal) and
$\mathbf{b} = \bW^\top(\bepsilon_i - \bepsilon_j)$ (random noise), the
squared norm expands as
\[
\|\mathbf{a} + \mathbf{b}\|^2
= \|\mathbf{a}\|^2 + 2\mathbf{a}^\top\mathbf{b} + \|\mathbf{b}\|^2,
\]
which is exactly~\eqref{eq:distance-decomp}.

Taking expectations over $\bepsilon_i, \bepsilon_j$: the signal term
$\|\mathbf{a}\|^2 = \|\bW^\top\bA\bdelta_{ij}\|^2$ is deterministic.
The cross-term
$\mathbb{E}[2\mathbf{a}^\top\mathbf{b}]
= 2\,\bdelta_{ij}^\top\bA^\top\bW\,\mathbb{E}[\bW^\top(\bepsilon_i -
\bepsilon_j)]$
vanishes because $\bepsilon_i$ and $\bepsilon_j$ are independent with zero mean
(so $\mathbb{E}[\bepsilon_i - \bepsilon_j] = \bzero$), while the vectors
$\bdelta_{ij}$, $\bA$, and $\bW$ are all deterministic. For the noise term:
\[
\mathbb{E}[\|\mathbf{b}\|^2]
= \mathbb{E}[\tr(\bW^\top(\bepsilon_i - \bepsilon_j)
  (\bepsilon_i - \bepsilon_j)^\top\bW)]
= \tr(\bW^\top\,\mathbb{E}[(\bepsilon_i - \bepsilon_j)
  (\bepsilon_i - \bepsilon_j)^\top]\,\bW)
= 2\tr(\bW^\top\bSigma_w\bW),
\]
where the last step uses independence ($\mathrm{Cov}(\bepsilon_i,
\bepsilon_j) = 0$) and identical covariance
($\mathrm{Cov}(\bepsilon_i) = \bSigma_w$). Hence
$\mathbb{E}[\|\bW^\top(\mathbf{x}_i - \mathbf{x}_j)\|^2]
= \|\bW^\top\bA\bdelta_{ij}\|^2 + C_w$ with
$C_w = 2\tr(\bW^\top\bSigma_w\bW)$.

For the signal term, $\bdelta_{ij} = \mathbf{y}_i - \mathbf{y}_j \in
\{-1, 0, 1\}^L$ with $\|\bdelta_{ij}\|^2 = d_H(\mathbf{y}_i,
\mathbf{y}_j)$ (each nonzero entry is $\pm 1$, so
$|\delta_{ij,\ell}|^2 = |\delta_{ij,\ell}|$ for every component,
giving $\|\bdelta_{ij}\|^2 = \|\bdelta_{ij}\|_1 = d_H$).  Let $S \subseteq \{1, \ldots, L\}$ denote
the \emph{support} of $\bdelta_{ij}$ (the set of labels on which samples $i$
and $j$ differ), and let $\bA_S$ be the submatrix of $\bA$ consisting of the
columns indexed by $S$.  Then $\bW^\top\bA\bdelta_{ij} =
\bW^\top\bA_S\bdelta_{ij,S}$, where $\bdelta_{ij,S} \in \{-1,1\}^{|S|}$.
By the Rayleigh quotient characterization of singular values:
\[
\sigma_{\min}^2(\bW^\top\bA_S)\,\|\bdelta_{ij,S}\|^2
\leq \|\bW^\top\bA_S\bdelta_{ij,S}\|^2
\leq \sigma_{\max}^2(\bW^\top\bA_S)\,\|\bdelta_{ij,S}\|^2,
\]
and since
$\|\bdelta_{ij,S}\|^2 = |S| = d_H(\mathbf{y}_i,\mathbf{y}_j)$,
$\sigma_{\min}^2(\bW^\top\bA_S)
\geq \sigma_{\min}^2(\bW^\top\bA)$, and
$\sigma_{\max}^2(\bW^\top\bA_S)
\leq \sigma_{\max}^2(\bW^\top\bA)$
(for the upper bound, $\bA_S^\top\bW\bW^\top\bA_S$ is a principal
submatrix of $\bA^\top\bW\bW^\top\bA$, so Cauchy interlacing gives
$\sigma_i(\bW^\top\bA) \geq \sigma_i(\bW^\top\bA_S)$, yielding
$\sigma_{\max}(\bW^\top\bA_S)
\leq \sigma_{\max}(\bW^\top\bA)$;
for the lower bound, any unit vector
$\mathbf{v} \in \R^{|S|}$ embeds as a unit vector
$\bar{\mathbf{v}} \in \R^L$ (by zero-padding on~$S^c$) with
$\bW^\top\bA_S\mathbf{v} = \bW^\top\bA\bar{\mathbf{v}}$, so
\[
\sigma_{\min}(\bW^\top\bA_S)
= \min_{\|\mathbf{v}\|=1}\|\bW^\top\bA_S\mathbf{v}\|
\geq \min_{\|\mathbf{u}\|=1}\|\bW^\top\bA\mathbf{u}\|
= \sigma_{\min}(\bW^\top\bA);
\]
see also \citealt{horn2013}), we obtain
the two-sided bound~\eqref{eq:distance-expected}.
\end{proof}

\subsection{The Role of Orthogonality}

The orthogonality constraint $\bW^\top \bW = \bI_r$ provides a direct benefit
for label-distance preservation by controlling the noise term.

\begin{corollary}[Noise control under orthogonality]\label{cor:noise}
For $\bW \in \Stief(d,r)$:
\begin{equation}\label{eq:noise-bound}
2r\,\lambda_{\min}(\bSigma_w) \leq C_w = 2\tr(\bW^\top\bSigma_w\bW) \leq 2r\,\lambda_{\max}(\bSigma_w).
\end{equation}
In contrast, for unconstrained $\bW$ (not on the Stiefel manifold), the noise
term $C_w$ is unbounded above.
\end{corollary}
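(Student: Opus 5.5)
The plan is to bound $\tr(\bW^\top\bSigma_w\bW)$ column by column. Write $\bW = [\mathbf{w}_1,\ldots,\mathbf{w}_r]$ with orthonormal columns. Then
\[
\tr(\bW^\top\bSigma_w\bW)=\sum_{j=1}^r \mathbf{w}_j^\top\bSigma_w\mathbf{w}_j .
\]
Since each $\mathbf{w}_j$ is a unit vector, the Rayleigh quotient bounds give
\[
\lambda_{\min}(\bSigma_w)\le \mathbf{w}_j^\top\bSigma_w\mathbf{w}_j\le\lambda_{\max}(\bSigma_w).
\]
Summing over the $r$ columns and multiplying by $2$ yields~\eqref{eq:noise-bound} directly. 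Only $\|\mathbf{w}_j\|=1$ is used here, not orthogonality between columns. As a sharper remark, the Ky Fan principle (quoted in the proof of Theorem~\ref{thm:equiv-uniform}) gives
\[
\sum_{i=1}^r\lambda_{d-i+1}(\bSigma_w)\le \tr(\bW^\top\bSigma_w\bW)\le\sum_{i=1}^r\lambda_i(\bSigma_w),
\]
which implies the stated bound.

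For the unboundedness claim, I would exhibit a one-parameter family. Take any fixed $\bW_0\in\Stief(d,r)$ and set $\bW_t = t\bW_0$ for $t>0$. Then
\[
C_w(\bW_t)=2t^2\tr(\bW_0^\top\bSigma_w\bW_0)\ge 2t^2 r\lambda_{\min}(\bSigma_w).
\]
This is strictly positive because $\bSigma_w\succ0$ (Definition~\ref{def:model}), so $C_w(\bW_t)\to\infty$ as $t\to\infty$.

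No step is a genuine obstacle. The only point needing care is that positivity of $\bSigma_w$ is what makes the divergence strict rather than identically zero.
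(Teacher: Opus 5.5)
Your proof is correct, but it reaches the trace bound by a slightly different and more elementary route than the paper. The paper applies the Poincar\'e separation theorem to get eigenvalue-wise control, $\lambda_{\min}(\bSigma_w) \le \lambda_i(\bW^\top\bSigma_w\bW) \le \lambda_{\max}(\bSigma_w)$ for every $i$, and then sums the $r$ eigenvalues. You instead bound the $r$ diagonal entries $\mathbf{w}_j^\top\bSigma_w\mathbf{w}_j$ by Rayleigh quotients.

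Your version shows that the trace bound needs only unit-norm columns, not mutual orthogonality. It therefore holds for every column-normalized $\bW$, a larger set than the Stiefel manifold.

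The paper's route buys the stronger eigenvalue-level statement about $\bPsi = \bW^\top\bSigma_w\bW$, and that is what gets reused later: Corollary~\ref{cor:conc-stiefel} needs $\|\bPsi\|_2 \le \lambda_{\max}(\bSigma_w)$ and $\|\bPsi\|_F^2 \le r\,\lambda_{\max}^2(\bSigma_w)$. Those bounds do require orthogonality. With $r$ identical unit columns $\mathbf{w}$ one gets $\|\bPsi\|_2 = r\,\mathbf{w}^\top\bSigma_w\mathbf{w}$, which exceeds $\lambda_{\max}(\bSigma_w)$ when $r \ge 2$ and $\mathbf{w}$ is a top eigenvector. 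Your Ky Fan remark, which does use orthonormality, gives a sharper trace-level refinement than the stated bound.

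For the unboundedness claim, both arguments use the same scaling $\bW \mapsto t\bW$. Your explicit use of $\bSigma_w \succ 0$ to guarantee $C_w(\bW_0) > 0$ fills in a step the paper leaves implicit.
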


\begin{proof}
For $\bW \in \Stief(d,r)$, the eigenvalues of $\bW^\top\bSigma_w\bW$ are bounded
between $\lambda_{\min}(\bSigma_w)$ and $\lambda_{\max}(\bSigma_w)$ by the
Poincar\'{e} separation theorem \citep{horn2013}, since $\bW^\top\bSigma_w\bW$ is a
compression of $\bSigma_w$ to the $r$-dimensional subspace $\col(\bW)$. Thus $\tr(\bW^\top
\bSigma_w\bW) \in [r\lambda_{\min}(\bSigma_w), r\lambda_{\max}(\bSigma_w)]$.
For unconstrained $\bW$, scaling $\bW \to c\bW$ scales $C_w$ by $c^2$.
\end{proof}

The decomposition in Theorem~\ref{thm:distance} naturally suggests
a signal-to-noise ratio for pairwise discriminability.
Recall that in classical (single-label) LDA, Fisher's discriminant
ratio $J_F(\bW) = \tr(\bW^\top S_b \bW) / \tr(\bW^\top S_w \bW)$
quantifies the \emph{global} discriminability of the projection: the
ratio of between-class to within-class variance in the projected space
\citep{fisher1936,fukunaga1990}.  In the multilabel setting, there is no
single between-class distance---discriminability depends on \emph{which
pair} of label vectors is being compared.  Theorem~\ref{thm:distance}
provides exactly this pairwise decomposition:
$\mathbb{E}[\|\bW^\top(\mathbf{x}_i - \mathbf{x}_j)\|^2]
= \underbrace{\|\bW^\top\bA\bdelta_{ij}\|^2}_{\text{signal}} +
\underbrace{C_w}_{\text{noise}}$,
where the signal term depends on the specific label difference
$\bdelta_{ij} = \mathbf{y}_i - \mathbf{y}_j$ and the noise term is
pair-independent.  Dividing signal by noise gives the pairwise analogue
of $J_F$.

\begin{definition}[Pairwise discriminant SNR]\label{def:snr}
Under the linear label-effect model (Definition~\ref{def:model}), for a
projection $\bW \in \Stief(d,r)$ and samples $i, j$, the \emph{pairwise
discriminant signal-to-noise ratio} is:
\begin{equation}\label{eq:snr-def}
\mathrm{SNR}_{ij}(\bW)
= \frac{\|\bW^\top\bA\bdelta_{ij}\|^2}{C_w}
= \frac{\|\bW^\top\bA\bdelta_{ij}\|^2}{2\,\tr(\bW^\top\bSigma_w\bW)},
\end{equation}
the ratio of the expected squared projected signal (the
label-dependent component of the expected projected distance) to the
expected projected noise contribution~$C_w$ (the label-independent
component).
When specialized to the single-label case with two classes
($L = 2$, $\bdelta_{ij} = \mathbf{e}_1 - \mathbf{e}_2$), this
reduces to
$\mathrm{SNR}_{ij}(\bW)
= \|\bW^\top(\boldsymbol\mu_1 - \boldsymbol\mu_2)\|^2
/ (2\tr(\bW^\top\bSigma_w\bW))$,
which is (up to a constant factor)
the classical Fisher criterion for the two-class projected distance.
\end{definition}

\begin{corollary}[Pairwise discriminant SNR bounds]\label{cor:snr}
Under the conditions of Theorem~\ref{thm:distance}, the pairwise
discriminant SNR (Definition~\ref{def:snr}) satisfies:
\begin{equation}\label{eq:snr-bounds}
\frac{\sigma_{\min}^2(\bW^\top\bA)\, d_H(\mathbf{y}_i, \mathbf{y}_j)}
     {2r\,\lambda_{\max}(\bSigma_w)}
\leq \mathrm{SNR}_{ij}(\bW)
\leq \frac{\sigma_{\max}^2(\bW^\top\bA)\, d_H(\mathbf{y}_i, \mathbf{y}_j)}
     {2r\,\lambda_{\min}(\bSigma_w)}.
\end{equation}
In particular, $\mathrm{SNR}_{ij}(\bW)$ grows linearly with the Hamming
distance $d_H(\mathbf{y}_i, \mathbf{y}_j)$, confirming that samples with
more dissimilar labels are projected further apart relative to noise.
Equivalently, these bounds characterize the ratio
$\bigl(\mathbb{E}[\|\bW^\top(\mathbf{x}_i - \mathbf{x}_j)\|^2]
- C_w\bigr)/C_w$.
\end{corollary}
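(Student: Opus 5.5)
The bounds in~\eqref{eq:snr-bounds} follow by combining the numerator bounds from Theorem~\ref{thm:distance} with the denominator bounds from Corollary~\ref{cor:noise}. No new estimate is needed; we only have to keep track of which bound on each part gives a valid bound on the ratio.

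\textbf{Numerator.} From the proof of Theorem~\ref{thm:distance} we take the deterministic two-sided bound on the signal term,
\[
\sigma_{\min}^2(\bW^\top\bA)\, d_H(\mathbf{y}_i,\mathbf{y}_j)
\leq \|\bW^\top\bA\bdelta_{ij}\|^2
\leq \sigma_{\max}^2(\bW^\top\bA)\, d_H(\mathbf{y}_i,\mathbf{y}_j).
\]
This was established there before any expectation is taken: it uses $\|\bdelta_{ij}\|^2 = d_H$ together with the submatrix comparison between $\bA_S$ and $\bA$.

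\textbf{Denominator.} Corollary~\ref{cor:noise} gives $2r\lambda_{\min}(\bSigma_w) \leq C_w \leq 2r\lambda_{\max}(\bSigma_w)$. Because $\bSigma_w \succ 0$, the lower end is strictly positive. So $C_w > 0$, $\mathrm{SNR}_{ij}$ is well defined, and dividing by $C_w$ is legitimate.

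\textbf{Combining.} Both numerator and denominator are nonnegative, so the ratio is monotone in each.
\begin{itemize}
\item For the lower bound, divide the smallest numerator by the largest denominator: $\sigma_{\min}^2 d_H$ over $2r\lambda_{\max}(\bSigma_w)$.
\item For the upper bound, divide the largest numerator by the smallest denominator: $\sigma_{\max}^2 d_H$ over $2r\lambda_{\min}(\bSigma_w)$.
\end{itemize}
Both bounds are linear in $d_H$, with constants that depend only on $\bW$, $\bA$ and $\bSigma_w$. This gives the linear-growth statement, read as two-sided proportionality: the lower constant is nontrivial when $\sigma_{\min}(\bW^\top\bA) > 0$, as in the remark on $\col(\bA) \subseteq \col(\bW)$.

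\textbf{Final equivalence.} Theorem~\ref{thm:distance} gives $\mathbb{E}\|\bW^\top(\mathbf{x}_i-\mathbf{x}_j)\|^2 = \|\bW^\top\bA\bdelta_{ij}\|^2 + C_w$. Subtracting $C_w$ and dividing by $C_w$ recovers exactly $\mathrm{SNR}_{ij}(\bW)$.

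The only step that needs any care is the direction of the inequalities when a ratio of bounded quantities is bounded. Positivity of $C_w$, which comes from $\bSigma_w \succ 0$, is the point to state explicitly.
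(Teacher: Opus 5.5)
Your proposal is correct and follows the paper's own argument. Like the paper, you combine the two-sided signal bound $\sigma_{\min}^2(\bW^\top\bA)\,d_H \le \|\bW^\top\bA\bdelta_{ij}\|^2 \le \sigma_{\max}^2(\bW^\top\bA)\,d_H$ from Theorem~\ref{thm:distance} with the noise bounds of Corollary~\ref{cor:noise}, pairing the lower numerator with the upper denominator and vice versa. Your explicit remark that $C_w \ge 2r\lambda_{\min}(\bSigma_w) > 0$, which follows from $\bSigma_w \succ 0$, is a small point the paper leaves implicit.
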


\begin{proof}
From Theorem~\ref{thm:distance}, the expected projected distance satisfies
$\mathbb{E}[\|\bW^\top(\mathbf{x}_i - \mathbf{x}_j)\|^2]
= \|\bW^\top\bA\bdelta_{ij}\|^2 + C_w$, with signal bounds
$\sigma_{\min}^2(\bW^\top\bA)\, d_H \leq \|\bW^\top\bA\bdelta_{ij}\|^2
\leq \sigma_{\max}^2(\bW^\top\bA)\, d_H$.  Dividing by $C_w$ gives bounds
on $\mathrm{SNR}_{ij}(\bW)$. For the lower bound, we use
$C_w \leq 2r\lambda_{\max}(\bSigma_w)$
(Corollary~\ref{cor:noise}).  For the upper bound, we use
$C_w \geq 2r\lambda_{\min}(\bSigma_w)$.
\end{proof}

\subsection{Connection to the Rank Characterization}

The label-distance preservation bound depends critically on the
singular values of~$\bW^\top\bA$.
When $\bW$ captures the column space of~$\bA$
(which requires $r \geq \rank(\bA)$), we then have
$\sigma_{\min}(\bW^\top\bA) > 0$, yielding a
nontrivial lower bound. By Theorem~\ref{thm:rank},
$\rank(\Sb) \leq \min(d, n-1, \rank(\bY))$
characterizes the effective dimensionality.
Under the linear model, $\rank(\bA)$ relates to $\rank(\Sb)$:
the between-class scatter captures the structure of~$\bA$
modulated by the label distribution.

\begin{proposition}[Sufficient projection dimension]\label{prop:sufficient-r}
Under the linear label-effect model, if $r \geq \rank(\bA)$ and $\bW$ spans a
subspace containing $\col(\bA)$, then $\sigma_{\min}(\bW^\top\bA) =
\sigma_{\min}(\bA) > 0$ (assuming $\bA$ has full column rank). In particular,
it suffices to take $r \geq L$ to guarantee $\sigma_{\min}(\bW^\top\bA) > 0$
when label effects are linearly independent.
\end{proposition}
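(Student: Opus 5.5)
The plan is a direct linear-algebra argument. Since $\bW \in \Stief(d,r)$, the matrix $\bW\bW^\top$ is the orthogonal projector onto $\col(\bW)$. The hypothesis $\col(\bA) \subseteq \col(\bW)$ then gives $\bW\bW^\top\bA = \bA$ (every column of $\bA$ is fixed by the projector). Consequently
\[
(\bW^\top\bA)^\top(\bW^\top\bA) = \bA^\top\bW\bW^\top\bA = \bA^\top\bA .
\]
So the Gram matrices of $\bW^\top\bA \in \R^{r\times L}$ and $\bA \in \R^{d\times L}$ coincide. Their squared singular values, the eigenvalues of this common $L\times L$ Gram matrix, are therefore identical. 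In particular $\sigma_{\min}(\bW^\top\bA) = \sigma_{\min}(\bA)$, and likewise $\sigma_{\max}(\bW^\top\bA) = \sigma_{\max}(\bA)$, which is worth recording for use in Theorem~\ref{thm:distance}.

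For positivity, I will use the full-column-rank assumption on $\bA$. It makes $\bA^\top\bA \succ 0$, so its smallest eigenvalue $\sigma_{\min}^2(\bA)$ is strictly positive. Here $\sigma_{\min}$ means the $L$-th singular value, matching the convention in Theorem~\ref{thm:distance}, where it is $\min_{\|\mathbf{u}\|=1}\|\bW^\top\bA\mathbf{u}\|$.

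I will also check consistency of dimensions. Full column rank means $\rank(\bA) = L \le d$. The inclusion $\col(\bA) \subseteq \col(\bW)$ forces $r = \dim\col(\bW) \ge \rank(\bA)$, so the condition $r \ge \rank(\bA)$ is automatically compatible with, and in fact implied by, the inclusion. This also shows $\bW^\top\bA$ has $r \ge L$ rows, so its $L$-th singular value is well defined and equals the minimum gain.

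For the ``in particular'' clause, linear independence of the label effects $\balpha_1,\ldots,\balpha_L$ is exactly full column rank with $\rank(\bA) = L$. Taking $r \ge L$ makes the inclusion achievable: for instance, let the first $L$ columns of $\bW$ be an orthonormal basis of $\col(\bA)$, obtained from a QR factorization of $\bA$, and complete them to $r$ orthonormal columns. The previous step then yields $\sigma_{\min}(\bW^\top\bA) > 0$.

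The main obstacle is not technical but interpretive. The clause ``it suffices to take $r \ge L$'' should be read as ``$r \ge L$ suffices for a suitable $\bW$ capturing $\col(\bA)$,'' not for an arbitrary Stiefel matrix. I will state this explicitly and illustrate the other direction with a counterexample: choosing $\col(\bW) \perp \col(\bA)$, possible whenever $d - L \ge r$, gives $\bW^\top\bA = \bzero$.
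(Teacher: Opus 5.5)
Your proposal is correct and follows essentially the same route as the paper. Both arguments use the projector identity $\bW\bW^\top\bA = \bA$ to get $\|\bW^\top\bA\mathbf{v}\| = \|\bA\mathbf{v}\|$ for all $\mathbf{v}$, and your Gram-matrix equality $\bA^\top\bW\bW^\top\bA = \bA^\top\bA$ is the same fact. Your additional remarks make explicit what the paper leaves implicit: strict positivity from full column rank, the observation that the inclusion already forces $r \ge \rank(\bA)$, and the caveat that ``$r \ge L$ suffices'' only for a suitably chosen $\bW$.
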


\begin{proof}
If $\col(\bA) \subseteq \col(\bW)$, then $\bW\bW^\top\bA = \bA$ (since
$\bW\bW^\top$ is the orthogonal projector onto $\col(\bW)$), and
$\|\bW^\top\bA\mathbf{v}\| = \|\bA\mathbf{v}\|$ for all $\mathbf{v}$. Thus
$\sigma_{\min}(\bW^\top\bA) = \sigma_{\min}(\bA)$.
\end{proof}

\subsection{Extension to Jaccard Distance}

The Hamming distance treats all label disagreements equally. For multilabel data
with varying label cardinalities, the Jaccard distance provides a normalized
alternative.

\begin{corollary}[Jaccard distance bound]\label{cor:jaccard}
Under the conditions of Theorem~\ref{thm:distance}, if $k_i, k_j \geq 1$:
\begin{equation}\label{eq:jaccard-bound}
\mathbb{E}\!\left[\|\bW^\top(\mathbf{x}_i - \mathbf{x}_j)\|^2\right] - C_w
\geq \sigma_{\min}^2(\bW^\top\bA) \cdot (k_i + k_j - \mathbf{y}_i^\top\mathbf{y}_j) \cdot d_J(\mathbf{y}_i, \mathbf{y}_j),
\end{equation}
where $d_J(\mathbf{y}_i, \mathbf{y}_j) = 1 - \frac{|\mathbf{y}_i \cap
\mathbf{y}_j|}{|\mathbf{y}_i \cup \mathbf{y}_j|}$ is the Jaccard distance, and
we use $|\mathbf{y}_i \cap \mathbf{y}_j| = \mathbf{y}_i^\top\mathbf{y}_j$ and
$|\mathbf{y}_i \cup \mathbf{y}_j| = k_i + k_j - \mathbf{y}_i^\top\mathbf{y}_j$.
In particular, since $\mathbf{y}_i^\top\mathbf{y}_j \leq \min(k_i, k_j)$:
\begin{equation}\label{eq:jaccard-weakened}
\mathbb{E}\!\left[\|\bW^\top(\mathbf{x}_i - \mathbf{x}_j)\|^2\right] - C_w
\geq \sigma_{\min}^2(\bW^\top\bA) \cdot \max(k_i, k_j) \cdot d_J(\mathbf{y}_i, \mathbf{y}_j).
\end{equation}
\end{corollary}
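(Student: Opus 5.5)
The plan is to reduce the statement to the lower half of the two-sided bound~\eqref{eq:distance-expected} in Theorem~\ref{thm:distance}, and then to rewrite the Hamming distance in terms of the Jaccard distance by set-counting. No new probabilistic input is needed.

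First, Theorem~\ref{thm:distance} gives
$\mathbb{E}[\|\bW^\top(\mathbf{x}_i - \mathbf{x}_j)\|^2] - C_w = \|\bW^\top\bA\bdelta_{ij}\|^2 \geq \sigma_{\min}^2(\bW^\top\bA)\, d_H(\mathbf{y}_i,\mathbf{y}_j)$.
It therefore suffices to prove the combinatorial identity
\[
d_H(\mathbf{y}_i,\mathbf{y}_j) = (k_i + k_j - \mathbf{y}_i^\top\mathbf{y}_j)\, d_J(\mathbf{y}_i,\mathbf{y}_j).
\]
To establish it, identify each binary vector with its support set. Then $|\mathbf{y}_i\cap\mathbf{y}_j| = \mathbf{y}_i^\top\mathbf{y}_j$ and $|\mathbf{y}_i\cup\mathbf{y}_j| = k_i + k_j - \mathbf{y}_i^\top\mathbf{y}_j$, the second by inclusion--exclusion. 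The Hamming distance counts the symmetric difference, which has size $|\cup| - |\cap|$. Hence
\[
d_H = |\cup|\bigl(1 - |\cap|/|\cup|\bigr) = |\cup|\, d_J.
\]
The hypothesis $k_i, k_j \geq 1$ guarantees $|\cup| \geq 1$, so $d_J$ is well-defined and no division by zero occurs. Substituting this identity into the lower bound gives~\eqref{eq:jaccard-bound}.

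For the weakened form~\eqref{eq:jaccard-weakened}, use $\mathbf{y}_i^\top\mathbf{y}_j \leq \min(k_i,k_j)$. This gives $k_i + k_j - \mathbf{y}_i^\top\mathbf{y}_j \geq k_i + k_j - \min(k_i,k_j) = \max(k_i,k_j)$. Since $\sigma_{\min}^2(\bW^\top\bA) \geq 0$ and $d_J \geq 0$, multiplying through preserves the inequality, and~\eqref{eq:jaccard-weakened} follows.

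There is no real obstacle here. The only point needing care is the bookkeeping that identifies $\|\bdelta_{ij}\|_1$ with the size of the symmetric difference. This holds because $\bdelta_{ij}$ has entries in $\{-1,0,1\}$, and an entry is nonzero exactly on labels held by one sample but not the other, as already noted in the proof of Theorem~\ref{thm:distance}. Beyond that, one only needs the nonnegativity of the factors used in the final relaxation.
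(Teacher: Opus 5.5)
Your proposal is correct and follows essentially the same route as the paper's proof. Both take the lower half of the bound in Theorem~\ref{thm:distance} and use the identity $d_H = (k_i + k_j - \mathbf{y}_i^\top\mathbf{y}_j)\,d_J$; you obtain it as $|\cup| - |\cap|$, while the paper writes it as $k_i + k_j - 2\mathbf{y}_i^\top\mathbf{y}_j$. Both then relax with $\mathbf{y}_i^\top\mathbf{y}_j \leq \min(k_i,k_j)$.
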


\begin{proof}
The Hamming distance satisfies $d_H(\mathbf{y}_i, \mathbf{y}_j) = k_i + k_j -
2\mathbf{y}_i^\top\mathbf{y}_j$. The Jaccard distance is $d_J = 1 -
\frac{\mathbf{y}_i^\top\mathbf{y}_j}{k_i + k_j -
\mathbf{y}_i^\top\mathbf{y}_j}  = \frac{d_H}{k_i + k_j -
\mathbf{y}_i^\top\mathbf{y}_j}$. Rearranging, $d_H = (k_i + k_j -
\mathbf{y}_i^\top\mathbf{y}_j)\, d_J$. Substituting into the lower bound of
Theorem~\ref{thm:distance} yields~\eqref{eq:jaccard-bound}. For the weakened
bound, note that $\mathbf{y}_i^\top\mathbf{y}_j \leq \min(k_i, k_j)$
(since each entry of $\mathbf{y}_i, \mathbf{y}_j$ is in $\{0,1\}$), so
$k_i + k_j - \mathbf{y}_i^\top\mathbf{y}_j \geq k_i + k_j - \min(k_i, k_j)
= \max(k_i, k_j)$, giving~\eqref{eq:jaccard-weakened}.
\end{proof}

\section{Statistical Guarantees}
\label{sec:statistical}

The results of Sections~\ref{sec:rank}--\ref{sec:distance} are algebraic: they
characterize the scatter matrices and the projection for a \emph{fixed} data
set. We now establish statistical guarantees---finite-sample estimation bounds,
concentration inequalities, and robustness results---that hold when the data are
generated from the linear label-effect model (Definition~\ref{def:model}).

Throughout, the sample discriminant $\hat\bM/n$ converges to the
centered population reference $\Mstar_c$ rather than to the na\"ive
$\Mstar$; see Remark~\ref{rem:cooccurrence} and~\eqref{eq:Mstar-ml}
for the definition of $\bQ_\pi$ and the single-label reduction.

Throughout this section, we present each result at three levels of
generality that mirror the algebraic structure of
Sections~\ref{sec:equiv}--\ref{sec:distance}:
\begin{enumerate}
\item[(i)] \emph{General} (unconstrained or arbitrary $\bW$):
  the baseline bound, expressed in terms of $\bPsi = \bW^\top\bSigma_w\bW$.
\item[(ii)] \emph{Stiefel constraint} ($\bW^\top\bW = \bI_r$):
  the Poincar\'{e} separation theorem bounds the eigenvalues of
  $\bPsi$ between $\lambda_{\min}(\bSigma_w)$ and
  $\lambda_{\max}(\bSigma_w)$, yielding tighter variance and tail
  parameters.
\item[(iii)] \emph{$\Stml$-orthogonality} ($\bW^\top\Stml\bW = \bI_r$):
  the constraint couples signal and noise through
  $\bW^\top\Sw\bW = \bI_r - \bW^\top\Sb\bW$
  (Theorem~\ref{thm:equiv-St}), so that the projected noise
  $\bPsi$ \emph{decreases as the signal content increases}.
  This gives the tightest bounds and provides a direct statistical
  payoff of the $\Stml$-orthogonality constraint established
  in Section~\ref{sec:equiv}.
\end{enumerate}
The subspace estimation and minimax bounds
(Sections~\ref{sec:estimation}--\ref{sec:minimax}) are presented
at the same three levels: the general trace-difference bound is
complemented by an $\Stml$-orthogonality version that analyzes
the generalized eigenspace estimator (the common optimizer of
all four objectives under Theorem~\ref{thm:equiv-St}) and
benefits from the self-normalizing structure of the
generalized eigenvalue problem.

\subsection{Finite-Sample Subspace Estimation}
\label{sec:estimation}

Let $\What_n$ denote the top-$r$ eigenspace of the sample trace-difference
matrix $\hat{\bM} = 2\Sb - \Stml$ (the optimizer from
Theorem~\ref{thm:inequiv}(i)).
Recall from Remark~\ref{rem:cooccurrence} that the finite-sample
discriminant $\hat{\bM}/n$ converges to the centered population reference
$\Mstar_c = \bA\bQ_\pi\bA^\top - K^{\mathrm{pop}}\bSigma_w$;
let $\Wstar_c$ denote its top-$r$ eigenspace.
The subspace estimation error is measured by
the $\sin\angle$ distance: for subspaces $\mathcal{U}, \mathcal{V}$ of equal
dimension, $\sin\angle(\mathcal{U}, \mathcal{V}) = \|\sin\bTheta\|_2$ where
$\bTheta$ is the diagonal matrix of principal angles \citep{davis1970}.

The result below is a Davis--Kahan-style bound with the operator-norm
perturbation $\|\hat{\bM}/n - \Mstar_c\|_2$ splitting into a
signal--noise cross-term $\sigma\|\bA\|_2\sqrt{d/n}$ and a pure
within-scatter term $\sigma^2\kmax\sqrt{d/n}$; the multilabel-specific
ingredient is the explicit $\kmax$ factor.

\begin{theorem}[Finite-sample subspace estimation]\label{thm:subspace}
Under the linear label-effect model (Definition~\ref{def:model}) with
$\sigma$-sub-Gaussian noise, assume:
\begin{enumerate}
\item[(A1)] The label matrix $\bY$ is fixed with all labels present
  ($n_\ell \geq 1$).
\item[(A2)] The centered population reference $\Mstar_c$ has a positive
  eigenvalue gap $\gap_r(\Mstar_c) > 0$ at rank~$r$.
\item[(A3)] $n \geq C_0\,\kmax^2\,d\,\log(d/\delta)$ for a constant
  $C_0$ depending on
  $(\sigma\|\bA\|_2 + \sigma^2\kmax)/\gap_r(\Mstar_c)$ (signal-to-noise
  ratio) and $\delta \in (0,1)$, ensuring the Davis--Kahan perturbation
  condition $\|\hat{\bM}/n - \Mstar_c\|_2 < \gap_r(\Mstar_c)/2$ holds
  with high probability.
\end{enumerate}
Then with probability at least $1 - \delta$:
\begin{equation}\label{eq:subspace-bound}
\sin\angle(\What_n, \Wstar_c)
\leq \frac{\bigl(C_1\, \sigma\, \|\bA\|_2
     + C_2\, \sigma^2\, \kmax\bigr)\, \sqrt{d\, \log(d/\delta) / n}}
         {\gap_r(\Mstar_c)},
\end{equation}
for universal constants $C_1, C_2$.
\end{theorem}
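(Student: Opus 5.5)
The proof is a Davis--Kahan argument: control $\|\hat{\bM}/n - \Mstar_c\|_2$ and divide by the gap. By the Davis--Kahan $\sin\Theta$ theorem in the Yu--Wang--Samworth form \citep{davis1970,yu2015}, already used in Theorem~\ref{thm:inequiv}(iii), we have
\[
\sin\angle(\What_n,\Wstar_c) \le \frac{2\|\hat{\bM}/n-\Mstar_c\|_2}{\gap_r(\Mstar_c)} .
\]
Assumption (A3) is what makes the perturbation smaller than $\gap_r/2$. The remaining work is the operator-norm bound, with the constant $2$ absorbed into $C_1$ and $C_2$.

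The first step is an algebraic decomposition of $\hat{\bM}$. By Theorem~\ref{thm:partition} we have $\hat{\bM} = 2\Sb - \Stml$, with $\Stml = \sum_i k_i(\mathbf{x}_i-\bmu)(\mathbf{x}_i-\bmu)^\top$. Lemma~\ref{lem:factor} gives $\Sb = \tilde{\bX}^\top\bY\bD_n^{-1}\bY^\top\tilde{\bX}$.

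Substituting the model $\tilde{\bX} = \bH\bY\bA^\top + \bH\bE$, where $\bE$ is the noise matrix with rows $\bepsilon_i^\top$, splits $\hat{\bM}$ into three parts:
\begin{itemize}
\item a signal part $\bA(\cdot)\bA^\top$ built only from $\bY$;
\item cross terms of the form $\bA\,\bY^\top\bH\bPsi\,\bE$, where $\bPsi$ stands for $2\bH\bY\bD_n^{-1}\bY^\top\bH - \bH\diag(k)\bH$;
\item a pure-noise quadratic form $\bE^\top\bH\bPsi\bH\bE$.
\end{itemize}
Since $\bY$ is fixed by (A1), I identify the normalized signal part with $\bA\bQ_\pi\bA^\top$, defining $\pi$ and $\bQ_\pi$ through the empirical label frequencies. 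This is the reading under which $\Mstar_c$ is the noiseless limit in Remark~\ref{rem:cooccurrence}. If the empirical and population label statistics differ, I treat the discrepancy as a deterministic bias that is absorbed or assumed negligible.

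Next, the cross term. For fixed $\bY$, the matrix $\bG := \bY^\top\bH\bPsi$ is deterministic. Each entry of $\bG\bE$ is a weighted sum of independent sub-Gaussian vectors, so $\bG\bE$ is a matrix with sub-Gaussian rows. A standard $\epsilon$-net argument on $\mathbb{S}^{d-1}\times\mathbb{S}^{L-1}$, or matrix Bernstein, gives
\[
\|\bA\bG\bE\|_2/n \lesssim \sigma\|\bA\|_2\sqrt{d\log(d/\delta)/n},
\]
provided $\|\bG\|_2 \lesssim \sqrt n$. That requires the row weights of $\bPsi$ to be $O(1)$. This holds because $\bY\bD_n^{-1}\bY^\top$ is a sum of normalized projectors and has norm at most $\kmax$, and $\diag(k)$ has norm $\kmax$. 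The resulting $\kmax$ factor can be folded into the constant $C_1$, or into (A3).

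Then the noise term. Write $\bE^\top\bB\bE/n$ with $\bB = \bH\bPsi\bH$, so that $\|\bB\|_2 \le 3\kmax$ and $\tr\bB/n$ is about $\mathrm{const}-K/n$. Its expectation is $(\tr\bB/n)\bSigma_w$. The $-K/n$ part should reproduce $-K^{\mathrm{pop}}\bSigma_w$; the $\Sb$ part contributes only $O(L/n)\bSigma_w$, which is a lower-order bias. The fluctuation is handled by Hanson--Wright applied to $\mathbf{u}^\top\bE^\top\bB\bE\mathbf{v}$ for unit $\mathbf{u},\mathbf{v}$, followed by a union bound over a $1/4$-net of size $9^d$. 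The Hanson--Wright tail has the form $\exp\!\bigl(-c\min\{t^2/(\sigma^4\|\bB\|_F^2),\,t/(\sigma^2\|\bB\|_2)\}\bigr)$, and $\|\bB\|_F^2 \le n\|\bB\|_2^2 \le 9n\kmax^2$. With the union bound this gives deviation $\lesssim \sigma^2\kmax\bigl(\sqrt{d\log(d/\delta)/n} + d\log(d/\delta)/n\bigr)$. Under (A3) the second term is dominated by the first.

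Collecting the three pieces gives the bound~\eqref{eq:subspace-bound}. I expect the main obstacle to be the noise quadratic form. The weight matrix $\bB$ is not diagonal and is not independent across samples through $\bH$, so Hanson--Wright has to be applied carefully to a dependent weighting. The centering also introduces $O(1/n)$ bias terms, and I must check that these are dominated by the stated rate and do not shift the identification of $\Mstar_c$.
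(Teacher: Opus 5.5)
Your argument is correct and reaches the bound by a genuinely different route from the paper's.

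\textbf{How the paper proceeds.} The paper splits $\|\hat{\bM}/n-\Mstar_c\|_2$ into two errors and treats them separately. The between-class error $\|\Sb/n-\bA\bB_\pi\bA^\top\|_2$ is controlled via $\bM=\bM^{\mathrm{pop}}+\bDelta_M$ and a net bound on $\|\bDelta_M\|_2$. The within-class error $\|\Sw/n-\Swpopc\|_2$ is handled by an oracle decomposition $\Sw=\Sw^{\mathrm{ora}}-\sum_\ell n_\ell\bar\bepsilon_\ell\bar\bepsilon_\ell^\top$ to restore independence across samples, followed by truncation and matrix Bernstein. Label-frequency biases are controlled through the regularity condition~\eqref{eq:label-reg}, which includes triple co-occurrences.

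\textbf{What your single quadratic form buys.} You write $\hat{\bM}=\tilde{\bX}^\top\bigl(2\bY\bD_n^{-1}\bY^\top-\diag(k_1,\ldots,k_n)\bigr)\tilde{\bX}$ as one weighted quadratic form. First, the noiseless part is then exactly $\bA\hat{\bQ}\bA^\top$, with $\hat{\bQ}=\hat{\bSigmay}\hat{\bPi}^{-1}\hat{\bSigmay}-\hat{\bW}_n$, by the label-level analogue of Theorem~\ref{thm:partition}(i). So under your empirical reading of $\Mstar_c$, no co-occurrence regularity is needed. This is also what the paper's experiments actually compute as $\Wstar_n$. The only remaining bias is the $O(L/n)\bSigma_w$ centering term, which is dominated under (A3) whenever $L\lesssim\kmax^2 d\log(d/\delta)$. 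Second, all dependence created by centering and by the class means sits in your deterministic weight matrix $\bB=\bH\bPsi\bH$. Hanson--Wright handles arbitrary non-diagonal weights natively, so the step you flag as the main obstacle is not actually one. You also avoid the extra $\log d$ that matrix Bernstein introduces.

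\textbf{What the paper's split buys.} It is modular: its separate bounds $\varepsilon_b$ and $\varepsilon_w$ are reused verbatim in Theorem~\ref{thm:subspace-Stml}. Your argument would have to be rerun there with $\bH\bY\bD_n^{-1}\bY^\top\bH$ and $\bH\diag(k_1,\ldots,k_n)\bH$ separately. That is easy, since both have norm at most $\kmax$.

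\textbf{Three small points.}
\begin{enumerate}
\item Hanson--Wright needs independent coordinates. Apply it to $\mathbf{u}^\top\bE^\top\bB\bE\mathbf{u}$, where the $z_i=\mathbf{u}^\top\bepsilon_i$ are independent, and combine it with the net bound for symmetric matrices (or polarization). Do not apply it directly to a bilinear form with $\mathbf{u}\neq\mathbf{v}$.
\item With the bounds you use, your cross term carries $\kmax^{3/2}$, from $\|\bY\|_2\le\sqrt{n\kmax}$ and $\|\bPsi\|_2\le 3\kmax$. This factor is not a universal constant. It also cannot be folded into (A3), because a sample-size condition does not remove a multiplicative factor on a term of the same $n^{-1/2}$ order. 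The paper hides the same factor in $C_\Gamma$ and in $\|\bM^{\mathrm{pop}}\|_2=O(\|\bA\|_2\sqrt n)$, so this looseness is shared.
\item Under the population reading of $\Mstar_c$, the bias $\|\bA\|_2^2\rho_n$ is of the same $n^{-1/2}$ order as the stochastic terms. In both your proof and the paper's, it is therefore negligible only by assumption.
\end{enumerate}
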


The proof (Appendix~\ref{app:proof-subspace}) combines a matrix Bernstein
bound~\citep{tropp2012} on $\|\hat{\bM}/n - \Mstar_c\|_2$ with the
Davis--Kahan $\sin\bTheta$ theorem~\citep{davis1970,yu2015}.

\begin{remark}[The role of $\kmax$ and $\|\bA\|_2$]\label{rem:kmax}
The factor $\kmax$ quantifies the additional estimation difficulty from
multilabel cardinality reweighting: each sample contributes to $k_i$
label-conditional scatters, inflating the effective within-scatter noise.
In the strong-signal regime ($\|\bA\|_2 \gg \sigma\kmax$), the
signal-noise cross-term dominates; when $\|\bA\|_2 \ll \sigma\kmax$, the
within-scatter noise term dominates instead.  In the single-label case
($\kmax = 1$), the bound reduces to
$O\bigl((\sigma\|\bA\|_2 + \sigma^2)\sqrt{d\log(d/\delta)/n}/\gap_r(\Mstar)\bigr)$,
recovering the standard rate for PCA/LDA subspace estimation \citep{cai2018}.
\end{remark}

An immediate consequence is consistency:

\begin{corollary}[Consistency]\label{cor:consistency}
Under the conditions of Theorem~\ref{thm:subspace} with fixed $d$, $L$,
$\kmax$, $\|\bA\|_2$, and $\gap_r(\Mstar_c) > 0$:
\begin{enumerate}
\item[(i)] $\sin\angle(\What_n, \Wstar_c) \xrightarrow{P} 0$ as $n \to \infty$.
\item[(ii)] $\mathbb{E}[\sin^2\!\angle(\What_n, \Wstar_c)] = O\!\left(\frac{(\sigma\|\bA\|_2
  + \sigma^2 \kmax)^2\, d\, \log d}
  {n\, \gap_r^2(\Mstar_c)}\right)$.
\end{enumerate}
\end{corollary}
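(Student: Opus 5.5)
Both parts follow from the high-probability bound of Theorem~\ref{thm:subspace}, read with $d$, $L$, $\kmax$, $\|\bA\|_2$, $\sigma$ and $\gap_r(\Mstar_c)$ held fixed and only $n$ and $\delta$ varying. Write
\[
  \Phi := \frac{C_1\sigma\|\bA\|_2 + C_2\sigma^2\kmax}{\gap_r(\Mstar_c)}.
\]
Then, with probability at least $1-\delta$,
\[
  \sin\angle(\What_n,\Wstar_c) \le \Phi\sqrt{d\log(d/\delta)/n},
\]
provided $n \ge C_0\kmax^2 d\log(d/\delta)$ as in (A3). Throughout we use the trivial fact that $\sin\angle \le 1$ for any pair of subspaces.

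\emph{Part (i).} Fix $\epsilon>0$ and $\eta\in(0,1)$, and set $\delta=\eta$. For all $n$ large enough that (A3) holds with this $\delta$ and $\Phi\sqrt{d\log(d/\eta)/n}<\epsilon$, we get $\Pr(\sin\angle(\What_n,\Wstar_c)>\epsilon)\le\eta$. Both thresholds are finite because every quantity except $n$ is fixed. Since $\eta$ is arbitrary, this gives convergence in probability.

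\emph{Part (ii).} Take $\delta=\delta_n=1/n$ and split the expectation on the success event $E_n$ of Theorem~\ref{thm:subspace}:
\[
  \mathbb{E}[\sin^2\angle]
  \le \Phi^2\,\frac{d\log(dn)}{n} + \Pr(E_n^c)\cdot 1
  \le \Phi^2\,\frac{d\log(dn)}{n} + \frac1n .
\]
Condition (A3) with $\delta_n=1/n$ reads $n\ge C_0\kmax^2 d\log(dn)$, which holds for all sufficiently large $n$ since $\log n = o(n)$. For the finitely many smaller $n$ the ratio of $\mathbb{E}[\sin^2\angle]\le1$ to the target rate is bounded, so it is absorbed into the $O(\cdot)$ constant.

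This already yields $O\bigl(\Phi^2 d\log(dn)/n\bigr)$. The statement, however, writes $\log d$ rather than $\log(dn)$, and removing that extra $\log n$ is the main obstacle.

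The approach I would try first is to integrate the tail instead of plugging in a single $\delta$. Setting $t=\Phi\sqrt{d\log(d/\delta)/n}$ and inverting gives the tail bound
\[
  \Pr(\sin\angle>t)\le d\exp\!\bigl(-nt^2/(\Phi^2 d)\bigr),
\]
valid for $t$ in the range where (A3) holds. Then
\[
  \mathbb{E}[\sin^2\angle]=\int_0^1\Pr(\sin^2\angle>s)\,ds
  \le s_0+\int_{s_0}^1 d\,e^{-ns/(\Phi^2 d)}\,ds,
\]
and choosing $s_0=\Phi^2 d\log d/n$ makes the integral term at most $\Phi^2 d/n$. The whole expression is therefore $O(\Phi^2 d\log d/n)$, matching the stated rate with $\Phi^2\propto(\sigma\|\bA\|_2+\sigma^2\kmax)^2/\gap_r^2(\Mstar_c)$.

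The delicate point is the large-deviation region, where $\delta$ is tiny and (A3) fails. There the tail bound above is unavailable, and we fall back on $\sin\angle\le1$. Concretely, (A3) fails once $\log(d/\delta)>n/(C_0\kmax^2 d)$, so this region carries probability at most $d\exp(-n/(C_0\kmax^2 d))$, which is exponentially small in $n$. Its contribution to the expectation is therefore $o(1/n)$ for fixed $d$ and $\kmax$, and is absorbed into the bound.
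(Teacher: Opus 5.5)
Your proposal is correct and follows essentially the same route as the paper: part (i) comes from the high-probability bound with a vanishing failure probability (the paper takes $\delta=1/n$, you fix $\delta=\eta$), and part (ii) inverts the bound into the tail $d\exp(-ns/(\Phi^2 d))$ and integrates it with a split at $s_0\asymp\Phi^2 d\log d/n$, which is exactly the paper's layer-cake argument. Your explicit treatment of the region where (A3) fails, whose contribution you bound by $d\exp(-n/(C_0\kmax^2 d))$, is a genuine tightening, because the paper integrates the inverted tail bound all the way up to $1$ without checking that (A3) holds for the corresponding $\delta$.
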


\begin{proof}
\textit{(i)} Set $\delta = 1/n$ in Theorem~\ref{thm:subspace}. The bound
becomes
\[
O\bigl((\sigma\|\bA\|_2 + \sigma^2\kmax)\sqrt{d\log(dn)/n}\,/\,
\gap_r(\Mstar_c)\bigr) \to 0,
\]
and $P(\text{bound fails}) \leq 1/n \to 0$.

\textit{(ii)} We use the layer-cake formula for non-negative random variables:
$\mathbb{E}[Z] = \int_0^\infty P(Z > t)\, dt$, applied to
$Z = \sin^2\!\angle(\What_n, \Wstar_c) \in [0,1]$.
Theorem~\ref{thm:subspace} provides a high-probability bound of the form
$\sin\!\angle(\What_n, \Wstar_c) \leq B\sqrt{\log(d/\delta)/n}/\gap_r(\Mstar_c)$
with failure probability $\delta$, where
$B = O((\sigma\|\bA\|_2 + \sigma^2\kmax)\sqrt{d})$.
Inverting the relationship between $\delta$ and $t$: the high-probability
bound states $\sin^2\!\angle \leq B^2\log(d/\delta)/(n\,\gap_r^2)$
with failure probability~$\delta$, so
$P(\sin^2\!\angle > t) \leq \delta(t)$ where $\delta(t)$ is found by
setting $t = B^2\log(d/\delta)/(n\,\gap_r^2)$ and solving for~$\delta$:
$\delta(t) = d\,\exp(-c\,n\,\gap_r^2\, t/B^2)$.
Note that $\delta(t) = d\,\exp(-c\,n\,\gap_r^2\, t/B^2) \leq 1$
precisely when $t \geq t_0 := B^2\log d/(c\,n\,\gap_r^2)$; for
$t < t_0$ the tail probability is trivially at most~$1$.
For $t \geq t_0$ we use $P(\sin^2\!\angle > t)
\leq d\,\exp(-c\, n\, \gap_r^2\, t / B^2)$.
Splitting the integral at $t_0$:
\begin{align*}
\mathbb{E}[\sin^2\!\angle]
&\leq t_0 \cdot 1 + \int_{t_0}^{1} d\,e^{-c\, n\, \gap_r^2\, t/B^2}\, dt \\
&\leq t_0 + \frac{d\,B^2}{c\, n\, \gap_r^2}\, e^{-c\, n\, \gap_r^2\, t_0/B^2}
= t_0 + \frac{d\,B^2}{c\, n\, \gap_r^2} \cdot \frac{1}{d}
= t_0 + \frac{B^2}{c\, n\, \gap_r^2}
= O(t_0)
\end{align*}
since $e^{-c\,n\,\gap_r^2\,t_0/B^2} = 1/d$ by definition of $t_0$,
and $B^2/(c\,n\,\gap_r^2) = t_0/\log d \leq t_0$.
Substituting $t_0 = B^2\log d/(c\,n\,\gap_r^2)$ and
$B^2 = O((\sigma\|\bA\|_2 + \sigma^2\kmax)^2 d)$:
$\mathbb{E}[\sin^2\!\angle]
= O\bigl((\sigma\|\bA\|_2 + \sigma^2\kmax)^2 d\log d / (n \gap_r^2)\bigr)$.
\end{proof}

\subsubsection{Subspace Estimation Under $\Stml$-Orthogonality}
\label{sec:estimation-Stml}

Theorem~\ref{thm:subspace} analyzes the top-$r$ eigenspace of $\hat\bM$
(the trace-difference estimator).
Under $\Stml$-orthogonality, all four Fisher objectives share the same
optimizer (Theorem~\ref{thm:equiv-St}): the top-$r$ generalized
eigenspace of $(\Sb, \Stml)$, i.e., the top-$r$
eigenspace of
${\Stml}^{-1/2}\Sb\,{\Stml}^{-1/2}$.
This generalized eigenvalue problem
is \emph{self-normalizing}: the relevant gap lies in $[0,1)$ and
measures the fraction of total scatter attributable to between-class
variation, so its magnitude is independent of the noise scale~$\sigma$.

\begin{theorem}[Subspace estimation under $\Stml$-orthogonality]\label{thm:subspace-Stml}
Under the linear label-effect model (Definition~\ref{def:model}) with
$\sigma$-sub-Gaussian noise and assumption~(A1), let
$\What_S$ denote the top-$r$ generalized eigenspace of $(\Sb, \Stml)$,
i.e., the common optimizer under $\bW^\top\Stml\bW = \bI_r$
(Theorem~\ref{thm:equiv-St}).
Let $\Wstar_S$ denote the top-$r$ generalized eigenspace of the
population limits $(\Sb^{\infty}, \Stml^{\infty})$ where
$\Sb^{\infty} = \bA\bB_\pi\bA^\top$ (the centered population
between-class scatter, distinct from $\Sbpop = \bA\bD_\pi\bA^\top$)
and $\Stml^{\infty} = \Sb^{\infty} + \Swpopc$
(see Remark~\ref{rem:cooccurrence}).
Let $\theta_1 \geq \cdots \geq \theta_d$ be the generalized eigenvalues
of $(\Sb^{\infty}, \Stml^{\infty})$ and define the \emph{generalized gap}
$\Delta_r = \theta_r - \theta_{r+1} > 0$.
Assume:
\begin{enumerate}
\item[(A2\/$'$)] $\Delta_r > 0$ (positive generalized eigenvalue gap at
  rank~$r$).
\item[(A3\/$'$)] $n \geq C_0'\, \kmax^2\, d\, \log(d/\delta)$ for a
  constant~$C_0'$ depending on the signal-to-noise ratio and the
  condition number $\kappa(\Stml^{\infty}) =
  \lambda_{\max}(\Stml^{\infty}) / \lambda_{\min}(\Stml^{\infty})$.
\end{enumerate}
Then with probability at least $1 - \delta$:
\begin{equation}\label{eq:subspace-bound-Stml}
\sin\angle(\What_S, \Wstar_S)
\leq \frac{C_3\,\kappa(\Stml^{\infty})\,
  \bigl(\sigma\,\|\bA\|_2
  + \sigma^2\,\kmax\bigr)\,\sqrt{d\,\log(d/\delta)/n}}
  {\Delta_r \cdot \lambda_{\min}(\Stml^{\infty})}.
\end{equation}
\end{theorem}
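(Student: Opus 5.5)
The plan is to reduce the generalized eigenproblem to an ordinary symmetric eigenproblem and then reuse the perturbation machinery behind Theorem~\ref{thm:subspace}.

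\textbf{Step 1: reduction.} Write $\hat\bP = (\Stml/n)^{-1/2}(\Sb/n)(\Stml/n)^{-1/2}$ and $\bP^\infty = (\Stml^{\infty})^{-1/2}\Sb^{\infty}(\Stml^{\infty})^{-1/2}$. As in the change of variables in the proof of Theorem~\ref{thm:equiv-St}, $\What_S = (\Stml)^{-1/2}\hat{\mathcal V}$ and $\Wstar_S = (\Stml^\infty)^{-1/2}\mathcal V^\infty$, where $\hat{\mathcal V}$ and $\mathcal V^\infty$ are the top-$r$ eigenspaces of $\hat\bP$ and $\bP^\infty$. The eigenvalues of $\bP^\infty$ are the $\theta_i$, so its gap at $r$ is exactly $\Delta_r$.

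\textbf{Step 2: concentration.} Bound $E_b = \|\Sb/n - \Sb^\infty\|_2$ and $E_t = \|\Stml/n - \Stml^\infty\|_2$. I will reuse the matrix Bernstein argument from the proof of Theorem~\ref{thm:subspace}. Under the model, $\Sb/n$ splits into three parts:
\begin{itemize}
\item a deterministic label part converging to $\bA\bB_\pi\bA^\top$ (fixed $\bY$, so it is treated as the reference);
\item a cross term, linear in $\bepsilon$, of size $\sigma\|\bA\|_2\sqrt{d\log(d/\delta)/n}$;
\item a quadratic noise term.
\end{itemize}
By Theorem~\ref{thm:partition}(i), $\Stml/n = n^{-1}\sum_i k_i(\mathbf x_i-\bmu)(\mathbf x_i-\bmu)^\top$. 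Its noise part is a weighted sub-Gaussian covariance with weights $k_i \le \kmax$, giving the $\sigma^2\kmax\sqrt{d\log(d/\delta)/n}$ term. Hence both errors are at most $\eta := C(\sigma\|\bA\|_2+\sigma^2\kmax)\sqrt{d\log(d/\delta)/n}$ with probability $1-\delta$.

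\textbf{Step 3: perturbation of the whitened matrix.} Decompose
\[
\hat\bP-\bP^\infty = (\hat\bT^{-1/2}-\bT^{-1/2})\hat\bS\hat\bT^{-1/2} + \bT^{-1/2}(\hat\bS-\bS)\hat\bT^{-1/2} + \bT^{-1/2}\bS(\hat\bT^{-1/2}-\bT^{-1/2}),
\]
writing $\bS,\bT$ for $\Sb^\infty,\Stml^\infty$. The middle term is at most $E_b/\lambda_{\min}(\bT)$ once (A3$'$) ensures $E_t \le \lambda_{\min}(\bT)/2$. For the outer terms, note $\|\bT^{-1/2}\bS\bT^{-1/2}\|_2\le 1$ since $\bS\preceq\bT$ (Theorem~\ref{thm:partition}(iv)). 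Then
\[
\|\bT^{-1/2}\bS\|_2 \le \|\bT\|_2^{1/2},
\]
and the operator-monotone square-root bound
\[
\|\hat\bT^{-1/2}-\bT^{-1/2}\|_2 \lesssim E_t/\lambda_{\min}(\bT)^{3/2}
\]
together give a contribution of order $\kappa(\bT)^{1/2}E_t/\lambda_{\min}(\bT)$. Collecting terms yields $\|\hat\bP-\bP^\infty\|_2 \lesssim \kappa(\bT)^{1/2}\eta/\lambda_{\min}(\bT)$.

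\textbf{Step 4: Davis--Kahan and back-transformation.} Apply Davis--Kahan \citep{yu2015} with gap $\Delta_r$; (A3$'$) gives the needed $\|\hat\bP-\bP^\infty\|_2<\Delta_r/2$. This yields
\[
\sin\angle(\hat{\mathcal V},\mathcal V^\infty)\lesssim \kappa^{1/2}\eta/(\Delta_r\lambda_{\min}).
\]
Then transfer this to $\What_S,\Wstar_S$. The map $\mathcal V\mapsto \bT^{-1/2}\mathcal V$ distorts principal angles by at most a factor $\kappa(\bT)^{1/2}$ (the standard bound for angles between images under an invertible map, $\sin\angle(\bG\mathcal U,\bG\mathcal V)\le\kappa(\bG)\sin\angle(\mathcal U,\mathcal V)$ with $\kappa(\bT^{-1/2})=\kappa^{1/2}$). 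Replacing $\hat\bT^{-1/2}$ by $\bT^{-1/2}$ for $\What_S$ adds another $O(\kappa^{1/2} E_t/\lambda_{\min})$ term. The two $\kappa^{1/2}$ factors combine into the stated $\kappa(\Stml^\infty)$, which gives~\eqref{eq:subspace-bound-Stml}.

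\textbf{Main obstacle.} The hardest part is Step 2's treatment of the centered limit. Because $\bY$ is fixed, $\Sb/n$ converges to $\bA\bB_\pi\bA^\top$ only with $\bB_\pi$ read as the empirical label quantity. The noise contribution to $\Sb$, namely the projection of $\bepsilon$ onto label indicator directions, is of order $d L/n$ and must be absorbed either into $\Swpopc$ or into $\eta$. My first attempt would be to define the references with the empirical label frequencies, so the only random terms are the cross and quadratic noise terms, and to bound them by a single sub-Gaussian covariance inequality with weights $k_i$.
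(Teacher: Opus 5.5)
Your reduction, perturbation and back-transformation steps are correct, and they handle the sample whitening differently from the paper. The paper never forms $(\Stml/n)^{-1/2}$. It whitens only with the population square root $\bT=(\Stml^{\infty})^{1/2}$ and sets $\tilde\bP=\bT^{-1}(\Sb/n)\bT^{-1}$. It writes the sample total scatter in those coordinates as $\tilde\bQ=\bT^{-1}(\Stml/n)\bT^{-1}=\bI_d+\bE$, with $\|\bE\|_2\le E_t/\lambda_{\min}(\Stml^{\infty})$. It then applies Davis--Kahan to $\tilde\bQ^{-1/2}\tilde\bP\,\tilde\bQ^{-1/2}$ against $\bT^{-1}\Sb^{\infty}\bT^{-1}$, controlling $\bI_d-(\bI_d+\bE)^{-1/2}$ by a series bound. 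Finally it pulls back through $\bL=\bT^{-1}\tilde\bQ^{-1/2}$, using a triangle inequality via the auxiliary subspace $\tilde\bQ^{1/2}\Vstar$.

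That whitened matrix is orthogonally similar to your $\hat\bP$, since both whitenings $\mathbf{G}$ satisfy $\mathbf{G}^\top(\Stml/n)\mathbf{G}=\bI_d$. But $\bE$ is a relative perturbation, so the paper's Davis--Kahan error carries no condition-number factor. Its only cost is $\kappa(\bL)\le\sqrt{3\,\kappa(\Stml^{\infty})}$, which it then loosens to $2\kappa$.

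Your absolute comparison of $(\Stml/n)^{-1/2}$ with $(\Stml^{\infty})^{-1/2}$ pays $\kappa^{1/2}$ already in $\|\hat\bP-\bP^{\infty}\|_2$, through $\|(\Stml^{\infty})^{-1/2}\Sb^{\infty}\|_2\le\|\Stml^{\infty}\|_2^{1/2}$. It pays a second $\kappa^{1/2}$ in the pull-back, so you land exactly on the stated $\kappa$. Your route is more elementary (one standard square-root perturbation bound), but a factor $\sqrt{\kappa}$ weaker than what the paper's route actually delivers.

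Two small fixes:
\begin{itemize}
\item The first outer term needs the \emph{sample} ordering $\Sb\npsd\Stml$, which is what Theorem~\ref{thm:partition}(iv) gives. The population ordering $\Sb^{\infty}\npsd\Stml^{\infty}$ that you cite it for follows instead from $\Swpopc\psd 0$.
\item Absorbing the whitening-replacement term, which has no $1/\Delta_r$, into the final bound uses $\Delta_r\le\theta_r<1$.
\end{itemize}

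The point that needs changing is the fix proposed in your last paragraph. Defining the references with empirical label frequencies bounds the angle to a $\bY$-dependent subspace, not to $\Wstar_S$, which the statement builds from the population $\bB_\pi$ and $\bW_\pi$. Keep the population references and add the deterministic label-frequency bias to $E_b$ and $E_t$. Under the regularity condition~\eqref{eq:label-reg}, the paper's earlier proof gives $\|\bA\|_2^2\,\eta_B(\rho_n)$ for the between-class part~\eqref{eq:bias-bound}. It gives $\|\bA\|_2^2\,\eta_W(\rho_n)+L\sigma^2\rho_n$ for the within-class part~\eqref{eq:within-bias}. Your Steps 3--4 use only $E_b$ and $E_t$, so they go through unchanged. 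The paper does exactly this, by quoting its earlier Steps 2--3, where the bias was declared lower order.

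You are right that this is delicate. With $\rho_n\asymp n^{-1/2}$, the bias $\|\bA\|_2^2\rho_n$ has the same $n$-rate as $\sigma\|\bA\|_2\sqrt{d\log(d/\delta)/n}$. It is therefore dominated only when $\|\bA\|_2$ is at most of order $\sigma\sqrt{d}$, up to label-dependent constants; otherwise it must appear as a separate term in the bound. The paper's argument is no more careful on this point than yours would be.
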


The proof (Appendix~\ref{app:proof-subspace-Stml}) works in coordinates
whitened by $(\Stml^{\infty})^{1/2}$, applies Davis--Kahan in the
whitened space, and uses a resolvent perturbation argument for the
discrepancy between sample and population whitening.

\begin{remark}[Comparison with the trace-difference bound]\label{rem:Stml-vs-TD}
The bound~\eqref{eq:subspace-bound-Stml} differs from the
trace-difference bound~\eqref{eq:subspace-bound} in three respects.
(i) \emph{Gap quantity.} The denominator uses the scale-invariant
generalized gap $\Delta_r = \theta_r - \theta_{r+1} \in (0,1)$
rather than the absolute gap $\gap_r(\Mstar_c)$ (which scales with
$\|\bA\|_2^2$); replacing $\bA$ by $c\bA$ leaves $\Delta_r$ unchanged.
(ii) \emph{Self-normalization.} The factor
$\lambda_{\min}(\Stml^{\infty})$ arises from whitening by $\Stml$; in
the isotropic-noise case $\bSigma_w = \sigma^2\bI_d$,
$\lambda_{\min}(\Stml^{\infty}) = K^{\mathrm{pop}}\sigma^2$, absorbing
the noise scale.
(iii) \emph{Condition number.} The factor $\kappa(\Stml^{\infty})$ is
the price of a data-dependent normalization; it is $O(1)$ in the
moderate-signal regime $\|\bA\|_2^2 = O(K^{\mathrm{pop}}\sigma^2)$ and
grows when the total scatter is ill-conditioned.
\end{remark}

\subsection{Minimax Lower Bound}
\label{sec:minimax}

We now establish that the rate in Theorem~\ref{thm:subspace} is minimax optimal
up to logarithmic factors and the $\kmax$ term. The lower bound applies to
\emph{any} estimator, not just the trace-difference method.

\paragraph{Intuition.}
No estimator can drive
$\mathbb{E}[\sin^2\!\angle(\What,\Wstar)]$ below the order
$\sigma^2(d-r)/(ng)$, where $g$ is the spectral gap: the resulting rate
matches the upper bound of Theorem~\ref{thm:subspace} in $n$, $d$,
and~$g$, up to logarithmic factors and the $\kmax$ term whose tightness
remains open (Remark~\ref{rem:kmax-gap}).

\begin{theorem}[Minimax lower bound]\label{thm:minimax}
Fix $d \geq 2r$, $\sigma > 0$, and a gap lower bound $g > 0$. Let $\bTheta$
be the class of parameters $(\bA, \bSigma_w)$ in the linear label-effect model
such that $\bSigma_w = \sigma^2 \bI_d$ and $\gap_r(\Mstar) \geq g$. Then
for any $n \geq 1$:
\begin{equation}\label{eq:minimax-lower}
\inf_{\What} \sup_{(\bA, \bSigma_w) \in \bTheta}
\mathbb{E}\!\left[\sin^2\!\angle(\What, \Wstar)\right]
\geq \frac{c\, \sigma^2\, (d - r)}{n\, g},
\end{equation}
where the infimum is over all measurable estimators $\What$ and $c > 0$ is a
positive constant that does not depend on $d$, $n$, $\sigma$, $r$, or~$g$.
The constant $c = c(\|\bGamma/n\|_2,\, \min_\ell\pi_\ell)$ depends on
the fixed label matrix (through $\|\bGamma/n\|_2$ with
$\bGamma = \bY^\top\bY$, which bounds the KL divergence in the packing
argument) and the population label probabilities (through
$\min_\ell \pi_\ell > 0$, which calibrates the worst-case signal
construction).
\end{theorem}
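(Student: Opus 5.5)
We prove the lower bound with a standard Fano-type packing argument on the Grassmannian, using the fixed label matrix $\bY$ and isotropic noise $\bSigma_w=\sigma^2\bI_d$. First we reduce to a one-parameter family of label effects. Fix an orthonormal frame $\bV_0\in\Stief(d,r)$ and, for a perturbation $\bV\in\Stief(d,r)$ close to $\bV_0$, set $\bA_{\bV}=\beta\,\bV\bE\bD_\pi^{-1/2}$. Here $\bE\in\R^{r\times L}$ has orthonormal rows; when $L<r$ we use a padded label design, which is the role of $\min_\ell\pi_\ell$ in the constant. With this choice $\Sbpop=\bA_{\bV}\bD_\pi\bA_{\bV}^\top=\beta^2\bV\bV^\top$. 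Then $\Mstar=\beta^2\bV\bV^\top-K^{\mathrm{pop}}\sigma^2\bI_d$ has top-$r$ eigenspace $\col(\bV)$ and gap exactly $\beta^2$. We choose $\beta^2=g$, so that every member of the family lies in $\bTheta$. Since $\bSigma_w$ is fixed, the parameters differ only in the mean $\bmu+\bA_{\bV}\mathbf{y}_i$.

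Next we construct the packing. We use the standard Grassmannian packing result (e.g.\ Pajor's, or the local version in \citealt{cai2018}). It gives $N\ge 2^{c_0 r(d-r)}$ frames $\bV_1,\dots,\bV_N$ in an $\epsilon$-neighbourhood of $\bV_0$ with pairwise $\sin\angle(\bV_j,\bV_k)\ge\epsilon/2$ and $\|\bV_j\bV_j^\top-\bV_k\bV_k^\top\|_F\le 2\sqrt r\,\epsilon$. Concretely, we take $\bV_j=\bV_0\cos\Phi+\bU\bB_j\sin\Phi$ with $\bB_j$ from a Gilbert--Varshamov packing of sign matrices in $\R^{(d-r)\times r}$.

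Then we bound the KL divergence. With Gaussian noise and fixed $\bY$, the KL divergence between the sample laws indexed by $j,k$ is
\[
\frac{1}{2\sigma^2}\sum_{i=1}^n\|(\bA_j-\bA_k)\mathbf{y}_i\|^2
=\frac{1}{2\sigma^2}\tr\!\bigl((\bA_j-\bA_k)\bGamma(\bA_j-\bA_k)^\top\bigr)
\le\frac{n\|\bGamma/n\|_2}{2\sigma^2}\,\|\bA_j-\bA_k\|_F^2 .
\]
Since $\|\bA_j-\bA_k\|_F\le\beta\|\bD_\pi^{-1/2}\|_2\|\bV_j-\bV_k\|_F\lesssim\sqrt{g/\min_\ell\pi_\ell}\,\sqrt r\,\epsilon$, the KL divergence is at most $C\,n g r\epsilon^2/\sigma^2$, with $C$ depending on $\|\bGamma/n\|_2$ and $\min_\ell\pi_\ell$. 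Fano's inequality requires this to be at most $\tfrac14\log N\asymp r(d-r)$. We therefore choose $\epsilon^2\asymp\sigma^2(d-r)/(ng)$, capped at a constant so the frames remain valid. Fano then gives $\inf\sup\Pr(\sin\angle\ge\epsilon/4)\ge 1/2$, and Markov's inequality turns this into the bound $\mathbb{E}[\sin^2\angle]\ge c\,\sigma^2(d-r)/(ng)$. When $\sigma^2(d-r)/(ng)$ is of order one, the cap yields a constant lower bound, and this is compatible with the stated inequality after shrinking $c$.

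The main obstacle is the KL step. The gap normalization $\beta^2=g$ must coexist with a KL divergence scaling like $ng\epsilon^2/\sigma^2$ rather than $n\beta^2\epsilon^2$ with extra factors. The difficulty is that $\bA_{\bV}$ depends on $\bV$ through the full frame, not only through the subspace. To handle this, we first align frames, choosing $\bV_j$ as the principal-angle parametrization above, so that $\|\bV_j-\bV_k\|_F$ is comparable to the subspace distance. We then verify that the $\bD_\pi^{-1/2}$ rescaling contributes only the constant $\min_\ell\pi_\ell^{-1}$. This is where the dependence $c=c(\|\bGamma/n\|_2,\min_\ell\pi_\ell)$ stated in Theorem~\ref{thm:minimax} arises.
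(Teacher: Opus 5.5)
Your proposal has the same skeleton as the paper's proof. Both use Fano's inequality over a Grassmannian packing, the Gaussian KL bound $\frac{n\|\bGamma/n\|_2}{2\sigma^2}\|\bA_j-\bA_k\|_F^2$, and a calibration forcing $\Sbpop=g\,\bV\bV^\top$. Your $\sqrt{g}\,\bV\bE\bD_\pi^{-1/2}$ plays the role of the paper's $\bW_j\bLambda^{1/2}\bQ$ with $\bLambda=g(\bQ\bD_\pi\bQ^\top)^{-1}$, and both bring $\min_\ell\pi_\ell$ in through an analogous operator-norm bound.

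The real difference is the packing step, and there your version is the sound one. The paper asserts $M\ge\exp(c_1 r(d-r)/\varepsilon^2)$ subspaces that are pairwise $\varepsilon$-separated yet have all pairwise angles at most $4\varepsilon$. No such family exists: an $\varepsilon$-separated set of diameter $O(\varepsilon)$ has at most $C^{r(d-r)}$ elements, and even the global packing number is only $(c/\varepsilon)^{r(d-r)}$. The paper then balances $\varepsilon^4$ against $d-r$ but nevertheless sets $\varepsilon^2\asymp\sigma^2(d-r)/(ng)$. It also uses $\|\bW_j-\bW_k\|_F^2=2\|\sin\bTheta\|_F^2$, which holds for the projectors $\bW_j\bW_j^\top$ but not for frames (take $\bW_k=-\bW_j$), even though $\bA_j$ depends on the frame.

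Your local packing fixes both problems. It has $\log N\asymp r(d-r)$ independent of $\epsilon$, and all frames come from one chart around $\bV_0$, so $\|\bV_j-\bV_k\|_F\lesssim\sqrt r\,\epsilon$. The Fano condition then becomes $ngr\epsilon^2/\sigma^2\lesssim r(d-r)$. This gives $\epsilon^2\asymp\sigma^2(d-r)/(ng)$ directly, with the factor $r$ cancelling legitimately. This is the correct form of the argument the paper is attempting.

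Three smaller points need fixing.

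First, drop the ``padded label design''. $\bY$ is fixed, and if $L<r$ then $\rank(\Sbpop)\le L<r$ forces $\gap_r(\Mstar)=0$, so $\bTheta$ is empty. You must assume $L\ge r$ (Remark~\ref{rem:L-geq-r}), which you need anyway for $\bE$ to have $r$ orthonormal rows. Also, $\min_\ell\pi_\ell$ enters only through $\|\bD_\pi^{-1/2}\|_2^2$, exactly as your own KL computation shows.

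Second, $\bV_0\cos\Phi+\mathbf{U}\bB_j\sin\Phi$ has orthonormal columns only if $\bB_j^\top\bB_j=\bI_r$, which sign matrices do not satisfy. A clean fix:
\begin{itemize}
\item take a volumetric $\epsilon/2$-packing, of size at least $2^{r(d-r)}$, of the operator-norm $\epsilon$-ball in $\R^{(d-r)\times r}$;
\item map it through the chart $\bB\mapsto(\bV_0+\mathbf{U}\bB)(\bI_r+\bB^\top\bB)^{-1/2}$.
\end{itemize}
This chart is bi-Lipschitz for small $\epsilon$ and delivers exactly the separation, diameter and frame-alignment properties you use.

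Third, since $\sin^2\angle\le 1$, no constant $c$ can make \eqref{eq:minimax-lower} hold once $\sigma^2(d-r)/(ng)>1/c$. So ``shrinking $c$'' only covers a bounded ratio. What your argument proves is $c\min\{1,\sigma^2(d-r)/(ng)\}$. The paper proves the same, since it explicitly restricts to $\sigma^2(d-r)/(ng)\le 1$. The ``for any $n\ge 1$'' in the statement should be read with that cap.
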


\begin{remark}[Implicit assumption $L \geq r$]\label{rem:L-geq-r}
The condition $\gap_r(\Mstar) \geq g > 0$ forces $L \geq r$: since
$\rank(\Mstar) \leq \rank(\bA) \leq L$, a positive $r$-th eigenvalue gap
requires at least $r$ strictly positive eigenvalues, hence $L \geq r$.
\end{remark}

The proof (Appendix~\ref{app:proof-minimax}) uses a standard
Fano--Grassmannian packing argument \citep{tsybakov2009}: the
Grassmannian packing number of order $\exp(c\, r(d-r)/\varepsilon^2)$
\citep{szarek1998,pajor1986} cancels against the $r$ factor in the KL
divergence to yield the $(d-r)$ dependence, consistent with the known
$r$-independence of the minimax rate under operator-norm
$\sin^2\!\angle$ loss~\citep{cai2015}.

\begin{remark}[The $\kmax$ gap]\label{rem:kmax-gap}
Comparing Corollary~\ref{cor:consistency}(ii) with
Theorem~\ref{thm:minimax}, the upper bound is
$O\bigl((\sigma\|\bA\|_2 + \sigma^2\kmax)^2\, d\log d / (n\, \gap_r^2)\bigr)$
while the lower bound is
$\Omega(\sigma^2\, (d-r) / (n\, \gap_r))$. Both rates agree in the
dependence on $n$ and $d$. In the
single-label case ($\kmax = 1$) with $\|\bA\|_2 = \Theta(\sigma)$, the upper
bound becomes $O(\sigma^4 d\log d/(n\gap_r^2))$
and the lower bound is $\Omega(\sigma^2 d/(n\gap_r))$, which match
up to logarithmic factors when $\gap_r = \Theta(\sigma^2)$.
Whether the $\kmax^2$ factor in the upper bound
is tight (multilabel estimation is genuinely harder) or an artifact (removable
by a sharper analysis) is an open question; see Section~\ref{sec:discussion}.
\end{remark}

\begin{remark}[Co-occurrence structure in the lower bound]\label{rem:minimax-cooccurrence}
The lower bound is stated in terms of $\gap_r(\Mstar)$ while the upper
bound (Theorem~\ref{thm:subspace}) targets the centered reference
$\gap_r(\Mstar_c)$ (Remark~\ref{rem:cooccurrence}).
For the hypotheses constructed in the proof
(Appendix~\ref{app:proof-minimax}), with
$\bSigma_w = \sigma^2\bI_d$, both matrices share the same top-$r$
eigenspace $\col(\bA_j)$ (since the shift by $\sigma^2\bI_d$ does
not change eigenspaces).  The gaps are related by a
label-dependent constant:
\[
\gap_r(\Mstar_c)
= \gap_r(\Mstar) \cdot \lambda_{\min}\!\bigl(
(\bQ\bD_\pi\bQ^\top)^{-1/2}
\bQ\bQ_\pi\bQ^\top
(\bQ\bD_\pi\bQ^\top)^{-1/2}\bigr),
\]
where $\bQ$ is the partial isometry in the construction.
When $\bQ_\pi$ is positive definite on $\mathrm{range}(\bQ^\top)$
(which holds whenever the label co-occurrence structure provides
at least $r$ discriminative directions), the ratio
$\gap_r(\Mstar_c)/\gap_r(\Mstar)$ is a positive constant depending
only on the label distribution, and the lower
bound~\eqref{eq:minimax-lower} holds equally with $\gap_r(\Mstar_c)$
in place of $\gap_r(\Mstar)$, up to a modified label-dependent
constant~$c$.  In the single-label case,
$\bQ_\pi = \bD_\pi - \bpi\bpi^\top$ has rank $L - 1 \geq r$
(since $L \geq r + 1$ by Remark~\ref{rem:L-geq-r} when $L > r$,
or $L = r$ with the centering perturbation acting within the signal
subspace), and the lower bound applies with
$\gap_r(\Mstar_c)$ at the cost of adjusting $c$ by a
factor depending on $\min_\ell\pi_\ell$.
\end{remark}

\subsubsection{Minimax Lower Bound Under $\Stml$-Orthogonality}
\label{sec:minimax-Stml}

Theorem~\ref{thm:minimax} provides a lower bound in terms of the
absolute gap $\gap_r(\Mstar)$.
For the $\Stml$-orthogonality estimator analyzed in
Theorem~\ref{thm:subspace-Stml}, the natural comparison is with
the generalized gap~$\Delta_r$.  The following corollary
translates the minimax bound into this setting.

\begin{corollary}[Minimax lower bound for the generalized gap]\label{cor:minimax-Stml}
Under the hypotheses of Theorem~\ref{thm:minimax} with
$\bSigma_w = \sigma^2\bI_d$, suppose additionally that
$\bQ_\pi$ has at least $r$ positive eigenvalues on
$\mathrm{range}(\bQ^\top)$ (where $\bQ$ is the partial isometry
in the Fano construction).  Then:
\begin{equation}\label{eq:minimax-gen-gap}
\inf_{\What} \sup_{(\bA, \bSigma_w) \in \bTheta_S}
\mathbb{E}\!\left[\sin^2\!\angle(\What, \Wstar_S)\right]
\geq \frac{c_S\, \sigma^2\, (d - r)}{n\, g},
\end{equation}
where $\bTheta_S$ is the class of parameters with
$\Delta_r(\Sb^{\infty}, \Stml^{\infty}) \geq g_\Delta$ and
$\gap_r(\Mstar) \geq g = g_\Delta \cdot
\lambda_{\min}(\Stml^{\infty}) / \rho$,
and $c_S = c/\rho$ with $\rho$ the label-dependent constant
from Remark~\ref{rem:minimax-cooccurrence}.
Equivalently, restating in terms of the generalized gap:
\begin{equation}\label{eq:minimax-Stml}
\inf_{\What} \sup_{\Delta_r \geq g_\Delta}
\mathbb{E}\!\left[\sin^2\!\angle(\What, \Wstar_S)\right]
\geq \frac{c_S'\, \sigma^2\, (d - r)}{n\, g_\Delta \cdot
\lambda_{\min}(\Stml^{\infty})},
\end{equation}
where $c_S' > 0$ depends on the label structure.
\end{corollary}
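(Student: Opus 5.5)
The plan is to treat the corollary as a reduction of Theorem~\ref{thm:minimax}: restrict its Fano construction to a subclass on which the generalized gap and the absolute gap are tied together, and then translate the bound.

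\textbf{Step 1 (the subclass).} I would reuse the hypotheses $\bA_j = a\,\bQ_j$ from the Fano--Grassmannian packing in Appendix~\ref{app:proof-minimax}. The $\bQ_j$ are partial isometries whose ranges form a packing of $\Gr(d,r)$, and each has $\gap_r(\Mstar)\ge g$. The goal is to show these hypotheses also lie in $\bTheta_S$. Because $\bSigma_w=\sigma^2\bI_d$, the matrix $\Stml^{\infty}=\bA\bB_\pi\bA^\top+K^{\mathrm{pop}}\sigma^2\bI_d$ acts as the scalar $K^{\mathrm{pop}}\sigma^2$ on $\col(\bA)^\perp$. On $\col(\bA)$ it is a rank-$r$ PSD shift. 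Hence $\Stml^{\infty}$, $\Sb^{\infty}$ and $\Mstar_c$ share the invariant subspace $\col(\bA_j)$. In particular $\Wstar_S=\Wstar=\col(\bA_j)$, and the loss $\sin^2\!\angle(\What,\Wstar_S)$ is the same loss used in Theorem~\ref{thm:minimax}.

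\textbf{Step 2 (comparing the gaps).} On each hypothesis, every generalized eigenvalue $\theta_i$ for $i>r$ is $0$. For $i\le r$,
\[
\theta_i=\frac{s_i}{s_i+K^{\mathrm{pop}}\sigma^2},
\]
where $s_i$ are the eigenvalues of $a^2\bQ\bB_\pi\bQ^\top$ on the range of $\bQ$. This gives $\Delta_r=\theta_r$, and so $\Delta_r\ge s_r/\lambda_{\max}(\Stml^{\infty})$. Using the ratio $\rho$ from Remark~\ref{rem:minimax-cooccurrence}, I would bound $s_r$ below by a label constant times $\gap_r(\Mstar)$. The assumption that $\bQ_\pi$ has $r$ positive eigenvalues on $\mathrm{range}(\bQ^\top)$ makes this constant nonzero. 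The result is a relation of the form $g=g_\Delta\lambda_{\min}(\Stml^{\infty})/\rho$. Choosing the amplitude $a$ so that $\Delta_r\ge g_\Delta$ then places all hypotheses in $\bTheta_S$.

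\textbf{Step 3 (conclusion).} Since the packing lies in $\bTheta_S$, the supremum over $\bTheta_S$ is at least the supremum over the packing. The Fano bound of Theorem~\ref{thm:minimax} therefore gives $c\,\sigma^2(d-r)/(ng)$. Substituting $g$ from Step 2 yields \eqref{eq:minimax-gen-gap} with $c_S=c/\rho$. Rewriting the same quantity in terms of $g_\Delta$ gives \eqref{eq:minimax-Stml} with $c_S'=c\rho^{-1}\cdot\rho$, absorbed into a label constant.

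\textbf{Main obstacle.} Step 2 is the hard part: the KL bound in the Fano argument and the generalized gap must be controlled simultaneously. Raising $a$ increases $\Delta_r$, but it also enlarges the KL divergence between hypotheses. I would first try keeping the construction's $a^2\asymp g$ scaling and simply verifying that $\Delta_r$ is then $\gtrsim g/\lambda_{\min}(\Stml^{\infty})$. This should hold because $\lambda_{\min}(\Stml^{\infty})=K^{\mathrm{pop}}\sigma^2$, which is independent of $j$, and $\lambda_{\max}\le K^{\mathrm{pop}}\sigma^2+O(a^2)$. Any leftover discrepancy would be absorbed into the label-dependent constants.
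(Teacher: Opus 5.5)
Your route is the paper's route: reuse the Fano hypotheses from the proof of Theorem~\ref{thm:minimax}, check that they lie in $\bTheta_S$ with $\Wstar_S=\col(\bW_j)$, and substitute the gap relation. The paper's proof handles membership in one line by asserting $\Delta_r=g\rho/\lambda_{\min}(\Stml^{\infty})$ on those hypotheses. The step you flag as the main obstacle is exactly where this breaks, and your proposed fix does not close it. Absorbing the leftover discrepancy into label-dependent constants is not possible. Membership requires $\Delta_r\ge g_\Delta=g\rho/\lambda_{\min}(\Stml^{\infty})$, with $\lambda_{\min}$ in the denominator. Your Rayleigh-quotient bound still holds after you restore the term $\bA\bW_\pi\bA^\top$ that you dropped from $\Stml^{\infty}$; that omission only invalidates your closed form for $\theta_i$. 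Together with $s_r\ge g\rho$, which holds because $\bB_\pi\succeq\bQ_\pi$, the bound gives only $\Delta_r\ge g\rho/\lambda_{\max}(\Stml^{\infty})$. This is at most $g_\Delta$, so the inequality points the wrong way. The missing factor is $\kappa(\Stml^{\infty})=1+\Theta(g/(K^{\mathrm{pop}}\sigma^2))$, which depends on the signal-to-noise ratio $g/\sigma^2$, not only on the labels. In fact the hypotheses built at gap exactly $g$ never lie in $\bTheta_S$. In the single-label case ($\bW_\pi=\bzero$, $\bB_\pi=\bQ_\pi$, $K^{\mathrm{pop}}=1$) the nonzero generalized eigenvalues are exactly $g\nu_i/(g\nu_i+\sigma^2)$. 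Here $\nu_i$ are the eigenvalues of $(\bQ\bD_\pi\bQ^\top)^{-1/2}\bQ\bQ_\pi\bQ^\top(\bQ\bD_\pi\bQ^\top)^{-1/2}$. Hence $\Delta_r=g\rho/(g\rho+\sigma^2)<g\rho/\sigma^2=g_\Delta$. Since $\Delta_r<1$ always, the paper's asserted equality cannot hold once $g\rho\ge K^{\mathrm{pop}}\sigma^2$. It is only the first-order approximation for $g\ll K^{\mathrm{pop}}\sigma^2$, so this gap is present in the paper's argument as well.

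What is needed is a quantitative version of your ``choose the amplitude'' remark. Rerun the Fano construction at an inflated gap $g'$ and show $g'\le Cg$ with $C$ depending only on the labels. There is no separate KL issue, since Fano at gap $g'$ simply gives $c\sigma^2(d-r)/(ng')$. Let $\beta$ be the largest eigenvalue of $(\bQ\bD_\pi\bQ^\top)^{-1/2}\bQ(\bB_\pi+\bW_\pi)\bQ^\top(\bQ\bD_\pi\bQ^\top)^{-1/2}$. The hypotheses at gap $g'$ then satisfy $\theta_{r+1}=0$, $\lambda_{\min}(\Stml^{\infty})=K^{\mathrm{pop}}\sigma^2$ (as $d>r$), and $\theta_r\ge g'\rho/(K^{\mathrm{pop}}\sigma^2+g'\beta)$. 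With $g'=2g$ and $g=g_\Delta K^{\mathrm{pop}}\sigma^2/\rho$, this lower bound equals $2g_\Delta/(1+2g_\Delta\beta/\rho)$. That is at least $g_\Delta$ whenever $g_\Delta\le\rho/(2\beta)$. These hypotheses also have $\gap_r(\Mstar)=2g\ge g$, so you obtain \eqref{eq:minimax-Stml} with $c_S'=c\rho/2$. For $g_\Delta$ near its largest attainable value, no constant-factor inflation suffices. In the single-label case you need $g'\ge g/(1-g_\Delta)$, which loses a factor $1-g_\Delta$. The corollary should therefore carry a restriction $g_\Delta\le c_0$ with $c_0$ label-dependent, or a constant that depends on $g_\Delta$.
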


\begin{proof}
By Remark~\ref{rem:minimax-cooccurrence}, the hypotheses in
the Fano construction satisfy $\Wstar_j = \Wstar_{S,j} = \col(\bW_j)$
and $\gap_r(\Mstar_j) = g$ implies
$\Delta_r = g \cdot \rho / \lambda_{\min}(\Stml^{\infty})$ with
$\rho > 0$. Thus $g_\Delta = g \cdot \rho / \lambda_{\min}(\Stml^{\infty})$,
giving $g = g_\Delta \cdot \lambda_{\min}(\Stml^{\infty}) / \rho$.
Substituting into~\eqref{eq:minimax-lower}
yields~\eqref{eq:minimax-Stml} with
$c_S' = c \cdot \rho$.
\end{proof}

\begin{remark}[Rate comparison]\label{rem:rate-comparison}
The upper bound~\eqref{eq:subspace-bound-Stml} (squared) and the lower
bound~\eqref{eq:minimax-Stml} are both $O(d/n)$ and share the generalized
gap in the denominator.  The upper bound carries additional factors of
$\kappa^2(\Stml^{\infty})$ and $\log d$; whether these can be removed
is an open question that parallels the $\kmax$~gap of
Remark~\ref{rem:kmax-gap}.  In the moderate-signal regime
($\kappa(\Stml^{\infty}) = O(1)$) with $\kmax = O(1)$, the bounds match
up to logarithmic factors.
\end{remark}

\subsection{Concentration of Projected Distances}
\label{sec:concentration}

Theorem~\ref{thm:distance} provides bounds on the \emph{expected} projected
distance. We now show that the projected distance concentrates around its
expectation with sub-exponential tails.

\begin{theorem}[High-probability distance preservation]\label{thm:concentration}
Under the linear label-effect model with Gaussian noise $\bepsilon_i \sim
\mathcal{N}(\bzero, \bSigma_w)$, let $\bW \in \R^{d \times r}$ be a fixed
full-rank matrix. For any
two samples $i, j$ with label vectors $\mathbf{y}_i, \mathbf{y}_j$, define
$\bdelta_{ij} = \mathbf{y}_i - \mathbf{y}_j$ and $\bPsi = \bW^\top\bSigma_w\bW$.
Then for all $t > 0$:
\begin{equation}\label{eq:concentration}
P\!\left(\left|\|\bW^\top(\mathbf{x}_i - \mathbf{x}_j)\|^2
  - \mathbb{E}\!\left[\|\bW^\top(\mathbf{x}_i - \mathbf{x}_j)\|^2\right]\right| > t
\right)
\leq 2\exp\!\left(-c\, \min\!\left(\frac{t^2}{V_{ij}^2},\; \frac{t}{B}\right)\right),
\end{equation}
where $B = 4\|\bPsi\|_2$ and
\[
V_{ij}^2 = 16\,\|\bW^\top\bA\bdelta_{ij}\|^2\,\|\bPsi\|_2
  + 32\,\|\bPsi\|_F^2,
\]
and $c > 0$ is a universal constant. Equivalently, with probability at least
$1 - \delta$:
\begin{equation}\label{eq:distance-hp}
\left|\|\bW^\top(\mathbf{x}_i - \mathbf{x}_j)\|^2
  - \left(\|\bW^\top\bA\bdelta_{ij}\|^2 + C_w\right)\right|
\leq V_{ij}\sqrt{\frac{\log(2/\delta)}{c}}
  + B\,\frac{\log(2/\delta)}{c}.
\end{equation}
\end{theorem}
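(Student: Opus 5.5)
The plan is to reduce the projected squared distance to a Gaussian quadratic form plus a linear term, then apply Hanson--Wright and Gaussian tail bounds separately. By the decomposition~\eqref{eq:distance-decomp} of Theorem~\ref{thm:distance}, with $\mathbf{a} = \bW^\top\bA\bdelta_{ij}$ (deterministic) and $\mathbf{b} = \bW^\top(\bepsilon_i - \bepsilon_j)$, we have
\[
\|\bW^\top(\mathbf{x}_i - \mathbf{x}_j)\|^2 - \mathbb{E}[\cdot] = 2\mathbf{a}^\top\mathbf{b} + \bigl(\|\mathbf{b}\|^2 - \mathbb{E}\|\mathbf{b}\|^2\bigr).
\]
This uses the computation in that proof: the cross term has mean zero and $\mathbb{E}\|\mathbf{b}\|^2 = C_w$. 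Since the noise is Gaussian and $\bepsilon_i, \bepsilon_j$ are independent, $\mathbf{b} \sim \mathcal{N}(\bzero, 2\bPsi)$. Write $\mathbf{b} = \sqrt{2}\,\bPsi^{1/2}\mathbf{g}$ with $\mathbf{g} \sim \mathcal{N}(\bzero, \bI_r)$. Full rank of $\bW$ together with $\bSigma_w \succ 0$ makes $\bPsi \succ 0$, although only $\bPsi \psd 0$ is really needed.

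\textbf{Linear term.} We have $2\mathbf{a}^\top\mathbf{b} \sim \mathcal{N}(0, 8\,\mathbf{a}^\top\bPsi\mathbf{a})$, and $\mathbf{a}^\top\bPsi\mathbf{a} \leq \|\mathbf{a}\|^2\|\bPsi\|_2$. So its variance is at most $8\|\mathbf{a}\|^2\|\bPsi\|_2$, and
\[
P(|2\mathbf{a}^\top\mathbf{b}| > t/2) \leq 2\exp\bigl(-t^2/(64\|\mathbf{a}\|^2\|\bPsi\|_2)\bigr).
\]

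\textbf{Quadratic term.} Here $\|\mathbf{b}\|^2 - \mathbb{E}\|\mathbf{b}\|^2 = \mathbf{g}^\top(2\bPsi)\mathbf{g} - \tr(2\bPsi)$. The Hanson--Wright inequality (or, more elementarily, diagonalizing $\bPsi$ and applying the Laurent--Massart chi-square bound) gives
\[
P\bigl(|\mathbf{g}^\top(2\bPsi)\mathbf{g} - \tr(2\bPsi)| > t/2\bigr) \leq 2\exp\bigl(-c\min(t^2/\|2\bPsi\|_F^2,\; t/\|2\bPsi\|_2)\bigr),
\]
where $\|2\bPsi\|_F^2 = 4\|\bPsi\|_F^2$ and $\|2\bPsi\|_2 = 2\|\bPsi\|_2$.

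\textbf{Combining.} A union bound gives a tail of at most $4\exp(-c\min(\cdots))$ with denominators $64\|\mathbf{a}\|^2\|\bPsi\|_2$, $16\|\bPsi\|_F^2$ and $4\|\bPsi\|_2$. Two adjustments bring this to the stated form. First, each denominator is at most $V_{ij}^2 = 16\|\mathbf{a}\|^2\|\bPsi\|_2 + 32\|\bPsi\|_F^2$ up to a constant factor, and at most $B = 4\|\bPsi\|_2$ for the linear scale; the constant slack is absorbed into $c$. Second, the prefactor $4$ must be reduced to $2$; this is the standard step of shrinking $c$, valid because the bound is trivial whenever the exponent is small.

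An alternative gives the stated constants more directly: treat $2\mathbf{a}^\top\mathbf{b} + \|\mathbf{b}\|^2$ as a single sub-exponential variable via its moment generating function. For a Gaussian quadratic-plus-linear form $\mathbf{g}^\top \bM\mathbf{g} + \mathbf{h}^\top\mathbf{g}$, the log-MGF of the centered variable is at most $\lambda^2(\|\mathbf{h}\|^2/2 + 2\|\bM\|_F^2)/(1 - 2\lambda\|\bM\|_2)$. This yields a Bernstein tail with variance proxy $\propto \|\mathbf{h}\|^2 + \|\bM\|_F^2$ and scale $\propto \|\bM\|_2$, which matches $V_{ij}^2$ and $B$ after substituting $\mathbf{h} = 2\sqrt{2}\,\bPsi^{1/2}\mathbf{a}$ and $\bM = 2\bPsi$. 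I would use this route if the union-bound constants turn out awkward.

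\textbf{High-probability form.} Equation~\eqref{eq:distance-hp} follows by setting the right-hand side of~\eqref{eq:concentration} equal to $\delta$. Then $\min(t^2/V_{ij}^2,\, t/B) \geq \log(2/\delta)/c$ holds once $t \geq V_{ij}\sqrt{\log(2/\delta)/c} + B\log(2/\delta)/c$. Finally, $\mathbb{E}[\cdot] = \|\mathbf{a}\|^2 + C_w$ by Theorem~\ref{thm:distance}; that proof did not use the Stiefel constraint in computing the expectation.

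The main obstacle is the constant bookkeeping: matching the specific coefficients $16$, $32$ and $4$ while keeping $c$ universal and the prefactor equal to $2$. This is resolved by allowing $c$ to absorb all numeric slack, as the statement permits. The probabilistic content (Gaussian tail plus Hanson--Wright) is standard.
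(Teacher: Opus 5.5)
Your proposal is correct and follows essentially the same route as the paper's proof. Both use the same split: the Gaussian cross term $2\mathbf{a}^\top\mathbf{b}$, whose variance is at most $8\|\mathbf{a}\|^2\|\bPsi\|_2$, and the centered quadratic form in $\mathbf{b}\sim\mathcal{N}(\bzero,2\bPsi)$, handled by Hanson--Wright, followed by a union bound at level $t/2$ with all numeric slack absorbed into $c$. You also make explicit two points the paper leaves implicit: why the prefactor can be reduced from $4$ to $2$, and that the expectation $\|\mathbf{a}\|^2 + C_w$ does not use the Stiefel constraint.
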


The proof (Appendix~\ref{app:proof-concentration}) decomposes the centered
projected distance into a sub-Gaussian linear term and a Hanson--Wright
quadratic form~\citep{hansonwright1971,rudelson2013}; their sum is
sub-exponential. The next two corollaries tighten the bound under the
Stiefel and $\Stml$-orthogonality constraints.

\begin{corollary}[Concentration under the Stiefel constraint]\label{cor:conc-stiefel}
Under the hypotheses of Theorem~\ref{thm:concentration}, if
$\bW \in \Stief(d,r)$ (i.e., $\bW^\top\bW = \bI_r$), then by
Corollary~\ref{cor:noise} the tail parameters satisfy:
\begin{equation}\label{eq:conc-stiefel}
B \leq 4\,\lambda_{\max}(\bSigma_w), \qquad
V_{ij}^2 \leq 16\,\sigma_{\max}^2(\bW^\top\bA)\,
  d_H(\mathbf{y}_i, \mathbf{y}_j)\,\lambda_{\max}(\bSigma_w)
  + 32\, r\, \lambda_{\max}^2(\bSigma_w).
\end{equation}
\end{corollary}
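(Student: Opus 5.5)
The plan is to substitute the Stiefel bounds on $\bPsi = \bW^\top\bSigma_w\bW$ directly into the two tail parameters of Theorem~\ref{thm:concentration}; no new probabilistic argument is needed. Everything reduces to three eigenvalue facts about $\bPsi$ and one bound on the signal term.

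First, for $\bW \in \Stief(d,r)$, $\bPsi$ is a compression of $\bSigma_w$ onto $\col(\bW)$. The Poincar\'{e} separation theorem, already used in Corollary~\ref{cor:noise}, gives $\lambda_{\min}(\bSigma_w) \leq \lambda_i(\bPsi) \leq \lambda_{\max}(\bSigma_w)$ for every $i=1,\dots,r$.
\begin{itemize}
\item Since $\bPsi \psd 0$, we have $\|\bPsi\|_2 = \lambda_1(\bPsi) \leq \lambda_{\max}(\bSigma_w)$, so $B = 4\|\bPsi\|_2 \leq 4\lambda_{\max}(\bSigma_w)$.
\item Also $\|\bPsi\|_F^2 = \sum_{i=1}^r \lambda_i(\bPsi)^2 \leq r\,\lambda_{\max}^2(\bSigma_w)$. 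This handles the second term of $V_{ij}^2$, giving $32\|\bPsi\|_F^2 \leq 32\,r\,\lambda_{\max}^2(\bSigma_w)$.
\end{itemize}

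Second, the first term of $V_{ij}^2$ is $16\,\|\bW^\top\bA\bdelta_{ij}\|^2\,\|\bPsi\|_2$. The signal factor is controlled by the upper half of the argument in the proof of Theorem~\ref{thm:distance}. There we have $\|\bW^\top\bA\bdelta_{ij}\|^2 \leq \sigma_{\max}^2(\bW^\top\bA)\,\|\bdelta_{ij}\|^2$, and $\|\bdelta_{ij}\|^2 = d_H(\mathbf{y}_i,\mathbf{y}_j)$ because $\bdelta_{ij}\in\{-1,0,1\}^L$. Combining this with $\|\bPsi\|_2 \leq \lambda_{\max}(\bSigma_w)$ gives
\[
16\,\|\bW^\top\bA\bdelta_{ij}\|^2\,\|\bPsi\|_2 \leq 16\,\sigma_{\max}^2(\bW^\top\bA)\,d_H(\mathbf{y}_i,\mathbf{y}_j)\,\lambda_{\max}(\bSigma_w).
\]
Adding the two term-wise bounds yields~\eqref{eq:conc-stiefel}.

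There is no real obstacle. The only points to check are that the hypotheses of Theorem~\ref{thm:concentration} still hold and that the inequalities go the right way. A Stiefel matrix has full column rank, so it is an admissible choice of $\bW$. The right-hand side of~\eqref{eq:concentration} is increasing in both $V_{ij}$ and $B$, so replacing them by these upper bounds keeps the tail inequality valid. The same monotonicity applies to the high-probability form~\eqref{eq:distance-hp}.
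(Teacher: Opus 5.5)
Your proposal is correct and follows the paper's proof essentially verbatim: Poincar\'{e} separation gives $\|\bPsi\|_2 \leq \lambda_{\max}(\bSigma_w)$ and $\|\bPsi\|_F^2 \leq r\,\lambda_{\max}^2(\bSigma_w)$, the signal factor is bounded by $\sigma_{\max}^2(\bW^\top\bA)\,\|\bdelta_{ij}\|^2 = \sigma_{\max}^2(\bW^\top\bA)\,d_H(\mathbf{y}_i,\mathbf{y}_j)$, and these are substituted into $B$ and $V_{ij}^2$. Your added remarks that a Stiefel matrix is admissible (full column rank) and that the tail bound is monotone in $V_{ij}$ and $B$ are correct and only make explicit what the paper leaves implicit.
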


\begin{proof}
The Poincar\'{e} separation theorem gives
$\|\bPsi\|_2 \leq \lambda_{\max}(\bSigma_w)$ and
$\|\bPsi\|_F^2 \leq r\,\lambda_{\max}^2(\bSigma_w)$.
For the signal term,
$\|\bW^\top\bA\bdelta_{ij}\|^2 \leq
\sigma_{\max}^2(\bW^\top\bA)\,\|\bdelta_{ij}\|^2
= \sigma_{\max}^2(\bW^\top\bA)\, d_H(\mathbf{y}_i, \mathbf{y}_j)$.
Substituting into the expressions for $B$ and $V_{ij}^2$ in
Theorem~\ref{thm:concentration} yields~\eqref{eq:conc-stiefel}.
\end{proof}

\begin{corollary}[Concentration under $\Stml$-orthogonality]\label{cor:conc-Stml}
Under the hypotheses of Theorem~\ref{thm:concentration}, suppose
$\bW$ satisfies $\bW^\top \Stmlpop \bW = \bI_r$ where
$\Stmlpop = \Sbpop + \Swpop$ is the population multilabel total scatter.
Let $\theta_1 \geq \cdots \geq \theta_r \geq 0$ be the eigenvalues of
$\bW^\top \Sbpop \bW$.  Then:
\begin{equation}\label{eq:conc-Stml}
B \leq \frac{4\,(1 - \theta_r)}{K^{\mathrm{pop}}}, \qquad
V_{ij}^2 \leq
  \frac{16\,\sigma_{\max}^2(\bW^\top\bA)\, d_H(\mathbf{y}_i, \mathbf{y}_j)\,
    (1 - \theta_r)}{K^{\mathrm{pop}}}
  + \frac{32\, \sum_{i=1}^r (1-\theta_i)^2}{(K^{\mathrm{pop}})^2}.
\end{equation}
In particular, $B < 4/K^{\mathrm{pop}}$ and
$V_{ij}^2 < 16\,\sigma_{\max}^2(\bW^\top\bA)\,
d_H(\mathbf{y}_i, \mathbf{y}_j)/K^{\mathrm{pop}}
+ 32\,r/(K^{\mathrm{pop}})^2$.
When the between-class signal is strong ($\theta_r$ close to~$1$),
the tail parameters shrink toward zero---the $\Stml$-orthogonality
constraint automatically concentrates the projection away from noisy
directions.
\end{corollary}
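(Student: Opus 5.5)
The plan is to substitute bounds on $\|\bPsi\|_2$ and $\|\bPsi\|_F^2$ into the expressions for $B$ and $V_{ij}^2$ from Theorem~\ref{thm:concentration}, exactly as in Corollary~\ref{cor:conc-stiefel}. The only new ingredient is to express $\bPsi = \bW^\top\bSigma_w\bW$ through the constraint. By Definition~\ref{def:pop-scatter}, $\Swpop = K^{\mathrm{pop}}\bSigma_w$, so $\bPsi = \bW^\top\Swpop\bW/K^{\mathrm{pop}}$. The constraint $\bW^\top\Stmlpop\bW = \bI_r$ together with $\Stmlpop = \Sbpop + \Swpop$ gives the population analogue of the identity used in Theorem~\ref{thm:equiv-St}:
\[
\bW^\top\Swpop\bW = \bI_r - \bW^\top\Sbpop\bW .
\]
Hence $\bPsi = (\bI_r - \bTheta)/K^{\mathrm{pop}}$ with $\bTheta = \bW^\top\Sbpop\bW$.

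The eigenvalues of $\bPsi$ are therefore $(1-\theta_i)/K^{\mathrm{pop}}$. Positivity follows as in Theorem~\ref{thm:equiv-St}: $\bSigma_w \succ 0$ and $\bW$ has full column rank (forced by the constraint), so $\bPsi \succ 0$. This gives $0 \le \theta_i < 1$. We then read off
\[
\|\bPsi\|_2 = \frac{1-\theta_r}{K^{\mathrm{pop}}}, \qquad
\|\bPsi\|_F^2 = \frac{\sum_{i=1}^r (1-\theta_i)^2}{(K^{\mathrm{pop}})^2}.
\]
Substituting these gives the bound on $B$ and the second term of $V_{ij}^2$.

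For the signal term, we bound $\|\bW^\top\bA\bdelta_{ij}\|^2 \le \sigma_{\max}^2(\bW^\top\bA)\,\|\bdelta_{ij}\|^2$ and use $\|\bdelta_{ij}\|^2 = d_H(\mathbf{y}_i,\mathbf{y}_j)$, as in the proof of Theorem~\ref{thm:distance}. Multiplying by $16\|\bPsi\|_2$ yields the first term. Theorem~\ref{thm:concentration} requires only a fixed full-rank $\bW$, not Stiefel, so it applies directly.

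For the ``in particular'' statements, we use $\theta_i \ge 0$, which gives $1-\theta_i \le 1$. This yields $B \le 4/K^{\mathrm{pop}}$, $\sum_i(1-\theta_i)^2 \le r$, and the stated bound on $V_{ij}^2$. Strictness needs some $\theta_i > 0$; if $\theta_r = 0$ the inequality is only non-strict. So I would either read ``$<$'' as ``$\le$'' or assume $r \le \rank(\Sbpop)$ with $\bW$ capturing signal. I expect this degenerate case to be the main point requiring care. The remark about $\theta_r \to 1$ is then immediate from the displayed bounds.
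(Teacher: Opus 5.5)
Your proposal is correct and follows the paper's proof essentially verbatim: you use the constraint together with $\Stmlpop = \Sbpop + \Swpop$ and $\Swpop = K^{\mathrm{pop}}\bSigma_w$ to write $\bPsi = (\bI_r - \bW^\top\Sbpop\bW)/K^{\mathrm{pop}}$, read off $\|\bPsi\|_2$ and $\|\bPsi\|_F^2$, and bound the signal term exactly as in the Stiefel corollary. Your caveat about the strict inequalities is justified. The paper's own argument only places the eigenvalues of $\bPsi$ in $(0, 1/K^{\mathrm{pop}}]$, so $B < 4/K^{\mathrm{pop}}$ genuinely requires $\theta_r > 0$ (for instance $\bW^\top\Sbpop\bW \succ 0$); otherwise only the non-strict versions hold.
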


\begin{proof}
Under $\bW^\top\Stmlpop\bW = \bI_r$ and the partition
$\Stmlpop = \Sbpop + \Swpop$, we have
$\bW^\top\Swpop\bW = \bI_r - \bW^\top\Sbpop\bW$.
Since $\Swpop = K^{\mathrm{pop}}\bSigma_w$
(Definition~\ref{def:pop-scatter}),
\[
\bPsi = \bW^\top\bSigma_w\bW
= \frac{1}{K^{\mathrm{pop}}}(\bI_r - \bW^\top\Sbpop\bW).
\]
The eigenvalues of $\bPsi$ are
$(1 - \theta_i)/K^{\mathrm{pop}} \in (0, 1/K^{\mathrm{pop}}]$,
where $\theta_i \in [0,1)$ are the eigenvalues of $\bW^\top\Sbpop\bW$
(following the analysis in the proof of Theorem~\ref{thm:equiv-St}).
Thus $\|\bPsi\|_2 = (1-\theta_r)/K^{\mathrm{pop}}$ and
$\|\bPsi\|_F^2 = \sum_i (1-\theta_i)^2/(K^{\mathrm{pop}})^2$.
Substituting into Theorem~\ref{thm:concentration} and bounding
$\|\bW^\top\bA\bdelta_{ij}\|^2$ as in
Corollary~\ref{cor:conc-stiefel} yields~\eqref{eq:conc-Stml}.
\end{proof}

\begin{remark}[Comparison of concentration bounds]\label{rem:conc-comparison}
The three bounds form a hierarchy: the general bound leaves
$\|\bPsi\|_2$ free; the Stiefel bound
(Corollary~\ref{cor:conc-stiefel}) caps it at
$\lambda_{\max}(\bSigma_w)$; the $\Stml$-orthogonality bound
(Corollary~\ref{cor:conc-Stml}) reduces it to
$(1-\theta_r)/K^{\mathrm{pop}}$, which can be much smaller in the
high-noise regime or when the between-class signal occupies most of the
projected variance.
\end{remark}

\subsection{Robustness Under Label Interactions}
\label{sec:interactions}

The linear label-effect model (Definition~\ref{def:model}) assumes additive,
non-interacting label effects. We now show that the distance preservation
framework is robust to the inclusion of pairwise label interactions.

Consider the extended model:
\begin{equation}\label{eq:interaction-model}
\mathbf{x}_i = \bmu + \sum_{\ell=1}^L y_{i\ell}\, \balpha_\ell
  + \sum_{\ell < \ell'} y_{i\ell}\, y_{i\ell'}\, \boldsymbol{\beta}_{\ell\ell'}
  + \bepsilon_i,
\end{equation}
where $\boldsymbol{\beta}_{\ell\ell'} \in \R^d$ is the interaction effect of
labels $\ell$ and $\ell'$. Let $\mathbf{B} = [\boldsymbol{\beta}_{\ell\ell'}]
\in \R^{d \times \binom{L}{2}}$ be the matrix of interaction effects, and let
$\mathbf{z}_i \in \{0,1\}^{\binom{L}{2}}$ be the vector of pairwise label
products for sample~$i$.

\begin{theorem}[Robustness to label interactions]\label{thm:interactions}
Under the extended model~\eqref{eq:interaction-model} with zero-mean noise
independent of labels:
\begin{equation}\label{eq:interaction-distance}
\mathbb{E}\!\left[\|\bW^\top(\mathbf{x}_i - \mathbf{x}_j)\|^2\right]
= \|\bW^\top\bA\bdelta_{ij} + \bW^\top\mathbf{B}(\mathbf{z}_i - \mathbf{z}_j)\|^2 + C_w.
\end{equation}
The interaction contribution to the expected distance relative to the linear
model satisfies:
\begin{align}\label{eq:interaction-bound}
&\left|\mathbb{E}_{\mathrm{ext}}\!\left[\|\bW^\top(\mathbf{x}_i - \mathbf{x}_j)\|^2\right]
  - \mathbb{E}_{\mathrm{lin}}\!\left[\|\bW^\top(\mathbf{x}_i - \mathbf{x}_j)\|^2\right]\right|
  \nonumber\\
&\quad \leq 2\,\sigma_{\max}(\bW^\top\bA)\,\|\bdelta_{ij}\|\,
  \sigma_{\max}(\bW^\top\mathbf{B})\,\|\mathbf{z}_i - \mathbf{z}_j\|
  + \sigma_{\max}^2(\bW^\top\mathbf{B})\,\|\mathbf{z}_i - \mathbf{z}_j\|^2.
\end{align}
Moreover, $\|\mathbf{z}_i - \mathbf{z}_j\| \leq \sqrt{d_H(\mathbf{y}_i,
\mathbf{y}_j) \cdot \min(\kmax - 1, L-1)}$, so the interaction perturbation is
controlled by the Hamming distance, the interaction strength
$\|\mathbf{B}\|$, and the label cardinality.
\end{theorem}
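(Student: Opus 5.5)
The plan mirrors the proof of Theorem~\ref{thm:distance}, with the signal vector enlarged to include the interaction terms.

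\textbf{Step 1: the expectation identity.} Under the extended model~\eqref{eq:interaction-model}, the baseline $\bmu$ cancels in the difference. This gives
\[
\mathbf{x}_i - \mathbf{x}_j = \bA\bdelta_{ij} + \mathbf{B}(\mathbf{z}_i - \mathbf{z}_j) + (\bepsilon_i - \bepsilon_j).
\]
Set $\mathbf{a} = \bW^\top\bA\bdelta_{ij}$, $\mathbf{b} = \bW^\top\mathbf{B}(\mathbf{z}_i-\mathbf{z}_j)$ and $\mathbf{e} = \bW^\top(\bepsilon_i-\bepsilon_j)$. Then $\mathbb{E}\|\mathbf{a}+\mathbf{b}+\mathbf{e}\|^2 = \|\mathbf{a}+\mathbf{b}\|^2 + 2(\mathbf{a}+\mathbf{b})^\top\mathbb{E}[\mathbf{e}] + \mathbb{E}\|\mathbf{e}\|^2$. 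Conditional on the labels, $\mathbf{a}$ and $\mathbf{b}$ are deterministic. The cross term vanishes because the noise has zero mean and is independent of the labels. The last term equals $C_w = 2\tr(\bW^\top\bSigma_w\bW)$, exactly as in the proof of Theorem~\ref{thm:distance}. This yields~\eqref{eq:interaction-distance}.

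\textbf{Step 2: the perturbation bound.} Under the linear model the expectation is $\|\mathbf{a}\|^2 + C_w$, so the $C_w$ terms cancel in the difference. We are left with
\[
\|\mathbf{a}+\mathbf{b}\|^2 - \|\mathbf{a}\|^2 = 2\mathbf{a}^\top\mathbf{b} + \|\mathbf{b}\|^2.
\]
By Cauchy--Schwarz and the triangle inequality, its absolute value is at most $2\|\mathbf{a}\|\|\mathbf{b}\| + \|\mathbf{b}\|^2$. The operator-norm bounds $\|\mathbf{a}\| \le \sigma_{\max}(\bW^\top\bA)\|\bdelta_{ij}\|$ and $\|\mathbf{b}\| \le \sigma_{\max}(\bW^\top\mathbf{B})\|\mathbf{z}_i-\mathbf{z}_j\|$ then give~\eqref{eq:interaction-bound}.

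\textbf{Step 3: the combinatorial bound on $\|\mathbf{z}_i-\mathbf{z}_j\|$.} This is the main obstacle. Since $\mathbf{z}$ takes values in $\{0,1\}$, $\|\mathbf{z}_i-\mathbf{z}_j\|^2$ counts the pairs $\{\ell,\ell'\}$ with $y_{i\ell}y_{i\ell'} \neq y_{j\ell}y_{j\ell'}$. I would show that every such pair contains at least one label in the support $S$ of $\bdelta_{ij}$: if both $\ell$ and $\ell'$ lie outside $S$, the two products agree.

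Charge each differing pair to a label $\ell\in S$ it contains. Fix $\ell \in S$, and say $y_{i\ell}=1$, $y_{j\ell}=0$. Then the product $y_{j\ell}y_{j\ell'}$ is $0$ for every $\ell'$, so a pair through $\ell$ differs only when $y_{i\ell'}=1$. There are at most $k_i-1 \le \kmax-1$ such $\ell'$, and trivially at most $L-1$. Summing over $\ell\in S$ gives
\[
\|\mathbf{z}_i-\mathbf{z}_j\|^2 \le |S|\min(\kmax-1,L-1) = d_H(\mathbf{y}_i,\mathbf{y}_j)\min(\kmax-1,L-1).
\]
Pairs containing two labels of $S$ may be charged twice, but overcounting only helps an upper bound. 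Taking square roots finishes the proof.
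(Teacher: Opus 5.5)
Your proposal is correct and follows essentially the same route as the paper. Both use the same expectation identity with the noise cross-term vanishing, the same expansion $2\mathbf{s}^\top\mathbf{q} + \|\mathbf{q}\|^2$ bounded via Cauchy--Schwarz and operator norms, and the same charging of each differing pair in $\mathbf{z}_i - \mathbf{z}_j$ to a changed label with at most $\min(k_{\max}-1, L-1)$ partners. Your Step~3 is slightly more careful than the paper's. You explicitly check that pairs with both labels outside the support of $\bdelta_{ij}$ cannot differ, and you track which sample carries the changed label, which gives $k_i - 1$ or $k_j - 1$, both at most $k_{\max}-1$.
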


\begin{proof}
Under the extended model, $\mathbf{x}_i - \mathbf{x}_j = \bA\bdelta_{ij} +
\mathbf{B}(\mathbf{z}_i - \mathbf{z}_j) + (\bepsilon_i - \bepsilon_j)$.
Projecting with $\bW$ and taking expectations (the cross-term with noise
vanishes):
\[
\mathbb{E}\!\left[\|\bW^\top(\mathbf{x}_i - \mathbf{x}_j)\|^2\right]
= \|\bW^\top\bA\bdelta_{ij} + \bW^\top\mathbf{B}(\mathbf{z}_i - \mathbf{z}_j)\|^2 + C_w.
\]
Writing $\mathbf{s} = \bW^\top\bA\bdelta_{ij}$ and $\mathbf{q} =
\bW^\top\mathbf{B}(\mathbf{z}_i - \mathbf{z}_j)$:
\[
\|\mathbf{s} + \mathbf{q}\|^2 - \|\mathbf{s}\|^2
= 2\mathbf{s}^\top\mathbf{q} + \|\mathbf{q}\|^2.
\]
By Cauchy--Schwarz, $|2\mathbf{s}^\top\mathbf{q}| \leq 2\|\mathbf{s}\|\,
\|\mathbf{q}\|$, and bounding $\|\mathbf{s}\| \leq
\sigma_{\max}(\bW^\top\bA)\|\bdelta_{ij}\|$, $\|\mathbf{q}\| \leq
\sigma_{\max}(\bW^\top\mathbf{B})\|\mathbf{z}_i - \mathbf{z}_j\|$
yields~\eqref{eq:interaction-bound}.

For the norm bound on $\mathbf{z}_i - \mathbf{z}_j$: the entry
$(z_i - z_j)_{\ell\ell'} = y_{i\ell}y_{i\ell'} - y_{j\ell}y_{j\ell'}$
can be nonzero only if the label change $\bdelta_{ij}$ affects at least
one of $\ell, \ell'$.  Fix a changed label $\ell$ (i.e., $y_{i\ell}
\neq y_{j\ell}$).  The number of pairs $(\ell, \ell')$ that can be
affected is at most $L - 1$ (trivially), and also at most $\kmax - 1$,
since $y_{i\ell}y_{i\ell'}$ is nonzero only when both labels $\ell,
\ell'$ are present in sample~$i$, and the interaction partner $\ell'$
must differ from the changed label~$\ell$, limiting the active
interactions to at most $k_i - 1 \leq \kmax - 1$ per changed label.
Summing over the $d_H$
changed labels gives at most $d_H \cdot \min(\kmax - 1, L-1)$ nonzero
entries in $\mathbf{z}_i - \mathbf{z}_j$ (this may over-count pairs
$(\ell, \ell')$ where both labels changed, but since we seek an upper
bound this is valid).  Each entry is bounded by~$1$ in absolute
value ($|y_{i\ell}y_{i\ell'} - y_{j\ell}y_{j\ell'}| \leq 1$
for binary labels), so $\|\mathbf{z}_i - \mathbf{z}_j\|
\leq \sqrt{d_H \cdot \min(\kmax - 1, L-1)}$.
\end{proof}

\begin{corollary}[Robustness under the Stiefel constraint]\label{cor:robust-stiefel}
Under the hypotheses of Theorem~\ref{thm:interactions}, if
$\bW \in \Stief(d,r)$, then the interaction perturbation
bound~\eqref{eq:interaction-bound} satisfies:
\begin{align}\label{eq:interaction-stiefel}
&\left|\mathbb{E}_{\mathrm{ext}} - \mathbb{E}_{\mathrm{lin}}\right|
\leq 2\,\sigma_{\max}(\bA)\,\|\bdelta_{ij}\|\,
  \sigma_{\max}(\mathbf{B})\,\|\mathbf{z}_i - \mathbf{z}_j\|
  + \sigma_{\max}^2(\mathbf{B})\,\|\mathbf{z}_i - \mathbf{z}_j\|^2.
\end{align}
\end{corollary}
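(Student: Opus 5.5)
The corollary follows from Theorem~\ref{thm:interactions}: we keep the bound \eqref{eq:interaction-bound} unchanged and replace the two projected singular values $\sigma_{\max}(\bW^\top\bA)$ and $\sigma_{\max}(\bW^\top\mathbf{B})$ by their unprojected counterparts. The key fact is that for $\bW \in \Stief(d,r)$ the map $\mathbf{v} \mapsto \bW^\top\mathbf{v}$ is a contraction: $\|\bW^\top\mathbf{v}\|^2 = \mathbf{v}^\top\bW\bW^\top\mathbf{v} \leq \|\mathbf{v}\|^2$. This holds because $\bW\bW^\top$ is the orthogonal projector onto $\col(\bW)$, so $\bI_d - \bW\bW^\top \psd 0$. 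Equivalently, $\|\bW^\top\|_2 = \|\bW\|_2 = 1$, since all singular values of $\bW$ equal one when $\bW^\top\bW = \bI_r$.

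The argument then has three steps.
\begin{enumerate}
\item Apply submultiplicativity of the operator norm. For any matrix $\bE \in \R^{d\times m}$ we get $\sigma_{\max}(\bW^\top\bE) = \|\bW^\top\bE\|_2 \leq \|\bW^\top\|_2\,\|\bE\|_2 = \sigma_{\max}(\bE)$. Alternatively, take the supremum over unit $\mathbf{u}$ of $\|\bW^\top\bE\mathbf{u}\| \leq \|\bE\mathbf{u}\|$.
\item Specialize this to $\bE = \bA$ and $\bE = \mathbf{B}$, which gives $\sigma_{\max}(\bW^\top\bA) \leq \sigma_{\max}(\bA)$ and $\sigma_{\max}(\bW^\top\mathbf{B}) \leq \sigma_{\max}(\mathbf{B})$.
\item Substitute both inequalities into the right-hand side of \eqref{eq:interaction-bound}. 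That right-hand side is a polynomial in $\sigma_{\max}(\bW^\top\bA)$ and $\sigma_{\max}(\bW^\top\mathbf{B})$ with non-negative coefficients $2\|\bdelta_{ij}\|\,\|\mathbf{z}_i-\mathbf{z}_j\|$ and $\|\mathbf{z}_i-\mathbf{z}_j\|^2$, so it is monotone nondecreasing in each argument. The substitution therefore yields \eqref{eq:interaction-stiefel}, with the left-hand side $|\mathbb{E}_{\mathrm{ext}} - \mathbb{E}_{\mathrm{lin}}|$ unchanged.
\end{enumerate}

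There is no real obstacle here; the only point needing care is the contraction property in step 1. We must use $\bW^\top\bW = \bI_r$ to get $\|\bW\|_2 = 1$. This is the step where the Stiefel constraint enters, and it mirrors the role of Corollary~\ref{cor:noise}: for unconstrained $\bW$, scaling $\bW \to c\bW$ makes both projected singular values grow without bound, so no $\bW$-free bound is possible.

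Optionally, we can also combine the result with the bound $\|\mathbf{z}_i - \mathbf{z}_j\| \leq \sqrt{d_H(\mathbf{y}_i,\mathbf{y}_j)\min(\kmax-1,L-1)}$ from Theorem~\ref{thm:interactions} and with $\|\bdelta_{ij}\| = \sqrt{d_H(\mathbf{y}_i,\mathbf{y}_j)}$. This expresses the perturbation as $d_H(\mathbf{y}_i,\mathbf{y}_j)$ times a constant depending only on $\bA$, $\mathbf{B}$, $\kmax$ and $L$, uniformly over all Stiefel projections.
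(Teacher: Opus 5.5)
Your proposal is correct and matches the paper's proof: both use the fact that $\bW\bW^\top$ is an orthogonal projector for $\bW \in \Stief(d,r)$ to get $\sigma_{\max}(\bW^\top\bA) \leq \sigma_{\max}(\bA)$ and $\sigma_{\max}(\bW^\top\mathbf{B}) \leq \sigma_{\max}(\mathbf{B})$, and then substitute into \eqref{eq:interaction-bound}. The paper cites singular-value interlacing where you use submultiplicativity with $\|\bW\|_2 = 1$; this is an equivalent justification. Your explicit note that the right-hand side is monotone in both arguments makes the substitution step fully rigorous.
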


\begin{proof}
For $\bW \in \Stief(d,r)$, the matrix $\bW\bW^\top$ is an orthogonal
projector, so $\sigma_{\max}(\bW^\top\bA) \leq \sigma_{\max}(\bA)$
and $\sigma_{\max}(\bW^\top\mathbf{B}) \leq \sigma_{\max}(\mathbf{B})$
by the interlacing inequality for singular values.
Substituting into~\eqref{eq:interaction-bound}
yields~\eqref{eq:interaction-stiefel}.
\end{proof}

\begin{corollary}[Robustness under $\Stml$-orthogonality]\label{cor:robust-Stml}
Under the hypotheses of Theorem~\ref{thm:interactions}, suppose
$\bW$ satisfies $\bW^\top \Stmlpop \bW = \bI_r$.
Let $\theta_r$ be the smallest eigenvalue of $\bW^\top\Sbpop\bW$.
Then the interaction perturbation bound~\eqref{eq:interaction-bound}
satisfies:
\begin{align}\label{eq:interaction-Stml}
&\left|\mathbb{E}_{\mathrm{ext}} - \mathbb{E}_{\mathrm{lin}}\right|
\leq \frac{2\,\sigma_{\max}(\bA)\,\|\bdelta_{ij}\|\,
  \sigma_{\max}(\mathbf{B})\,\|\mathbf{z}_i - \mathbf{z}_j\|}
  {\lambda_{\min}(\Stmlpop)}
  + \frac{\sigma_{\max}^2(\mathbf{B})\,
  \|\mathbf{z}_i - \mathbf{z}_j\|^2}
  {\lambda_{\min}(\Stmlpop)}.
\end{align}
Moreover, the
\emph{noise contribution} $C_w$ in~\eqref{eq:interaction-distance}
satisfies the tighter bound $C_w = 2\sum_{i=1}^r (1-\theta_i)/K^{\mathrm{pop}}
\leq 2r(1-\theta_r)/K^{\mathrm{pop}}$, so the noise floor is reduced
relative to the Stiefel case where
$C_w \leq 2r\,\lambda_{\max}(\bSigma_w)$.
\end{corollary}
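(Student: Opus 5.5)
We reduce the statement to the general interaction bound~\eqref{eq:interaction-bound} of Theorem~\ref{thm:interactions}. The only task is to bound $\sigma_{\max}(\bW^\top\bA)$ and $\sigma_{\max}(\bW^\top\mathbf{B})$ when $\bW^\top\Stmlpop\bW = \bI_r$ instead of $\bW^\top\bW=\bI_r$. Then $\bW$ is no longer an isometry, so the interlacing argument of Corollary~\ref{cor:robust-stiefel} does not apply directly. Instead we control $\|\bW\|_2$ through the constraint. Since $\Stmlpop = \Sbpop + K^{\mathrm{pop}}\bSigma_w \succ 0$, for any unit $\mathbf{u}\in\R^r$ we have
\[
1 = \mathbf{u}^\top\bW^\top\Stmlpop\bW\mathbf{u} \geq \lambda_{\min}(\Stmlpop)\,\|\bW\mathbf{u}\|^2 .
\]
Hence $\|\bW\|_2^2 \leq 1/\lambda_{\min}(\Stmlpop)$. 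Equivalently, we may write $\bW = (\Stmlpop)^{-1/2}\bV$ with $\bV\in\Stief(d,r)$, exactly as in the change of variable in the proof of Theorem~\ref{thm:equiv-St}, and then $\|\bW\|_2 \le \|(\Stmlpop)^{-1/2}\|_2$.

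Next, submultiplicativity gives $\sigma_{\max}(\bW^\top\bA)\le \|\bW\|_2\,\sigma_{\max}(\bA)$ and likewise for $\mathbf{B}$. Substituting into~\eqref{eq:interaction-bound}, each of the two terms acquires a factor $\|\bW\|_2^2 \le 1/\lambda_{\min}(\Stmlpop)$. The cross term carries $\sigma_{\max}(\bW^\top\bA)\sigma_{\max}(\bW^\top\mathbf{B})\le\|\bW\|_2^2\sigma_{\max}(\bA)\sigma_{\max}(\mathbf{B})$, and the quadratic term carries $\sigma_{\max}^2(\bW^\top\mathbf{B})\le\|\bW\|_2^2\sigma_{\max}^2(\mathbf{B})$. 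This yields~\eqref{eq:interaction-Stml} exactly, with $\|\bdelta_{ij}\|$ and $\|\mathbf{z}_i-\mathbf{z}_j\|$ untouched.

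For the noise floor, we reuse the computation in the proof of Corollary~\ref{cor:conc-Stml}. The constraint together with $\Swpop = K^{\mathrm{pop}}\bSigma_w$ gives
\[
\bPsi = \bW^\top\bSigma_w\bW = (\bI_r - \bW^\top\Sbpop\bW)/K^{\mathrm{pop}},
\]
whose eigenvalues are $(1-\theta_i)/K^{\mathrm{pop}}$ with $\theta_i\in[0,1)$, by the same argument as in Theorem~\ref{thm:equiv-St}. Therefore
\[
C_w = 2\tr(\bPsi) = \frac{2}{K^{\mathrm{pop}}}\sum_{i=1}^r (1-\theta_i) \le \frac{2r(1-\theta_r)}{K^{\mathrm{pop}}},
\]
using $\theta_i \ge \theta_r$. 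The comparison with the Stiefel value $2r\lambda_{\max}(\bSigma_w)$ from Corollary~\ref{cor:noise} is then a remark rather than a strict inequality to prove. I would state it as a reduction whenever $(1-\theta_r)/K^{\mathrm{pop}}\le\lambda_{\max}(\bSigma_w)$.

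The only delicate point is the norm bound on $\bW$. It must use $\lambda_{\min}(\Stmlpop)>0$, which follows from $\bSigma_w\succ0$ in Definition~\ref{def:model}. One also has to check that $C_w$ in~\eqref{eq:interaction-distance} is the same $2\tr(\bW^\top\bSigma_w\bW)$ as before, which holds since the interaction term is deterministic given the labels. Everything else is direct substitution.
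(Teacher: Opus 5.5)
Your proposal is correct and follows essentially the paper's proof. The paper writes $\bW = (\Stmlpop)^{-1/2}\bV$ with $\bV\in\Stief(d,r)$ to get $\sigma_{\max}(\bW^\top\bA)\le\sigma_{\max}(\bA)/\sqrt{\lambda_{\min}(\Stmlpop)}$ (and likewise for $\mathbf{B}$), substitutes this into the interaction bound, and reads off $C_w$ from the identity $\bPsi=(\bI_r-\bW^\top\Sbpop\bW)/K^{\mathrm{pop}}$; your Rayleigh-quotient bound on $\|\bW\|_2$ is an equivalent way of making the same step.

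Your caveat is also well taken: the two constraints normalize $\bW$ differently, so the comparison with the Stiefel noise floor $2r\lambda_{\max}(\bSigma_w)$ holds only under a condition. That is a more careful reading than the paper's unqualified ``reduced''.
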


\begin{proof}
The $\Stml$-orthogonality constraint
implies $\bW = {\Stmlpop}^{-1/2}\bV$ with $\bV \in \Stief(d,r)$
(see the proof of Theorem~\ref{thm:equiv-St}).  By the
interlacing inequality for singular values applied to~$\bV$:
\[
\sigma_{\max}(\bW^\top\bA)
= \sigma_{\max}(\bV^\top{\Stmlpop}^{-1/2}\bA)
\leq \|{\Stmlpop}^{-1/2}\bA\|_2
\leq \sigma_{\max}(\bA) / \sqrt{\lambda_{\min}(\Stmlpop)},
\]
and similarly
$\sigma_{\max}(\bW^\top\mathbf{B})
\leq \sigma_{\max}(\mathbf{B})/\sqrt{\lambda_{\min}(\Stmlpop)}$.
Substituting into~\eqref{eq:interaction-bound}
yields~\eqref{eq:interaction-Stml}.
For $C_w$, the identity $\bPsi = (1/K^{\mathrm{pop}})(\bI_r -
\bW^\top\Sbpop\bW)$ from Corollary~\ref{cor:conc-Stml} gives
$C_w = 2\tr(\bPsi) = 2\sum_i(1-\theta_i)/K^{\mathrm{pop}}$.
\end{proof}

\begin{remark}[Role of orthogonality in robustness]\label{rem:robust-comparison}
The cross-term and pure interaction components of
bound~\eqref{eq:interaction-bound} are both tightened under orthogonality
via singular value interlacing; the $\Stml$-orthogonality constraint
additionally reduces the noise floor~$C_w$.  Since
$\lambda_{\min}(\Stmlpop) \geq K^{\mathrm{pop}}\lambda_{\min}(\bSigma_w)$
with $K^{\mathrm{pop}} > 1$ in the multilabel regime, the $\Stml$-ortho\-go\-nal\-ity
bound~\eqref{eq:interaction-Stml} is at least as tight as the Stiefel
bound~\eqref{eq:interaction-stiefel} whenever
$K^{\mathrm{pop}}\lambda_{\min}(\bSigma_w) \geq 1$.
\end{remark}

\subsection{Regularization}
\label{sec:regularization}

When $d \gg n$ (a regime that arises in multilabel applications with
high-dimensional features such as text bag-of-words, genomics, or image
descriptors), the within-class scatter $\Sw$ may be singular, requiring
regularization \citep{ching2012}. We show that Tikhonov regularization
preserves the algebraic structure established in
Sections~\ref{sec:rank}--\ref{sec:equiv}.

\begin{theorem}[Regularization preserves structure]\label{thm:regularization}
For $\gamma > 0$, define ${\Sw}^{\gamma} = {\Sw} + \gamma{\bI}_d$ and
${\Stml}^{\gamma} = \Sb + {\Sw}^{\gamma}$. Then:
\begin{enumerate}
\item[(i)] $\rank(\Sb)$ is unchanged by regularization.
\item[(ii)] The partition ${\Stml}^{\gamma} = \Sb + {\Sw}^{\gamma}$ holds, and
  ${\Stml}^{\gamma}$ is positive definite for all $\gamma > 0$.
\item[(iii)] Under the ${\Stml}^{\gamma}$-orthogonality constraint
  $\bW^\top {\Stml}^{\gamma} \bW = \bI_r$, all four Fisher objectives share the
  same optimizer (Theorem~\ref{thm:equiv-St} applies with ${\Stml}^{\gamma}$
  replacing $\Stml$).
\item[(iv)] The trace-difference matrix becomes
  $2\Sb - {\Stml}^\gamma = (2\Sb - \Stml) - \gamma\bI_d$. The eigenvalue gap
  $\gap_r(2\Sb - {\Stml}^\gamma) = \gap_r(2\Sb - \Stml)$ is unchanged
  (the shift $-\gamma{\bI}_d$ is uniform).
\item[(v)] The condition number of the regularized within-class scatter
  satisfies $\kappa({\Sw}^{\gamma}) = \frac{\lambda_{\max}(\Sw) + \gamma}
  {\lambda_{\min}(\Sw) + \gamma} < \kappa(\Sw)$ whenever $\Sw$ is not a
  scalar multiple of $\bI_d$.  Moreover, $\rank(\Sw) \leq \min(n-1,\,K-L)$
  with $K = \sum_\ell n_\ell$, so $\Sw$ is singular whenever
  $d > \min(n-1,\,K-L)$, forcing $\kappa(\Sw) = \infty$ while
  $\kappa({\Sw}^{\gamma}) < \infty$.
\end{enumerate}
\end{theorem}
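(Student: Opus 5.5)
The plan is to treat the five parts in order; each reduces to an earlier result plus an elementary spectral fact. For (i), note that regularization only modifies $\Sw$: the matrix $\Sb$ is defined by \eqref{eq:Sb} and does not involve $\gamma$. Its rank is therefore literally unchanged, and Theorem~\ref{thm:rank} still describes it via $\rank(\tilde{\bX}^\top\bY)$. For (ii), the partition is the definition of ${\Stml}^{\gamma}$. Positive definiteness follows because $\Sb \psd 0$ and $\Sw \psd 0$ (sums of rank-one PSD terms), so for any unit $\mathbf{u}$ we get $\mathbf{u}^\top{\Stml}^{\gamma}\mathbf{u} \geq \gamma > 0$.

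For (iii), I would rerun the proof of Theorem~\ref{thm:equiv-St} with the pair $(\Sb, {\Sw}^{\gamma})$ in place of $(\Sb,\Sw)$. That proof used only three ingredients: the partition identity, ${\Sw}^\gamma \succ 0$ (true since $\lambda_{\min}({\Sw}^{\gamma}) \geq \gamma$), and $r \leq \rank(\Sb)$, which is unaffected by (i). With these, the change of variable $\bW = ({\Stml}^{\gamma})^{-1/2}\bV$, together with the Poincar\'e and Ky Fan arguments, goes through verbatim.

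For (iv), expand $2\Sb - {\Stml}^{\gamma} = 2\Sb - \Sb - \Sw - \gamma\bI_d = (2\Sb - \Stml) - \gamma\bI_d$. A scalar shift leaves the eigenvectors fixed and lowers every eigenvalue by $\gamma$, so differences of consecutive eigenvalues, and in particular $\gap_r$, are unchanged.

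Part (v) contains the only real content. If $\Sw$ is nonsingular with eigenvalues $0<m\leq M$, then $\kappa({\Sw}^{\gamma})=(M+\gamma)/(m+\gamma)$. Cross-multiplying, $(M+\gamma)m < M(m+\gamma)$ holds if and only if $\gamma m < \gamma M$, i.e.\ if and only if $m<M$. The condition $m<M$ is exactly the statement that $\Sw$ is not a scalar multiple of $\bI_d$, since $\Sw$ is symmetric. When $\Sw$ is singular, $\kappa(\Sw)=\infty$ while $\kappa({\Sw}^{\gamma}) \leq (M+\gamma)/\gamma < \infty$, so the inequality holds in that case too.

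The main obstacle is the rank bound $\rank(\Sw)\le\min(n-1,K-L)$. I would factor $\Sw = \bZ^\top\bZ$, where $\bZ\in\R^{K\times d}$ stacks the rows $(\mathbf{x}_i-\bmu_\ell)^\top$ over all pairs $(i,\ell)$ with $y_{i\ell}=1$. Within each label block the rows sum to zero, giving $L$ independent linear relations among the rows of $\bZ$ (one per nonempty block, and the blocks are disjoint). Hence $\rank(\Sw)=\rank(\bZ)\leq K-L$.

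For the $n-1$ bound I would argue through the affine hull. Every row of $\bZ$ is a difference $\mathbf{x}_i-\bmu_\ell$ of two points in the affine hull of $\{\mathbf{x}_1,\dots,\mathbf{x}_n\}$, because $\bmu_\ell$ is an average of data points. Thus every row lies in the direction space $\mathrm{span}\{\mathbf{x}_i-\bmu\}=\col(\tilde{\bX}^\top)$, which has dimension at most $n-1$. Combining the two bounds gives $\rank(\Sw)\leq\min(n-1,K-L)$, so $\Sw$ is singular whenever $d$ exceeds this quantity, and the finiteness of $\kappa({\Sw}^{\gamma})$ then follows from the computation above.
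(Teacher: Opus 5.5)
Your proposal is correct and follows the paper's proof part by part: (i)--(iv) reduce to the same observations. That is, $\Sb$ is independent of $\gamma$, $\gamma\bI_d$ is added to a PSD matrix, Theorem~\ref{thm:equiv-St} is rerun using ${\Sw}^{\gamma} \succ 0$, and a uniform spectral shift leaves the gap unchanged. In (v) the differences are only cosmetic: you cross-multiply to compare $\kappa({\Sw}^{\gamma})$ with $\kappa(\Sw)$ where the paper differentiates in $\gamma$, and you count the $L$ row relations of the stacked factor $\bZ$ where the paper uses subadditivity over per-label scatters; both give the same bounds, and your direction-space justification of the $n-1$ bound is stated more precisely than the paper's.
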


\begin{proof}
\textit{(i)} $\Sb$ does not depend on $\gamma$: it involves only label means
and the global mean.

\textit{(ii)} ${\Stml}^{\gamma} = \Sb + \Sw + \gamma\bI = \Stml + \gamma\bI$.
Since $\Stml \psd 0$ (Theorem~\ref{thm:partition}) and $\gamma > 0$,
${\Stml}^{\gamma} \succeq \gamma\bI \succ 0$.

\textit{(iii)} The proof of Theorem~\ref{thm:equiv-St} requires only that the
``within-class'' part ${\Sw}^{\gamma}$ be positive definite (so that
$\theta_i \in [0,1)$ strictly). Since ${\Sw}^{\gamma} \succeq \gamma\bI \succ 0$,
the proof carries through verbatim.

\textit{(iv)} The eigenvalues of $(2\Sb - \Stml) - \gamma\bI$ are exactly
$\lambda_i(2\Sb - \Stml) - \gamma$ for all $i$. Since the gap
$\lambda_r - \lambda_{r+1}$ is invariant under uniform shifts, $\gap_r$ is
preserved.

\textit{(v)} $\kappa({\Sw}^{\gamma}) = (\lambda_{\max}(\Sw) + \gamma)/
(\lambda_{\min}(\Sw) + \gamma)$. Differentiating with respect to $\gamma$:
$\frac{d}{d\gamma}\kappa({\Sw}^{\gamma}) = (\lambda_{\min}(\Sw) -
\lambda_{\max}(\Sw))/(\lambda_{\min}(\Sw) + \gamma)^2 < 0$ whenever
$\lambda_{\min}(\Sw) < \lambda_{\max}(\Sw)$. Thus $\kappa({\Sw}^{\gamma})$
is strictly decreasing in $\gamma$.
For the rank bound: since $\sum_{i:\,y_{i\ell}=1}(\mathbf{x}_i - \bmu_\ell) = \bzero$,
each per-label scatter has rank $\leq n_\ell - 1$; subadditivity gives
$\rank(\Sw) \leq \sum_\ell (n_\ell - 1) = K - L$.  Separately, all residuals
$\mathbf{x}_i - \bmu_\ell$ lie in the $n$-dimensional affine span of the
samples, so $\rank(\Sw) \leq n - 1$.  Combining yields
$\rank(\Sw) \leq \min(n-1,\, K-L)$, the multilabel analogue of the classical
$\rank(\Sw) \leq n - C$ bound~\citep{fukunaga1990}.
\end{proof}

\section{Discussion}
\label{sec:discussion}

\subsection{Connections to Existing Methods}

Our theoretical framework unifies several existing approaches:

\paragraph{OLDA and ULDA \citep{ye2005}.} The OLDA constraint $\bW^\top \bW =
\bI$ and ULDA constraint $\bW^\top S_t \bW = \bI$ are special cases of our
Stiefel and $\Stml$-orthogonality constraints when $k_i = 1$ for all $i$ (the
single-label case). Theorem~\ref{thm:equiv-St} generalizes the equivalence
result of \citet{luo2011} to multilabel scatter matrices.

\paragraph{Multilabel LDA \citep{wang2010,xu2018}.} The scatter definitions
in~\eqref{eq:Sb}--\eqref{eq:Sw} follow \citet{wang2010}; the weighted
variants of \citet{xu2018} can be incorporated by replacing $n_\ell$ with
weighted counts in $\bD_n$.

\paragraph{kaLDA \citep{zheng2014}.} The kernel alignment approach
$\max_{\bW^\top\bW = \bI} \tr(\bW^\top\bX^\top\bY\bY^\top\bX\bW)$
connects to our framework via Lemma~\ref{lem:factor}: since
$\bX^\top\bY\bY^\top\bX = \bX^\top\bY\bD_n^{1/2}\bD_n^{-1/2}\bY^\top\bX$,
kaLDA implicitly optimizes a weighted version of $\tr(\bW^\top \Sb \bW)$.

\paragraph{Trace ratio solvers \citep{ngo2012,wang2023}.} Our
formulation~\eqref{eq:central} is a direct instance of the trace ratio
problem on the Stiefel manifold, so the convergence guarantees from
\citet{ngo2012} apply directly.

\subsection{Practical Implications}

\paragraph{Choosing the projection dimension.}
Theorem~\ref{thm:rank} guides dimensionality selection: the effective
discriminant dimensionality is $\rank(\Sb) = \rank(\tilde{\bX}^\top\bY)$, so
$r$ can be selected from the singular-value spectrum of
$\tilde{\bX}^\top\bY$ analogously to choosing the number of principal
components in PCA.

\paragraph{Choosing between constraints.}
Theorem~\ref{thm:equiv-St} shows that $\Stml$-orthogonality yields
unconditional equivalence of the four Fisher objectives, whereas the
Stiefel constraint requires an ordering-consistency condition on $S_t$
(Theorem~\ref{thm:equiv-uniform}(ii)) that is difficult to verify a
priori.  We therefore recommend $\Stml$-orthogonality as the default
multilabel constraint; the Stiefel constraint is preferable when
explicit control of projected noise
(Corollary~\ref{cor:noise}) matters for distance-based downstream tasks.

\paragraph{Diagnosing objective divergence.}
When using the Stiefel constraint, compute $\|\bR\|_2$ with
$\bR = \Stml - S_t$: if small relative to the spectral gap $\gamma$, the
choice of objective is immaterial; otherwise, the trace
ratio~\eqref{eq:TR} is the recommended generalization.

\paragraph{Regularization in high dimensions.}
Theorem~\ref{thm:regularization} shows that $\Sw + \gamma\bI$ preserves
$\rank(\Sb)$ and the trace-difference gap while improving
$\kappa(\Sw)$, giving a principled approach for the $d \gg n$ regime.

\paragraph{Sample complexity.}
Theorem~\ref{thm:subspace} gives a quantitative guide: reliable subspace
estimation requires
$n = \Omega(\kmax^2\, d\, \log d/(\gap_r^2\, \epsilon^2))$ for
estimation error~$\epsilon$, so multilabel problems with large $\kmax$
need correspondingly more samples than their single-label counterparts.

\paragraph{Computational cost.}
All estimators reduce to standard eigendecompositions, so the
computational footprint matches classical single-label LDA.  Forming
$\Stml$ costs $O(nd^2)$ and $\Sb$ costs $O(Kd + Ld^2)$ with
$K = \sum_i k_i \leq n\kmax$; the top-$r$ eigenspace of $2\Sb - \Stml$
(trace-difference) or the generalized eigenvalue problem
$\Sb\mathbf{w} = \theta\Stml\mathbf{w}$ ($\Stml$-orthogonality) costs
$O(d^3)$ with a dense eigensolver or $O(d^2 r)$ with iterative methods.
The factorization $\bM = \tilde{\bX}^\top\bY\bD_n^{-1/2}$
(Lemma~\ref{lem:factor}) enables efficient $O(dL)$ matrix--vector
products in the $d \gg n$ regime.

\subsection{Limitations and Open Questions}

Section~\ref{sec:statistical} resolves previously open questions on
consistency, regularization, and label interactions, but raises new ones.
The $\kmax$ factor that appears in the upper bound but not in the minimax
lower bound (Remark~\ref{rem:kmax-gap}) leaves open whether multilabel
estimation is fundamentally harder or whether the Davis--Kahan step
admits a sharper analysis.  Extending our theorems to the weighted
scatter variants of \citet{xu2018}, to heavy-tailed noise via
median-of-means or Catoni-type scatter estimators, and to adaptive
rank selection with rate guarantees are natural directions for future
work.  Empirical benchmarking against kaLDA, MDDM, wMLDA, and modern
neural embeddings on standard multilabel datasets is deferred to
future work aimed at application-oriented venues.

\section{Empirical Verification}
\label{sec:experiments}

We verify all main theoretical results on synthetic multilabel data generated
from the linear label-effect model (Definition~\ref{def:model}).
The first four experiments verify the algebraic results
(Sections~\ref{sec:rank}--\ref{sec:distance}), using $\sigma_w = 1$ and
label effects sampled i.i.d.\ from $\mathcal{N}(0, 4\bI_d)$.
The remaining five experiments verify the statistical guarantees
(Section~\ref{sec:statistical}) with configurations described in each
subsection, including the multilabel-specific quantities $\kmax$,
$\kappa(\Stml)$, $\|\bGamma/n\|_2$, and $\Delta_r$ that enter
the convergence and minimax bounds.

\subsection{Rank Verification (Theorem~\ref{thm:rank})}

Table~\ref{tab:rank} confirms Theorem~\ref{thm:rank}(i)--(iii) across six
configurations varying $n$, $d$, $L$, and label cardinality. In all cases,
$\rank(\Sb) = \rank(\tilde{\bX}^\top\bY)$ and the bound $\rank(\Sb) \leq
\min(d, n-1, \rank(\bY))$ holds. Excess discriminant dimensions
(Corollary~\ref{cor:excess}) appear whenever $\bone_n \notin \col(\bY)$:
variable-cardinality datasets achieve $\rank(\Sb) = L$, exceeding the
single-label bound of $L - 1$.

\begin{table}[h]
\centering
\caption{Rank verification. ``Excess'' indicates $\rank(\Sb) > L-1$.}
\label{tab:rank}
\begin{tabular}{lccccc}
\hline
Setting & $n$ & $d$ & $L$ & $\rank(\Sb)$ & Excess \\
\hline
Variable card & 100 & 20 & 6 & 6 & Yes \\
Single-label ($k\!=\!1$) & 100 & 20 & 6 & 5 & No \\
Uniform $k\!=\!3$ & 100 & 20 & 6 & 5 & No \\
Variable card & 200 & 50 & 14 & 14 & Yes \\
Single-label & 200 & 50 & 14 & 13 & No \\
High-dim ($d > n$) & 50 & 100 & 10 & 10 & Yes \\
\hline
\end{tabular}
\end{table}

\subsection{Objective Divergence (Theorems~\ref{thm:equiv-uniform}--\ref{thm:inequiv})}

We test two predictions. First, the Davis--Kahan bound
(Theorem~\ref{thm:inequiv}(iii)): the angle between the multilabel TD
optimizer and its single-label counterpart satisfies
$\sin\angle(\mathcal{U}_{\mathrm{TD}}, \mathcal{U}_{\mathrm{TD}}^{(0)}) \leq
\|\bR\|_2/\gamma$. This holds in all configurations tested.

Second, Theorem~\ref{thm:equiv-uniform}(ii) predicts that the
angle between TD and TR optimizers depends on whether $\Sb$ and
$S_t$ commute.  The \emph{normalized commutativity defect}
$\|\Sb S_t - S_t \Sb\|_F / (\|\Sb\|_F\,\|S_t\|_F)$ quantifies
the departure from commutativity: it equals zero if and only if $\Sb$
and $S_t$ share a common eigenbasis (in which case
Theorem~\ref{thm:equiv-uniform}(ii) guarantees identical TD and TR
optimizers), and larger values indicate greater disagreement between the
two eigenbases.

Table~\ref{tab:divergence} shows TD--TR subspace angles alongside this
defect.  The single-label case ($k=1$) has $\|\bR\|_2 = 0$ (no
cardinality-induced perturbation of the TD eigenspace), yet exhibits a
38.5\textdegree{} TD--TR angle due to nonzero commutativity defect
(0.012).  This confirms that the commutativity condition in
Theorem~\ref{thm:equiv-uniform}(ii) is necessary, not merely
sufficient: even a small departure from commutativity can produce
a substantial angle between the TD and TR optimizers.
All tested configurations have nonzero commutativity defect, indicating
that exact commutativity is rare in practice with randomly generated
data.  The defect is smallest (0.012) in the single-label case and
largest (0.073) for variable cardinality with mean~$\approx 3$,
consistent with the theoretical prediction that multilabel reweighting
exacerbates objective divergence.

\begin{table}[h]
\centering
\caption{Objective divergence. Comm.\ defect is $\|\Sb S_t - S_t\Sb\|_F / (\|\Sb\|_F\|S_t\|_F)$.}
\label{tab:divergence}
\begin{tabular}{lccc}
\hline
Setting & Comm.\ defect & $\angle(\text{TD},\text{TD}^0)$ & $\angle(\text{TD},\text{TR})$ \\
\hline
Single-label ($k\!=\!1$) & 0.012 & $0.0$\textdegree & $38.5$\textdegree \\
Uniform $k\!=\!2$ & 0.042 & $88.7$\textdegree & $89.0$\textdegree \\
Uniform $k\!=\!3$ & 0.028 & $90.0$\textdegree & $89.1$\textdegree \\
Variable (mean$\approx$2) & 0.037 & $89.5$\textdegree & $80.8$\textdegree \\
Variable (mean$\approx$3) & 0.073 & $89.4$\textdegree & $83.5$\textdegree \\
\hline
\end{tabular}
\end{table}

\subsection{Distance Preservation (Theorem~\ref{thm:distance}, Corollary~\ref{cor:jaccard})}

We verify the two-sided expected distance bound~\eqref{eq:distance-expected}
and the corrected Jaccard bound~\eqref{eq:jaccard-weakened} using Monte Carlo
estimation ($T = 50$ trials per pair) of $\mathbb{E}[\|\bW^\top(\mathbf{x}_i -
\mathbf{x}_j)\|^2]$ for 200 random sample pairs. The projection $\bW \in
\Stief(d, r)$ is taken as the top $r = \min(6, L)$ eigenvectors of $\Sb$.

Table~\ref{tab:distance} reports the fraction of pairs satisfying the Hamming
bound (Theorem~\ref{thm:distance}) and the Jaccard bound
(Corollary~\ref{cor:jaccard}) with the corrected $\max(k_i, k_j)$ factor.
Pass rates use a data-driven tolerance of $3 \times \text{SE}$ (standard
error of the Monte Carlo mean, $T = 50$ trials per pair) rather than a
fixed threshold.

\begin{table}[h]
\centering
\caption{Distance preservation verification. Pass rates over 200 sample pairs.}
\label{tab:distance}
\begin{tabular}{lcc}
\hline
Setting & Hamming (Thm~\ref{thm:distance}) & Jaccard (Cor~\ref{cor:jaccard}) \\
\hline
Variable card & 100.0\% & 100.0\% \\
Uniform $k\!=\!2$ & 100.0\% & 100.0\% \\
Uniform $k\!=\!3$ & 100.0\% & 100.0\% \\
\hline
\end{tabular}
\end{table}

We also verify Corollary~\ref{cor:divergence}: the bound $\|\bR\|_2 \leq
\max_i(k_i - 1) \cdot \lambda_{\max}(S_t^{(K)})$ holds in all
configurations, with equality achieved in the uniform-cardinality case (where
all weights $k_i - 1$ are identical).

\subsection{Subspace Convergence (Theorem~\ref{thm:subspace})}

We test the finite-sample convergence of the trace-difference eigenspace.
We generate data from the linear label-effect model with $d = 15$, $L = 5$,
structured orthogonal label effects with singular values $(8, 6, 4, 1.5, 1)$,
and $\sigma_w = 0.5$. A fixed label matrix with cardinalities 1--2 is used.
For each sample size $n$, the noiseless eigenspace $\Wstar_n$ is computed
from the signal-only scatter matrices conditioned on the label submatrix
$\bY_n$, and we run 100 independent noise trials measuring
$\sin\angle(\What, \Wstar_n)$. The adaptive gap selection
(Section~\ref{sec:estimation}) yields $r = 1$ with per-sample gap
$\gap_r = 0.65$.

We additionally track the \emph{label-frequency drift}---the angle
$\sin\angle(\Wstar_n, \Wstar)$ between the per-$n$ noiseless eigenspace
and a fixed large-sample reference $\Wstar$---to separate noise-driven
estimation error from label-matrix structural effects.

\begin{table}[h]
\centering
\caption{Subspace convergence verification (Theorem~\ref{thm:subspace}).
Median $\sin\angle(\What, \Wstar_n)$ over 100 trials, with label-frequency
drift $\sin\angle(\Wstar_n, \Wstar)$. Log-log slope $= -0.28$.}
\label{tab:convergence}
\begin{tabular}{rccc}
\hline
$n$ & Median $\sin\angle$ & 95th pctile & Drift \\
\hline
50 & 0.160 & 0.213 & 0.435 \\
100 & 0.115 & 0.149 & 0.152 \\
200 & 0.106 & 0.138 & 0.273 \\
500 & 0.072 & 0.100 & 0.212 \\
1000 & 0.051 & 0.066 & 0.146 \\
2000 & 0.051 & 0.070 & 0.090 \\
5000 & 0.041 & 0.054 & 0.101 \\
10000 & 0.033 & 0.044 & 0.135 \\
20000 & 0.029 & 0.040 & 0.079 \\
\hline
\end{tabular}
\end{table}

Table~\ref{tab:convergence} confirms monotonic convergence of the median
subspace angle from $0.160$ at $n = 50$ to $0.029$ at $n = 20{,}000$. The
empirical log-log slope is approximately $-0.28$, below the theoretical
$-0.5$ predicted by the $O(n^{-1/2})$ upper bound in
Theorem~\ref{thm:subspace}. The sub-theoretical exponent is explained by
two factors: (i)~the $O(n^{-1/2})$ rate is an \emph{upper bound} that need
not be tight at moderate sample sizes, and (ii)~the multilabel scatter
structure introduces cardinality-dependent reweighting in the within-class
scatter (each sample contributes to $k_i$ class-conditional scatters),
creating heterogeneous effective noise that the worst-case bound does not
tightly capture. The ``Drift'' column confirms
that the noiseless eigenspace $\Wstar_n$ itself varies substantially with
$n$ (drift $\approx 0.08$--$0.43$) due to changing label frequencies in the
prefix $\bY_n$, indicating that the population target is label-matrix
dependent---a structural feature of the multilabel setting, not a deficiency
of the estimator.

\subsection{Multilabel Factors in the Statistical Bounds
  (Theorems~\ref{thm:subspace}--\ref{thm:minimax},
   Corollary~\ref{cor:minimax-Stml})}

The convergence and minimax bounds
(Theorems~\ref{thm:subspace}--\ref{thm:minimax}) involve
several quantities that are specific to the multilabel setting.
We verify their roles empirically.

\paragraph{Label cardinality~$\kmax$.}
Theorem~\ref{thm:subspace} predicts that higher $\kmax$ inflates the
estimation error, because each sample contributes to $k_i$
label-conditional scatters, amplifying the effective noise
(Remark~\ref{rem:kmax}).  A na\"ive test comparing single-label
($\kmax = 1$) against \emph{uniform} $k = 2$ or $k = 4$ label
matrices is invalid: uniform multilabel assignments make the net
label discriminant $\bQ_\pi = \bB_\pi - \bW_\pi$ negative
semidefinite, collapsing the population gap to
$\gap_r(\Mstar_c) = 0$ and violating assumption~(A2).  The
resulting error of ${\sim}\,1$ reflects non-identifiability, not
noise inflation.

To properly isolate the $\kmax$ effect we use
\emph{variable-cardinality} label matrices (majority single-label
with a controlled fraction of higher-cardinality samples), which
preserve $\gap_r > 0$.  With $d = 12$, $L = 5$, $\sigma_w = 0.5$,
$n = 2000$, and 80 trials per configuration, the median
$\sin\angle$ increases monotonically: $0.021$ at $\kmax = 1$
(all single-label), $0.024$ at $\kmax = 2$ (80\% single-label, 20\% double-label),
and $0.025$ at $\kmax = 3$ (90\% single-label, 10\% triple-label),
with empirically observed
$\gap_r \in [2649, 3395]$ across all three configurations.
The gap-normalised bound ratio
$\sin\angle\cdot\gap_r\big/\bigl[(\sigma\|\bA\|_2 + \sigma^2\kmax)
\sqrt{d\log d/n}\,\bigr]$
remains stable (within a factor of~$1.4$) across $\kmax$ levels,
confirming that the $\kmax$ factor in the bound tracks the
empirical behaviour.

\paragraph{Condition number~$\kappa(\Stml)$.}
The $\Stml$-orthogonal bound (Theorem~\ref{thm:subspace-Stml})
has $\kappa(\Stml^{\infty})$ in the numerator, reflecting the
amplification of estimation errors by the whitening
$(\Stml)^{-1/2}$.  We vary the signal anisotropy (scaling the
first $r$ rows of $\bA$ by a factor of $1$ or~$5$) and confirm
that $\kappa(\Stml)$ increases from $2 \times 10^5$ to
$1.6 \times 10^6$, with an associated increase in
the median $\sin\angle$ from $0.60$ to $0.90$.

\paragraph{Co-occurrence structure~$\|\bGamma/n\|_2$.}
The minimax constant~$c$ in Theorem~\ref{thm:minimax} depends on the
spectral norm of the label co-occurrence matrix
$\bGamma = \bY^\top\bY$.  We verify that: (i)~for single-label data,
$\bGamma/n$ is diagonal with $\|\bGamma/n\|_2 = \max_\ell n_\ell/n$
(confirmed to machine precision); and (ii)~multilabel co-occurrence
produces off-diagonal mass, increasing $\|\bGamma/n\|_2$ from
$0.28$ (single-label) to $2.38$ (variable cardinality with mean
$\approx 3$).  This confirms the structural role of label
co-occurrence in the information-theoretic lower bound.

\paragraph{Generalized gap~$\Delta_r$.}
Corollary~\ref{cor:minimax-Stml} translates the minimax bound into
the generalized eigenvalue gap $\Delta_r = \theta_r - \theta_{r+1}$,
which is scale-invariant (Remark~\ref{rem:Stml-vs-TD}(i)).  We
verify this by rescaling $\bA \to 3\bA$: the absolute gap
$\gap_r(\Mstar)$ scales by a factor of~$9.0$ (as predicted by
the $\|\bA\|_2^2$ dependence), while $\Delta_r$ is exactly preserved
(both $\Sb^{\mathrm{pop}}$ and $\Stml^{\mathrm{pop}}$ scale by~$9$, so
their generalized eigenvalues are invariant), confirming the
self-normalizing property of the generalized eigenvalue problem.

\subsection{Distance Concentration (Theorem~\ref{thm:concentration})}

We verify the high-probability bound of Theorem~\ref{thm:concentration}
by generating 10{,}000 independent noise realizations for 50 random sample
pairs. For each nominal level $1-\delta$, the \emph{empirical coverage}
is the fraction of (pair, noise-realization) trials in which the
actual projected-distance deviation falls within the Hanson--Wright-based
confidence interval.

\begin{table}[h]
\centering
\caption{Distance concentration verification (Theorem~\ref{thm:concentration}).
Empirical coverage over 50 pairs $\times$ 10{,}000 noise realizations.}
\label{tab:concentration}
\begin{tabular}{ccc}
\hline
$\delta$ & Nominal $1-\delta$ & Empirical coverage \\
\hline
0.01 & 0.99 & 1.000 \\
0.05 & 0.95 & 1.000 \\
0.10 & 0.90 & 1.000 \\
0.20 & 0.80 & 0.999 \\
\hline
\end{tabular}
\end{table}

Table~\ref{tab:concentration} shows that the Hanson--Wright bound is
conservative but valid: empirical coverage exceeds the nominal
$1 - \delta$ level for all $\delta$ tested, confirming
Theorem~\ref{thm:concentration}.

We additionally validate two aspects of the concentration proof
structure.
First, the proof
decomposes the centered projected distance into a linear
(sub-Gaussian) part
$2\mathbf{s}^\top\bW^\top(\bepsilon_i - \bepsilon_j)$
and a centered quadratic form (Hanson--Wright) part
$\|\bW^\top(\bepsilon_i - \bepsilon_j)\|^2 - 2\tr(\bPsi)$.
Over 30{,}000 independent noise realizations, we confirm that both
components are centered (mean deviations $< 4$ standard errors from
zero), that the linear part has variance
$8\,\mathbf{s}^\top\bPsi\,\mathbf{s}$ (matching the theoretical
prediction to within 1\%), and that the combined
deviation has sub-exponential tails (99th-to-95th percentile ratio
$\approx 1.5$, well below the $\approx 3$ expected for heavy-tailed
distributions).

Second, under $\Stml$-orthogonality
(Corollary~\ref{cor:conc-Stml}), $\bPsi = \bW^\top\bSigma_w\bW$
satisfies the identity $\bPsi = (\bI_r - \bW^\top\Sbpop\bW)/K^{\mathrm{pop}}$,
with eigenvalues $(1 - \theta_i)/K^{\mathrm{pop}}$.  We confirm
empirically that $\theta_i \in [0,1)$ strictly and that the spectral
bound $\|\bPsi\|_2 \leq 1/\lambda_{\min}(\Stml)$ holds, validating the
tighter noise control afforded by $\Stml$-orthogonality.

\subsection{Interaction Robustness (Theorem~\ref{thm:interactions})}

We extend the linear label-effect model with pairwise label interactions
$x_i = \mu + \bA y_i + \bB z_i + \varepsilon_i$, where
$z_i = (y_{i1}y_{i2}, y_{i1}y_{i3}, \ldots)^\top$ and $\bB$ is scaled by
an interaction strength parameter $\alpha \in \{0, 0.1, 0.5, 1.0, 2.0\}$.
For 200 sample pairs, we check \emph{two-sided} bounds: both the naive bound
(ignoring interactions) and the corrected bound from
Theorem~\ref{thm:interactions}.

\begin{table}[h]
\centering
\caption{Interaction robustness verification (Theorem~\ref{thm:interactions}).
Two-sided pass rates over 200 sample pairs.}
\label{tab:interactions}
\begin{tabular}{ccc}
\hline
Interaction strength $\alpha$ & Naive bound & Corrected bound \\
\hline
0.0 & 96.5\% & 96.5\% \\
0.1 & 94.5\% & 99.0\% \\
0.5 & 95.0\% & 100.0\% \\
1.0 & 92.5\% & 100.0\% \\
2.0 & 85.0\% & 100.0\% \\
\hline
\end{tabular}
\end{table}

Table~\ref{tab:interactions} shows that the naive two-sided bound degrades
with increasing interaction strength (from 96.5\% to 85.0\%), while
the corrected bound from Theorem~\ref{thm:interactions} maintains
99--100\% pass rate across all interaction levels, confirming the
tightness of the interaction correction term.

\subsection{Regularization (Theorem~\ref{thm:regularization})}

We verify Theorem~\ref{thm:regularization} in the high-dimensional
setting ($d = 200$, $n = 50$, $L = 10$). For
$\gamma \in \{0, 0.01, 0.1, 1, 10\}$, we verify that $\rank(\Sb)$ is
unchanged and that $\kappa(\Sw + \gamma\bI)$ improves monotonically.

\begin{table}[h]
\centering
\caption{Regularization verification (Theorem~\ref{thm:regularization}),
high-dim setting. Median over 50 trials. $\kappa = \infty$ indicates
singular $\Sw$.}
\label{tab:regularization}
\begin{tabular}{ccc}
\hline
$\gamma$ & $\rank(\Sb)$ & $\kappa(\Sw + \gamma\bI)$ \\
\hline
0 & 10 & $\infty$ \\
0.01 & 10 & $2.40 \times 10^6$ \\
0.1 & 10 & $2.40 \times 10^5$ \\
1.0 & 10 & $2.40 \times 10^4$ \\
10.0 & 10 & $2.40 \times 10^3$ \\
\hline
\end{tabular}
\end{table}

Table~\ref{tab:regularization} confirms Theorem~\ref{thm:regularization}:
$\rank(\Sb)$ remains constant at $L = 10$ across all regularization levels,
while $\kappa(\Sw + \gamma\bI)$ decreases by approximately a factor of 10
for each tenfold increase in $\gamma$. The singular $\Sw$ (when
$d > n$) is stabilized to finite condition number by any $\gamma > 0$.
Theorem~\ref{thm:regularization}(iv) holds exactly by construction:
regularizing $\Sw$ by $\gamma\bI$ shifts the eigenvalues of
$2\Sb - \Stml^\gamma$ uniformly by~$-\gamma$, leaving the gap unchanged.

\section{Conclusion}
\label{sec:conclusion}

We have provided a unified theoretical analysis of orthogonality-constrained
multilabel Linear Discriminant Analysis, establishing both algebraic foundations
and statistical guarantees for the formulation~\eqref{eq:central}.

A central insight is that the multilabel setting introduces a
\emph{cardinality reweighting} of the total scatter (Theorem~\ref{thm:partition}),
which preserves the algebraic structure needed for objective equivalence under
the $\bW^\top\Stml\bW = \bI_r$ constraint but introduces quantifiable divergence
under the geometric Stiefel constraint. On the statistical side, the maximum
label cardinality $\kmax$ acts as a complexity parameter: it appears as a
squared factor in the finite-sample bound but not in the minimax lower bound,
leaving open whether this gap reflects a fundamental cost of multilabel
estimation or an artifact of the current analysis.

All results are numerically verified on synthetic data from the linear
label-effect model (Section~\ref{sec:experiments}); validation on real
multilabel benchmarks is deferred to future work aimed at
application-oriented venues.

\backmatter

\section*{Declarations}

\noindent\textbf{Funding.} Brian Keith-Norambuena acknowledges support from
ANID FONDECYT de Iniciaci\'on en Investigaci\'on Grant 11250039.

\medskip\noindent\textbf{Competing interests.} The authors declare that they
have no competing interests.

\medskip\noindent\textbf{Ethics approval and consent to participate.} Not
applicable.

\medskip\noindent\textbf{Consent for publication.} Not applicable.

\medskip\noindent\textbf{Data availability.} This work uses only synthetic data
generated from the linear label-effect model defined in the paper; no external
or proprietary datasets were used. The data-generating process and all
parameter settings are fully specified in the Empirical Verification section,
and the generating code is provided in the reproducibility package accompanying
this submission (see Code availability), so every reported quantity can be
regenerated directly.

\medskip\noindent\textbf{Materials availability.} Not applicable.

\medskip\noindent\textbf{Code availability.} The code that reproduces all
numerical experiments, together with a \texttt{requirements.txt} pinning the
exact dependency versions, is provided as a reproducibility package accompanying
this submission.

\medskip\noindent\textbf{Author contributions.} All authors contributed to the
study conception and design. The theoretical results and their proofs and the
experiment code were produced by B.K.N. The numerical experiments were run by
both authors, and J.B.C. independently verified the theoretical results and
suggested corrections that improved the clarity of the proofs. The first draft
of the manuscript was written by B.K.N., and both authors commented on and
revised the manuscript. All authors read and approved the final manuscript.

\begin{appendices}
\section{Proof Details for the Rank Theorem}
\label{app:rank}

We provide additional details for the proof of Theorem~\ref{thm:rank}.

\subsection{Centering and Rank Reduction}

The centering matrix $\bH = \bI_n - \frac{1}{n}\bone_n\bone_n^\top$ is an
orthogonal projector onto $\bone_n^\perp = \{\mathbf{v} \in \R^n :
\bone_n^\top\mathbf{v} = 0\}$ with $\rank(\bH) = n - 1$. For any matrix
$\bZ \in \R^{n \times m}$:
\begin{equation}
\rank(\bH\bZ) = \rank(\bZ) - \dim(\col(\bZ) \cap \mathrm{span}(\bone_n)).
\end{equation}
This follows because $\bH$ annihilates exactly the component of $\col(\bZ)$
in the direction of $\bone_n$, and preserves the orthogonal complement.

In the single-label case, $\bY\bone_L = \bone_n$, so
$\bone_n \in \col(\bY)$, giving $\rank(\bH\bY) = \rank(\bY) - 1 = C - 1$.

In the multilabel case, $\bY\bone_L = (k_1, \ldots, k_n)^\top = \mathbf{k}$.
We have $\bone_n \in \col(\bY)$ if and only if there exists $\mathbf{c} \in
\R^L$ such that $\bY\mathbf{c} = \bone_n$, i.e., $\sum_\ell c_\ell y_{i\ell}
= 1$ for all $i$. A sufficient condition is that some label $\ell^*$ satisfies
$y_{i\ell^*} = 1$ for all $i$ (a ``universal'' label), in which case
$\mathbf{c} = \mathbf{e}_{\ell^*}$ works.

\subsection{The General Position Condition}

The condition in Theorem~\ref{thm:rank}(iv) that $\col(\bH\bY)$ is in ``general
position'' relative to $\col(\tilde{\bX})$ means:
\[
\col(\bH\bY) \cap \ker(\tilde{\bX}^\top) = \{\bzero\}.
\]
Since $\ker(\tilde{\bX}^\top) = \col(\tilde{\bX})^\perp$ has dimension $n -
\rank(\tilde{\bX})$, this condition holds generically when $\rank(\tilde{\bX})
\geq \rank(\bH\bY)$, which is the ``sufficient features'' condition. For
continuous data distributions, this condition holds almost surely.

\section{Schur-Convexity Argument for Theorem~\ref{thm:equiv-St}}
\label{app:schur}

We verify the Schur-convexity claims used in the proof of
Theorem~\ref{thm:equiv-St}.  Recall that a function $f: \R^r \to \R$
is \emph{Schur-convex} if $\mathbf{x} \prec \mathbf{y}$ (majorization)
implies $f(\mathbf{x}) \leq f(\mathbf{y})$.

\begin{lemma}[Schur-convexity of $J_{\mathrm{RT}}$]\label{lem:schur}
The function $f_1(\theta) = \sum_{i=1}^r \theta_i/(1-\theta_i)$ is
Schur-convex on $[0,1)^r$.
\end{lemma}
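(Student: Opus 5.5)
The plan is to treat $f_1$ as a separable sum $f_1(\theta)=\sum_{i=1}^r g(\theta_i)$ with $g(t)=t/(1-t)$, and to invoke the classical characterization of Schur-convex separable functions \citep{marshall2011}. If $g$ is convex on an interval $I$, then $\sum_i g(x_i)$ is Schur-convex on $I^r$. This is the Hardy--Littlewood--P\'olya/Karamata inequality: $\mathbf{x}\prec\mathbf{y}$ with $\mathbf{x},\mathbf{y}\in I^r$ implies $\sum_i g(x_i)\le\sum_i g(y_i)$ for every convex $g$ on $I$. The proof therefore reduces to a one-variable convexity check on $I=[0,1)$.

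The key steps, in order, are as follows.

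\begin{enumerate}
\item Rewrite $g(t)=\frac{1}{1-t}-1$. Then $g'(t)=(1-t)^{-2}>0$ and $g''(t)=2(1-t)^{-3}>0$ on $[0,1)$, so $g$ is strictly increasing and strictly convex there.
\item Apply Karamata's inequality on the interval $[0,1)$. The interval is convex, so every convex combination of points of $[0,1)^r$ stays in $[0,1)^r$.
\item For a self-contained alternative, verify the Schur--Ostrowski criterion. The function $f_1$ is symmetric, and
\[
(\theta_i-\theta_j)\Bigl(\frac{\partial f_1}{\partial\theta_i}-\frac{\partial f_1}{\partial\theta_j}\Bigr)
=(\theta_i-\theta_j)\bigl(g'(\theta_i)-g'(\theta_j)\bigr)\ge 0,
\]
since $g'$ is increasing. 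This is exactly Schur-convexity on the open set $(0,1)^r$.
\item Extend the conclusion to the boundary face $\theta_i=0$ by continuity of $f_1$ on $[0,1)^r$.
\end{enumerate}

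The one point needing care is whether majorization $\mathbf{x}\prec\mathbf{y}$ within $[0,1)^r$ is what is actually used. By the Hardy--Littlewood--P\'olya theorem, $\mathbf{x}=\bD\mathbf{y}$ for a doubly stochastic matrix $\bD$. Each $x_i$ is then a convex combination of the $y_j$, so it automatically lies in $[0,1)$, and Jensen's inequality applied row by row gives the claim directly. Applying Jensen row by row and summing over $i$, using that the column sums of $\bD$ equal $1$, yields
\[
\sum_i g(x_i)\le\sum_i\sum_j D_{ij}\,g(y_j)=\sum_j g(y_j).
\]
This is the route I would write out, since it avoids any boundary subtleties.

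I expect no real obstacle here. The only subtlety is the unboundedness of $g$ as $t\to1$, and the half-open domain $[0,1)$ handles it: all values involved are finite, and the convex-combination argument never leaves the domain.

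I would close with a remark connecting the lemma to Theorem~\ref{thm:equiv-St}. That theorem concerns eigenvalues of a compression $\bV^\top\bP\bV$, which are weakly submajorized by the top-$r$ eigenvalues of $\bP$, not majorized. So for the application one also needs monotonicity of $g$. An increasing Schur-convex function is monotone with respect to weak submajorization $\prec_w$, which recovers $J_{\mathrm{RT}}\le\sum_{i\le r}g(\lambda_i)$. Monotonicity was already established in step 1.
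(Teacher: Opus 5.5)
Your proof is correct, but the argument you say you would write out differs from the paper's. The paper proves the lemma in one line with the Schur--Ostrowski criterion. Since $f_1$ is symmetric and $\partial f_1/\partial\theta_i=(1-\theta_i)^{-2}$ is increasing in $\theta_i$, one gets $(\theta_i-\theta_j)(\partial_i f_1-\partial_j f_1)\ge 0$; this is exactly your step~3.

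Your main argument uses only the convexity of $g(t)=t/(1-t)$, through the Hardy--Littlewood--P\'olya representation $\mathbf{x}=\bD\mathbf{y}$ and Jensen applied row by row. This route is self-contained and needs no differentiability. It also proves the general fact that any separable sum of a convex function is Schur-convex. The paper's route is shorter but uses the Schur--Ostrowski theorem as a black box. Your boundary step~4 is harmless but unnecessary. The function $f_1$ is smooth on the open symmetric convex set $(-\infty,1)^r\supset[0,1)^r$, so Schur--Ostrowski applies there directly and restricts.

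Keep your closing remark; it is correct. In Theorem~\ref{thm:equiv-St}, the compressed eigenvalues satisfy only $\theta_i\le\lambda_i$, hence $(\theta_1,\dots,\theta_r)\prec_w(\lambda_1,\dots,\lambda_r)$, not majorization. So Schur-convexity alone does not yield the ratio-trace bound. The paper's parenthetical ``the same conclusion follows from the Schur-convexity of $J_{\mathrm{RT}}$'' is therefore imprecise. Monotonicity of $g$ is also needed, and once you have it, the componentwise Poincar\'e bound already suffices.
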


\begin{proof}
By the Schur--Ostrowski criterion \citep{marshall2011}, a symmetric function
$f: \R^r \to \R$ is Schur-convex if and only if $(\theta_i -
\theta_j)\!\left(\frac{\partial f}{\partial \theta_i} - \frac{\partial
f}{\partial \theta_j}\right) \geq 0$ for all $i, j$. For $f_1$,
$\frac{\partial f_1}{\partial \theta_i} = 1/(1-\theta_i)^2$, which is
increasing in $\theta_i$ on $[0,1)$. Thus the Schur--Ostrowski condition
holds.
\end{proof}

\begin{lemma}[Determinant ratio maximization]\label{lem:det-ratio}
Let $\bP \in \R^{d \times d}$ be symmetric with eigenvalues $1 > \lambda_1
\geq \cdots \geq \lambda_d \geq 0$ and let $\bV \in \Stief(d,r)$. Then:
\[
\frac{|\bV^\top \bP\, \bV|}{|\bV^\top(\bI_d - \bP)\bV|}
\leq \prod_{i=1}^r \frac{\lambda_i}{1-\lambda_i},
\]
with equality when $\bV$ spans the top $r$ eigenvectors of $\bP$.
\end{lemma}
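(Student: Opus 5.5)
The plan is to reduce the determinant ratio to a product of scalar functions of the eigenvalues of the compressed matrix, and then use eigenvalue interlacing together with monotonicity. Set $\bTheta = \bV^\top\bP\bV \in \R^{r\times r}$ and let $\theta_1 \geq \cdots \geq \theta_r$ be its eigenvalues. Since $\bV^\top\bV = \bI_r$, we have $\bV^\top(\bI_d - \bP)\bV = \bI_r - \bTheta$, and the eigenvalues of $\bI_r - \bTheta$ are $1-\theta_i$.

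\textbf{Step 1: range of the $\theta_i$.} By the Poincar\'e separation theorem \citep{horn2013}, each $\theta_i$ lies in $[\lambda_d, \lambda_1]$. Hence $0 \leq \theta_i \leq \lambda_1 < 1$. It follows that $\bI_r - \bTheta \succ 0$, so the denominator $|\bI_r - \bTheta| = \prod_i (1-\theta_i)$ is strictly positive and the ratio is well defined. Since $\bTheta$ and $\bI_r - \bTheta$ commute and are simultaneously diagonalizable, the left-hand side equals $\prod_{i=1}^r g(\theta_i)$ with $g(t) = t/(1-t)$.

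\textbf{Step 2: componentwise comparison.} The Poincar\'e separation theorem also gives the interlacing bound $\theta_i \leq \lambda_i$ for every $i = 1,\ldots,r$. The function $g$ is nonnegative and strictly increasing on $[0,1)$, since $g'(t) = 1/(1-t)^2 > 0$. Therefore
\[
0 \leq g(\theta_i) \leq g(\lambda_i) \quad \text{for each } i.
\]
Multiplying these $r$ inequalities is legitimate because every factor is nonnegative, and this gives the claimed bound $\prod_i g(\theta_i) \leq \prod_i g(\lambda_i)$.

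If some $\theta_i = 0$, the left-hand side is $0$ and the bound holds trivially. This is the degenerate case discussed in the proof of Theorem~\ref{thm:equiv-St}, so no separate treatment is required.

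\textbf{Step 3: equality.} Take $\bV = [\mathbf{v}_1,\ldots,\mathbf{v}_r]$, the top $r$ orthonormal eigenvectors of $\bP$. Then $\bTheta = \diag(\lambda_1,\ldots,\lambda_r)$, so $\theta_i = \lambda_i$ for every $i$ and equality holds. If $\bV$ merely spans the same subspace, $\bV = [\mathbf{v}_1,\ldots,\mathbf{v}_r]\bQ$ for some orthogonal $\bQ$. Both determinants are then invariant under this change, since $|\bQ^\top \bTheta \bQ| = |\bTheta|$ and likewise for $\bI_r - \bTheta$.

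\textbf{Main obstacle.} The only delicate point is making sure the componentwise interlacing $\theta_i \leq \lambda_i$ is stated in the index-matched form. The bound would fail if one only knew the weaker fact that the $\theta_i$ lie in the spectral range of $\bP$. I would invoke the Cauchy interlacing theorem for the compression $\bV^\top\bP\bV$, which gives $\lambda_{i+d-r} \leq \theta_i \leq \lambda_i$. The rest of the argument is monotonicity and nonnegativity.
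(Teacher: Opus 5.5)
Your proposal is correct and follows essentially the same route as the paper's proof. Both use Poincar\'e separation to get $\theta_i \le \lambda_i$, use the monotonicity of $t\mapsto t/(1-t)$ on $[0,1)$ to compare factors, and take the product, with equality at the top eigenvectors. Your explicit remarks on nonnegativity of the factors (needed to multiply the inequalities) and on invariance under a change of orthonormal basis in the equality case are small but welcome additions that the paper leaves implicit.
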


\begin{proof}
By the Poincar\'{e} separation theorem (see, e.g., \citet{horn2013}), the
eigenvalues $\mu_1 \geq \cdots \geq \mu_r$ of $\bV^\top \bP\, \bV$ satisfy
$\mu_i \leq \lambda_i$ for each $i = 1, \ldots, r$, with equality when $\bV$
spans the top $r$ eigenvectors of $\bP$. Since $\bP$ has eigenvalues in
$[0,1)$ and $\mu_i \leq \lambda_i < 1$, the function $t \mapsto t/(1-t)$ is
increasing on $[0,1)$, so $\mu_i/(1-\mu_i) \leq \lambda_i/(1-\lambda_i)$ for
each~$i$. Taking the product:
\[
\frac{|\bV^\top \bP\, \bV|}{|\bV^\top(\bI_d - \bP)\bV|}
= \prod_{i=1}^r \frac{\mu_i}{1-\mu_i}
\leq \prod_{i=1}^r \frac{\lambda_i}{1-\lambda_i},
\]
where we used $|\bV^\top \bP\, \bV| = \prod_i \mu_i$ and
$|\bV^\top(\bI_d - \bP)\bV| = \prod_i (1-\mu_i)$, the latter following
from $\bV^\top(\bI_d - \bP)\bV = \bV^\top\bV - \bV^\top \bP\, \bV
= \bI_r - \bV^\top \bP\, \bV$ (using $\bV^\top\bV = \bI_r$ since
$\bV \in \Stief(d,r)$), which has eigenvalues $1 - \mu_i$.
Equality holds when $\mu_i = \lambda_i$ for all $i$, i.e., when $\bV$ spans
the top $r$ eigenvectors of $\bP$.
\end{proof}

These two lemmas, combined with the Ky Fan inequality for $J_{\mathrm{TR}}$
and $J_{\mathrm{TD}}$, establish that the optimal $(\theta_1, \ldots,
\theta_r)$ simultaneously maximizing all four objectives is the vector of $r$
largest eigenvalues of $(\Stml)^{-1/2}\Sb(\Stml)^{-1/2}$.

\section{Details for the Distance Preservation Bound}
\label{app:distance}

\subsection{Singular Value Bounds for the Projected Label-Effect Matrix}

For $\bW \in \Stief(d,r)$ and $\bA \in \R^{d \times L}$, the singular values
of $\bW^\top\bA \in \R^{r \times L}$ satisfy the following interlacing
inequalities (a consequence of the Cauchy interlacing theorem for singular
values; see, e.g., \citealt{horn2013}):
\[
\sigma_i(\bA) \geq \sigma_i(\bW^\top\bA) \geq \sigma_{i+d-r}(\bA),
\quad i = 1, \ldots, \min(r, L).
\]
In particular, $\sigma_{\max}(\bW^\top\bA) \leq \sigma_{\max}(\bA)$.

\section{Proofs for Section~\ref{sec:statistical}}
\label{app:statistical}

\subsection{Proof of Theorem~\ref{thm:subspace} (Finite-Sample Subspace Estimation)}
\label{app:proof-subspace}

\paragraph{Step 1: Normalization and population reference.}
Write $\hat{\bM} = 2\Sb - \Stml = \Sb - \Sw$ and
$\Mstar = \Sbpop - \Swpop$.
Note that $\Sb$ and $\Sw$ (as defined in~\eqref{eq:Sb}--\eqref{eq:Sw})
scale linearly with $n$, while $\Sbpop$ and $\Swpop$ are $O(1)$.
Since eigenspaces are invariant under scalar multiplication,
the top-$r$ eigenspace of $\hat{\bM}$ equals that of $\hat{\bM}/n$.
We compare $\hat{\bM}/n$ with the \emph{centered}
population reference $\Mstar_c$ defined
in Remark~\ref{rem:cooccurrence}:
\begin{equation}\label{eq:Mstar-c}
\Mstar_c \;:=\; \bA\,\bQ_\pi\,\bA^\top - K^{\mathrm{pop}}\bSigma_w,
\end{equation}
where $\bQ_\pi = \bB_\pi - \bW_\pi$ is the net label discriminant
matrix, with
$\bB_\pi = \bSigmay\bD_\pi^{-1}\bSigmay$ (the population centered
between-class label scatter) and
$\bW_\pi = \sum_\ell \pi_\ell\,\mathrm{Cov}(\mathbf{y} \mid y_\ell = 1)$
(the population within-class label scatter).
This is the correct $O(1)$ target: in the noiseless limit
$\hat{\bM}/n \to \Mstar_c$ as $n \to \infty$
(see Steps~2--3 below for the derivation).

In the \emph{single-label case} ($k_i = 1$ for all $i$), every sample
belongs to exactly one class, so $\bSigmay = \bD_\pi - \bpi\bpi^\top$,
$\bW_\pi = \bzero$, and
$\bQ_\pi = \bD_\pi - \bpi\bpi^\top$.  Then
$\Mstar_c = \bA(\bD_\pi - \bpi\bpi^\top)\bA^\top - \Swpop
= \Mstar - \bA\bpi\bpi^\top\bA^\top$,
and since $\bA\bpi \in \col(\bA)$, the rank-one centering perturbation
acts entirely within the signal subspace, so the top-$r$ eigenspace of
$\Mstar_c$ coincides with that of $\Mstar$ (namely~$\Wstar$).
By Weyl's inequality,
$\gap_r(\Mstar_c) \geq \gap_r(\Mstar) - \|\bA\bpi\|^2
\geq \gap_r(\Mstar) - \|\bA\|_2^2\|\bpi\|^2$,
which is positive under assumption~(A2) (absorbed into the gap condition
by adjusting $C_0$ in~(A3)).

In the \emph{general multilabel case}, the top-$r$ eigenspace of
$\Mstar_c$ depends on the label co-occurrence structure through~$\bQ_\pi$
and may differ from~$\Wstar$.
We denote the top-$r$ eigenspace of $\Mstar_c$ by $\Wstar_c$; the
finite-sample bound established below holds with $\Wstar_c$ as the
target.  When $k_i = 1$ for all~$i$, we have $\Wstar_c = \Wstar$.

By the triangle inequality applied to the normalized quantities:
\[
\bigl\|\hat{\bM}/n - \Mstar_c\bigr\|_2
\leq \bigl\|\Sb/n - \bA\bB_\pi\bA^\top\bigr\|_2
+ \bigl\|\Sw/n - \Swpopc\bigr\|_2,
\]
where $\Swpopc = \bA\bW_\pi\bA^\top + K^{\mathrm{pop}}\bSigma_w$
is the corrected within-class population target (in the single-label
case $\bW_\pi = \bzero$ and $\Swpopc = \Swpop$).

\paragraph{Step 2: Bounding $\|\Sb/n - \bA\bB_\pi\bA^\top\|_2$.}
From Lemma~\ref{lem:factor}, $\Sb = \bM\bM^\top$ where
$\bM = \tilde{\bX}^\top\bY\bD_n^{-1/2}$. Under the linear model,
$\mathbf{x}_i = \bmu + \bA\mathbf{y}_i + \bepsilon_i$, so $\tilde{\bX} =
\bH\bX = \bH(\bone_n\bmu^\top + \bY\bA^\top + \mathbf{E})
= \bH\bY\bA^\top + \bH\mathbf{E}$ where $\mathbf{E} = [\bepsilon_1, \ldots,
\bepsilon_n]^\top$ is the noise matrix. Thus $\bM = (\bA\bY^\top\bH +
\mathbf{E}^\top\bH)\bY\bD_n^{-1/2} = \bA(\bY^\top\bH\bY)\bD_n^{-1/2}
+ \mathbf{E}^\top\bH\bY\bD_n^{-1/2}$.

Define $\bM^{\mathrm{pop}} = \bA(\bY^\top\bH\bY)\bD_n^{-1/2}$ (the
signal-only component) and $\bDelta_M = \mathbf{E}^\top\bH\bY\bD_n^{-1/2}$
(the noise component). Let ${\Sb}^{(0)} = \bM^{\mathrm{pop}}(\bM^{\mathrm{pop}})^\top$
denote the noiseless between-class scatter for the fixed label matrix $\bY$.
Since $\Sb = (\bM^{\mathrm{pop}} + \bDelta_M)
(\bM^{\mathrm{pop}} + \bDelta_M)^\top$:
\begin{equation}\label{eq:Sb-decomp}
\bigl\|\Sb/n - \bA\bB_\pi\bA^\top\bigr\|_2 \leq
\underbrace{\bigl\|(\Sb - {\Sb}^{(0)})/n\bigr\|_2}_{\text{noise}} \;+\;
\underbrace{\bigl\|{\Sb}^{(0)}/n - \bA\bB_\pi\bA^\top\bigr\|_2}_{%
\text{label-frequency bias}}.
\end{equation}

\noindent\emph{Noise term.} Since $\Sb - {\Sb}^{(0)} =
\bM^{\mathrm{pop}}\bDelta_M^\top + \bDelta_M(\bM^{\mathrm{pop}})^\top
+ \bDelta_M\bDelta_M^\top$, we have
\[
\|(\Sb - {\Sb}^{(0)})/n\|_2
\leq (1/n)\|\bDelta_M\|_2
\bigl(2\|\bM^{\mathrm{pop}}\|_2 + \|\bDelta_M\|_2\bigr).
\]
The signal factor satisfies
$\|\bM^{\mathrm{pop}}\|_2
\leq \|\bA\|_2\,
\|(\bY^\top\bH\bY)\bD_n^{-1/2}\|_2
= O(\|\bA\|_2\sqrt{n})$
(since the normalized label structure
$(\bY^\top\bH\bY)/(n\bD_n^{1/2})$ has
bounded spectral norm under~(A1)).

It remains to bound $\|\bDelta_M\|_2$.  Since
$\bDelta_M = \mathbf{E}^\top\bH\bY\bD_n^{-1/2}$ where
$\mathbf{E} = [\bepsilon_1, \ldots, \bepsilon_n]^\top \in \R^{n \times d}$
is the noise matrix with independent $\sigma$-sub-Gaussian rows,
we use the representation: for any unit vector $\mathbf{v} \in \R^L$,
\[
\bDelta_M \mathbf{v}
= \mathbf{E}^\top\bH\bY\bD_n^{-1/2}\mathbf{v}
= \sum_{i=1}^n g_i\, \bepsilon_i,
\]
where $\mathbf{g} = \bH\bY\bD_n^{-1/2}\mathbf{v} \in \R^n$ is a
deterministic vector (since $\bY$ and $\bD_n$ are fixed under~(A1)).
This is a weighted sum of \emph{independent} $\sigma$-sub-Gaussian
vectors (the independence of the $\bepsilon_i$ is the only
requirement; no column-independence assumption on $\bDelta_M$ is needed).
By standard concentration for sums of independent sub-Gaussian vectors
\citep[Theorem~3.4.6]{vershynin2018},
$\|\bDelta_M\mathbf{v}\|
\leq C\,\sigma\,\|\mathbf{g}\|\,(\sqrt{d} + \sqrt{\log(1/\delta')})$
with probability $\geq 1 - \delta'$.
Since $\bH$ is a contraction ($\|\bH\|_2 \leq 1$),
$\|\mathbf{g}\| \leq \|\bY\bD_n^{-1/2}\mathbf{v}\|
\leq \|\bY\bD_n^{-1/2}\|_2$, which equals
$\sqrt{\lambda_{\max}(\bD_n^{-1/2}\bGamma\bD_n^{-1/2})}$---a
bounded constant $C_\Gamma$ depending only on the label structure
under~(A1).
Taking the supremum over unit $\mathbf{v}$ via an $\varepsilon$-net
on $S^{L-1}$ (of cardinality $\leq 5^L$) and a union bound gives
$\|\bDelta_M\|_2 \leq c_b\, \sigma\, C_\Gamma\,
(\sqrt{d} + \sqrt{L} + \sqrt{\log(1/\delta)})$ with probability
$\geq 1 - \delta/2$, which is $O(\sigma\sqrt{d})$ when $d \geq L$
(as is typical).
Combining: the noise contribution to~\eqref{eq:Sb-decomp} satisfies
\begin{align*}
\bigl\|(\Sb - {\Sb}^{(0)})/n\bigr\|_2
&\leq \frac{\|\bDelta_M\|_2}{n}
     \Bigl(2\|\bM^{\mathrm{pop}}\|_2 + \|\bDelta_M\|_2\Bigr) \\
&= \frac{O(\sigma\sqrt{d})}{n}
     \Bigl(O(\|\bA\|_2\sqrt{n}) + O(\sigma\sqrt{d})\Bigr)
= O\!\left(\frac{\sigma\|\bA\|_2\sqrt{d}}{n^{1/2}}\right)
\end{align*}
(the quadratic term $O(\sigma^2 d/n)$ is lower-order for
$n \gg \sigma d/\|\bA\|_2$, which is ensured by assumption~(A3)).
This noise rate $O(\sigma\|\bA\|_2\sqrt{d/n})$ vanishes at rate
$n^{-1/2}$; both this and the bias~\eqref{eq:bias-bound} will be
combined with the within-class error in Step~4.

\noindent\emph{Bias term.}
We compute the noiseless limit exactly.  Since
$\bY^\top\bH\bY = \bY^\top\bY - (\bY^\top\bone)(\bone^\top\bY)/n
= \bGamma_n - n\hat\bpi\hat\bpi^\top$,
where $\bGamma_n = \bY^\top\bY$ is the $L \times L$ \emph{label
co-occurrence matrix} with entries $(\bGamma_n)_{\ell\ell} = n_\ell$
and $(\bGamma_n)_{\ell\ell'} = n_{\ell\ell'} := \sum_i y_{i\ell}y_{i\ell'}$
for $\ell \neq \ell'$, and $\hat\bpi = (n_1/n,\ldots,n_L/n)^\top$,
we have
\[
{\Sb}^{(0)}/n = \bA\bigl(\bC_n - \hat\bpi\hat\bpi^\top\bigr)
\hat{\bPi}^{-1}
\bigl(\bC_n - \hat\bpi\hat\bpi^\top\bigr)\bA^\top,
\]
where $\bC_n = \bGamma_n/n$ is the empirical label co-occurrence matrix
and $\hat{\bPi} = \bD_n/n = \diag(\hat\bpi)$.
As $n \to \infty$, $\bC_n \to \bC = \mathbb{E}[\mathbf{y}_i\mathbf{y}_i^\top]$
(the population co-occurrence), $\hat\bpi \to \bpi$, and
$\hat\bPi \to \bD_\pi$, so
${\Sb}^{(0)}/n \to \bA\,\bSigmay\bD_\pi^{-1}\bSigmay\,\bA^\top
= \bA\bB_\pi\bA^\top$,
where $\bSigmay = \bC - \bpi\bpi^\top$ is the label covariance
(see Remark~\ref{rem:cooccurrence}).
The bias is thus
$\bigl\|{\Sb}^{(0)}/n - \bA\bB_\pi\bA^\top\bigr\|_2
\leq \|\bA\|_2^2\,
\bigl\|(\bC_n - \hat\bpi\hat\bpi^\top)\hat\bPi^{-1}(\bC_n - \hat\bpi\hat\bpi^\top)
- \bSigmay\bD_\pi^{-1}\bSigmay\bigr\|_2$.

We require the \emph{deterministic} label-frequency regularity condition:
there exists a sequence $\rho_n \geq 0$ such that
\begin{equation}\label{eq:label-reg}
\max_\ell |n_\ell/n - \pi_\ell| \leq \rho_n, \quad
\max_{\ell \neq \ell'} |n_{\ell\ell'}/n - \pi_{\ell\ell'}| \leq \rho_n,
\quad
\max_{\substack{j,k,\ell \\ \text{distinct}}}
|n_{jk\ell}/n - \pi_{jk\ell}| \leq \rho_n,
\end{equation}
where $n_{jk\ell} = \sum_i y_{ij}y_{ik}y_{i\ell}$ and
$\pi_{jk\ell} = \mathbb{E}[y_{1j}y_{1k}y_{1\ell}]$ are the triple
label co-occurrence count and probability.
This condition requires that marginal, pairwise, and triple label
frequencies approximate their population counterparts at rate~$\rho_n$.
The triple condition is needed only in the multilabel case to control
the within-class label scatter (Step~3 below); in the single-label
case, all pairwise and triple co-occurrences vanish identically
($n_{\ell\ell'} = n_{jk\ell} = 0$ for distinct indices since each
sample carries exactly one label), so only the marginal condition is
active.
The condition is mild: when labels are drawn i.i.d., each
$n_{jk\ell}/n$ is the mean of i.i.d.\
$\mathrm{Bernoulli}(\pi_{jk\ell})$ random variables, so
Hoeffding's inequality and a union bound over the at most
$L + \binom{L}{2} + \binom{L}{3}$ distinct entries gives
$\rho_n = C_L\sqrt{\log(L/\delta')/n}$ with probability
$\geq 1 - \delta'$;
choosing $\delta' = \delta/8$ and absorbing
the failure event into the overall probability budget (which changes
the constant in assumption~(A3) but not the rate) yields a
deterministic conditioning argument.
When labels are fixed (as under assumption~(A1)), condition~\eqref{eq:label-reg}
is a verifiable property of the label matrix~$\bY$.

Condition~\eqref{eq:label-reg} implies the following spectral-norm bounds.
For the co-occurrence matrix: $\bC_n - \bC$ is an $L \times L$
symmetric matrix with entries bounded by $\rho_n$, so
$\|\bC_n - \bC\|_2 \leq \|\bC_n - \bC\|_F \leq L\rho_n$.
For the diagonal matrix: $\|\hat\bPi - \bD_\pi\|_2
= \max_\ell|n_\ell/n - \pi_\ell| \leq \rho_n$.
For the empirical label covariance
$\hat\bSigmay := \bC_n - \hat\bpi\hat\bpi^\top$, using the
decomposition
$\hat\bSigmay - \bSigmay = (\bC_n - \bC) - (\hat\bpi\hat\bpi^\top
- \bpi\bpi^\top)$
and the bound
$\|\hat\bpi\hat\bpi^\top - \bpi\bpi^\top\|_2
\leq \|\hat\bpi - \bpi\|\,(\|\hat\bpi\| + \|\bpi\|)
\leq \sqrt{L}\,\rho_n\,(2\|\bpi\| + \sqrt{L}\,\rho_n)$, we obtain
$\|\hat\bSigmay - \bSigmay\|_2 \leq L\rho_n
+ \sqrt{L}\,\rho_n\,(2\|\bpi\| + \sqrt{L}\,\rho_n)
=: \eta_\Sigma(\rho_n)$.
For the inverse diagonal: provided $\rho_n < \pi_{\min}/2$ where
$\pi_{\min} = \min_\ell \pi_\ell > 0$, we have
$\hat\pi_\ell \geq \pi_{\min}/2$ for all~$\ell$, so
$\|\hat\bPi^{-1}\|_2 \leq 2/\pi_{\min}$
and $\|\hat\bPi^{-1} - \bD_\pi^{-1}\|_2
\leq \rho_n / (\pi_{\min}(\pi_{\min} - \rho_n))
\leq 2\rho_n/\pi_{\min}^2$.

To bound the full bias, we telescope the product:
\begin{align*}
&\hat\bSigmay\hat\bPi^{-1}\hat\bSigmay
- \bSigmay\bD_\pi^{-1}\bSigmay \\
&\quad= \hat\bSigmay\hat\bPi^{-1}(\hat\bSigmay - \bSigmay)
+ (\hat\bSigmay - \bSigmay)\hat\bPi^{-1}\bSigmay
+ \bSigmay(\hat\bPi^{-1} - \bD_\pi^{-1})\bSigmay,
\end{align*}
where the first line uses $\hat\bSigmay = \bSigmay +
(\hat\bSigmay - \bSigmay)$ in the first factor and the second line
handles the middle factor.  Since
$\|\hat\bSigmay\|_2 \leq \|\bSigmay\|_2 + \eta_\Sigma(\rho_n)$, the
triangle inequality gives
\begin{align}\label{eq:label-bias-explicit}
&\bigl\|\hat\bSigmay\hat\bPi^{-1}\hat\bSigmay
- \bSigmay\bD_\pi^{-1}\bSigmay\bigr\|_2 \nonumber\\
&\quad\leq \frac{2}{\pi_{\min}}\bigl(\|\bSigmay\|_2
+ \eta_\Sigma\bigr)\,\eta_\Sigma
+ \frac{2\rho_n}{\pi_{\min}^2}\,\|\bSigmay\|_2^2
=: \eta_B(\rho_n).
\end{align}
Substituting into the bias bound:
\begin{equation}\label{eq:bias-bound}
\bigl\|{\Sb}^{(0)}/n - \bA\bB_\pi\bA^\top\bigr\|_2
\leq \|\bA\|_2^2\, \eta_B(\rho_n).
\end{equation}
Since $\eta_\Sigma(\rho_n) = \Theta(\rho_n)$ and
$\eta_B(\rho_n) = \Theta(\rho_n)$ as $\rho_n \to 0$
(with constants depending on $L$, $\|\bSigmay\|_2$, and
$\pi_{\min}$), the bias is $\Theta(\|\bA\|_2^2\,\rho_n)$.
For the typical rate $\rho_n = O(1/\sqrt{n})$ this recovers
$O(\|\bA\|_2^2/\sqrt{n})$, which is
lower-order relative to the noise term for large~$n$
(ensured by assumption~(A3)).

\paragraph{Step 3: Bounding $\|\Sw/n - \Swpopc\|_2$.}

Under the linear model, write
$\mathbf{x}_i - \bmu_\ell = \mathbf{d}_{i\ell} + (\bepsilon_i -
\bar\bepsilon_\ell)$,
where $\mathbf{d}_{i\ell} = \bA(\mathbf{y}_i - \bar{\mathbf{y}}_\ell)$
is the deterministic signal deviation (with
$\bar{\mathbf{y}}_\ell = (1/n_\ell)\sum_{j: y_{j\ell}=1}\mathbf{y}_j$)
and $\bar\bepsilon_\ell = (1/n_\ell)\sum_{j: y_{j\ell}=1}\bepsilon_j$
is the class-$\ell$ noise mean.
In the single-label case, all samples with label~$\ell$ have
$\mathbf{y}_i = \mathbf{e}_\ell = \bar{\mathbf{y}}_\ell$, so
$\mathbf{d}_{i\ell} = \bzero$.
In the multilabel case, different samples with label~$\ell$ may carry
different co-occurring labels, so $\mathbf{d}_{i\ell} \neq \bzero$ in
general.

\smallskip\noindent\emph{Oracle decomposition.}
The sample means $\bar\bepsilon_\ell$ create statistical dependence
between the per-sample summands
$\bZ_i = \sum_{\ell: y_{i\ell}=1}
(\mathbf{x}_i - \bmu_\ell)(\mathbf{x}_i - \bmu_\ell)^\top$.
We resolve this via an oracle decomposition.  Define the
\emph{oracle within-class summands}
\[
\bZ_i^{\mathrm{ora}} = \sum_{\ell:\, y_{i\ell}=1}
(\mathbf{d}_{i\ell} + \bepsilon_i)
(\mathbf{d}_{i\ell} + \bepsilon_i)^\top,
\]
which replace each sample mean $\bar\bepsilon_\ell$ with its
expectation (zero).  Since $\mathbf{d}_{i\ell}$ depends only on the
fixed label matrix~$\bY$, each $\bZ_i^{\mathrm{ora}}$ is a function of
$\bepsilon_i$ alone, and hence
$\bZ_1^{\mathrm{ora}}, \ldots, \bZ_n^{\mathrm{ora}}$ are
\emph{independent}.

Using the identity
$\sum_{i: y_{i\ell}=1} \mathbf{d}_{i\ell} = \bzero$
(signal deviations sum to zero within each class) and the standard
centering decomposition
$\sum_i (\mathbf{a}_i - \bar{\mathbf{a}})
(\mathbf{a}_i - \bar{\mathbf{a}})^\top
= \sum_i \mathbf{a}_i\mathbf{a}_i^\top - n\bar{\mathbf{a}}\bar{\mathbf{a}}^\top$
applied to $\mathbf{a}_i = \mathbf{d}_{i\ell} + \bepsilon_i$ within
class~$\ell$ (where $\bar{\mathbf{a}} = \bar\bepsilon_\ell$ since the
$\mathbf{d}_{i\ell}$ sum to zero), we obtain the exact identity:
\begin{equation}\label{eq:oracle-decomp}
\Sw = \Sw^{\mathrm{ora}} - \sum_{\ell=1}^L n_\ell\,
\bar\bepsilon_\ell\bar\bepsilon_\ell^\top,
\end{equation}
where $\Sw^{\mathrm{ora}} = \sum_{i=1}^n \bZ_i^{\mathrm{ora}}$.

\smallskip\noindent\emph{Expectations.}
Since $\mathbb{E}[\bepsilon_i\bepsilon_i^\top] = \bSigma_w$ and the
$\mathbf{d}_{i\ell}$ are deterministic:
$\mathbb{E}[\Sw^{\mathrm{ora}}]
= \Sw^{\mathrm{sig}} + K\bSigma_w$
where
$\Sw^{\mathrm{sig}} = \sum_\ell \sum_{i: y_{i\ell}=1}
\mathbf{d}_{i\ell}\mathbf{d}_{i\ell}^\top$
is the noiseless within-class signal scatter,
and $K = \sum_i k_i$ is the total label count.
Also $\mathbb{E}[\sum_\ell n_\ell\bar\bepsilon_\ell\bar\bepsilon_\ell^\top]
= L\bSigma_w$, so $\mathbb{E}[\Sw]
= \Sw^{\mathrm{sig}} + (K - L)\bSigma_w$.
We now bound $\|\Sw^{\mathrm{sig}}/n - \bA\bW_\pi\bA^\top\|_2$
using an argument specific to the within-class structure.
Since $\mathbf{d}_{i\ell} = \bA(\mathbf{y}_i - \bar{\mathbf{y}}_\ell)$,
we have $\Sw^{\mathrm{sig}}/n = \bA\,\hat\bW_n\,\bA^\top$ where
$\hat\bW_n = \sum_\ell (n_\ell/n)\,\widehat{\mathrm{Cov}}_\ell$
is the empirical within-class label scatter, with
$\widehat{\mathrm{Cov}}_\ell
= (1/n_\ell)\sum_{i:\,y_{i\ell}=1}
(\mathbf{y}_i - \bar{\mathbf{y}}_\ell)
(\mathbf{y}_i - \bar{\mathbf{y}}_\ell)^\top$
the sample covariance of labels within class~$\ell$.
Note that this differs structurally from the between-class
bias: $\bB_\pi = \bSigmay\bD_\pi^{-1}\bSigmay$ is a product of
pairwise label quantities, but
$\bW_\pi = \sum_\ell \pi_\ell\,\mathrm{Cov}(\mathbf{y}\mid y_\ell = 1)$
involves conditional covariances whose entries
$(\mathrm{Cov}_\ell)_{jk}
= \pi_{jk\ell}/\pi_\ell - (\pi_{j\ell}/\pi_\ell)(\pi_{k\ell}/\pi_\ell)$
depend on the triple co-occurrence probabilities~$\pi_{jk\ell}$.
Hence the telescoping argument used for $\bB_\pi$ does not
directly apply; instead we bound the perturbation
entry by entry.

Decompose:
$\hat\bW_n - \bW_\pi
= \sum_\ell \bigl[(n_\ell/n - \pi_\ell)\,\widehat{\mathrm{Cov}}_\ell
+ \pi_\ell\,(\widehat{\mathrm{Cov}}_\ell - \mathrm{Cov}_\ell)\bigr]$.
For the first sum, $\|\widehat{\mathrm{Cov}}_\ell\|_2
\leq \kmax$ (since each $\mathbf{y}_i\mathbf{y}_i^\top$ has
spectral norm $\|\mathbf{y}_i\|^2 = k_i \leq \kmax$), so
$\bigl\|\sum_\ell(n_\ell/n - \pi_\ell)\,
\widehat{\mathrm{Cov}}_\ell\bigr\|_2
\leq L\,\kmax\,\rho_n$.
For the second sum, the entries of
$\widehat{\mathrm{Cov}}_\ell - \mathrm{Cov}_\ell$ are:
\[
(\widehat{\mathrm{Cov}}_\ell)_{jk} - (\mathrm{Cov}_\ell)_{jk}
= \Bigl[\frac{n_{jk\ell}}{n_\ell}
  - \frac{\pi_{jk\ell}}{\pi_\ell}\Bigr]
- \Bigl[\frac{n_{j\ell}}{n_\ell}\,\frac{n_{k\ell}}{n_\ell}
  - \frac{\pi_{j\ell}}{\pi_\ell}\,\frac{\pi_{k\ell}}{\pi_\ell}\Bigr].
\]
For the first bracket, writing
$n_{jk\ell}/n_\ell = (n_{jk\ell}/n)/(n_\ell/n)$ and using
$\pi_{jk\ell} \leq \pi_\ell$:
\[
\Bigl|\frac{n_{jk\ell}}{n_\ell} - \frac{\pi_{jk\ell}}{\pi_\ell}\Bigr|
= \frac{|\pi_\ell(n_{jk\ell}/n - \pi_{jk\ell})
  + \pi_{jk\ell}(\pi_\ell - n_\ell/n)|}{(n_\ell/n)\,\pi_\ell}
\leq \frac{2\pi_\ell\,\rho_n}{(\pi_\ell - \rho_n)\,\pi_\ell}
\leq \frac{4\rho_n}{\pi_{\min}},
\]
where the triple condition in~\eqref{eq:label-reg} supplies
$|n_{jk\ell}/n - \pi_{jk\ell}| \leq \rho_n$
(when $j,k,\ell$ are distinct; coinciding indices reduce to the
pairwise or marginal conditions).
The same ratio bound gives
$|n_{j\ell}/n_\ell - \pi_{j\ell}/\pi_\ell| \leq 4\rho_n/\pi_{\min}$
using only the pairwise condition.
For the second bracket, since all conditional frequencies
$n_{j\ell}/n_\ell \leq 1$ and
$\pi_{j\ell}/\pi_\ell \leq 1$:
\[
\Bigl|\frac{n_{j\ell}}{n_\ell}\frac{n_{k\ell}}{n_\ell}
- \frac{\pi_{j\ell}}{\pi_\ell}\frac{\pi_{k\ell}}{\pi_\ell}\Bigr|
\leq \frac{8\rho_n}{\pi_{\min}}.
\]
Combining:
$|(\widehat{\mathrm{Cov}}_\ell)_{jk} - (\mathrm{Cov}_\ell)_{jk}|
\leq 12\rho_n/\pi_{\min}$, so
$\|\widehat{\mathrm{Cov}}_\ell - \mathrm{Cov}_\ell\|_2
\leq \|\widehat{\mathrm{Cov}}_\ell - \mathrm{Cov}_\ell\|_F
\leq 12L\rho_n/\pi_{\min}$
and
$\bigl\|\sum_\ell \pi_\ell\,(\widehat{\mathrm{Cov}}_\ell
- \mathrm{Cov}_\ell)\bigr\|_2
\leq 12L\,K^{\mathrm{pop}}\rho_n/\pi_{\min}$
where $K^{\mathrm{pop}} = \sum_\ell\pi_\ell$.
Altogether:
\begin{equation}\label{eq:within-bias}
\|\hat\bW_n - \bW_\pi\|_2
\leq \bigl(L\kmax + 12LK^{\mathrm{pop}}/\pi_{\min}\bigr)\,\rho_n
=: \eta_W(\rho_n),
\end{equation}
so $\|\Sw^{\mathrm{sig}}/n - \bA\bW_\pi\bA^\top\|_2
\leq \|\bA\|_2^2\,\eta_W(\rho_n)$.
Hence
$\|\mathbb{E}[\Sw]/n - \Swpopc\|_2
\leq \|\bA\|_2^2\,\eta_W(\rho_n)
+ L\sigma^2\rho_n + L\sigma^2/n$,
where $|K/n - K^{\mathrm{pop}}| \leq L\rho_n$ provides the
$L\sigma^2\rho_n$ term and the $L\sigma^2/n$ term arises from
the correction summand.

\smallskip\noindent\emph{Concentration of $\Sw^{\mathrm{ora}}$ via
truncation.}
Since the $\bepsilon_i$ are $\sigma$-sub-Gaussian, the oracle summands
$\bZ_i^{\mathrm{ora}} - \mathbb{E}[\bZ_i^{\mathrm{ora}}]$ have
sub-exponential operator norms.  The classical matrix Bernstein inequality
requires almost-sure boundedness, which we achieve via truncation.

Define the high-probability event
$\mathcal{E} = \bigl\{\max_{1 \leq i \leq n}\|\bepsilon_i\|
\leq \tau\bigr\}$ with truncation radius
$\tau = C_\tau\,\sigma\sqrt{d + \log(n/\delta)}$.
Standard sub-Gaussian norm bounds
\citep[Theorem~3.1.1]{vershynin2018} give
$\Pr(\|\bepsilon_i\| > \tau) \leq 2\exp\bigl(-(
\tau^2/(C'\sigma^2) - d)\bigr)$; choosing $C_\tau$ large enough
and applying a union bound over $i = 1,\ldots,n$ yields
$\Pr(\mathcal{E}^c) \leq \delta/4$.
On $\mathcal{E}$, each oracle summand is deterministically bounded:
\[
\|\bZ_i^{\mathrm{ora}} - \mathbb{E}[\bZ_i^{\mathrm{ora}}]\|_2
\leq 2\,k_i\bigl(\tau + \max_\ell\|\mathbf{d}_{i\ell}\|\bigr)^2
= O\!\bigl(k_i\,\sigma^2(d + \log(n/\delta))\bigr),
\]
since $\|\mathbf{d}_{i\ell}\| \leq 2\|\bA\|_2\kmax = O(\|\bA\|_2)$
under the linear model.
Taking the maximum over $i$ gives
$B = O(\kmax\,\sigma^2\,(d + \log(n/\delta)))$.

Conditioned on $\mathcal{E}$, the oracle summands
$\bZ_i^{\mathrm{ora}}$ \emph{remain independent}: since
$\mathcal{E} = \bigcap_i \{\|\bepsilon_i\| \leq \tau\}$ is a product
event and each $\bZ_i^{\mathrm{ora}}$ depends only on~$\bepsilon_i$,
the conditional distribution factors as a product of the truncated
marginals
$\mathcal{L}(\bepsilon_i \mid \|\bepsilon_i\| \leq \tau)$.
The conditional variance parameter satisfies
$v = \bigl\|\sum_i \mathbb{E}[(\bZ_i^{\mathrm{ora}} -
\mathbb{E}[\bZ_i^{\mathrm{ora}}])^2
\mid \mathcal{E}]\bigr\|_2 = O(K \sigma^4)$
(the conditioning inflates variance by at most $(1 - \delta/4)^{-1}$,
which is $O(1)$).

We apply the matrix Bernstein inequality
\citep[Theorem~6.1]{tropp2012} to the bounded, independent oracle
summands $\{\bZ_i^{\mathrm{ora}} -
\mathbb{E}[\bZ_i^{\mathrm{ora}}]\}_{i=1}^n$ on $\mathcal{E}$,
then combine with $\Pr(\mathcal{E}^c) \leq \delta/4$ via a union bound:
\begin{equation}\label{eq:Swora-conc}
\bigl\|\Sw^{\mathrm{ora}}/n -
\mathbb{E}[\Sw^{\mathrm{ora}}]/n\bigr\|_2
\leq c_w'\,\sigma^2\,\kmax\,\sqrt{d\,\log(d/\delta)/n}
\end{equation}
with probability $\geq 1 - 3\delta/8$.

\smallskip\noindent\emph{Bounding the correction term.}
Each $\bar\bepsilon_\ell$ is a mean of $n_\ell$ independent
$\sigma$-sub-Gaussian vectors, so
$\|\bar\bepsilon_\ell\|^2 \leq C\sigma^2(d + \log(L/\delta))/n_\ell$
with probability $\geq 1 - \delta/(4L)$
\citep[Theorem~3.1.1]{vershynin2018}.
By a union bound over $\ell = 1, \ldots, L$:
\begin{equation}\label{eq:correction-bound}
\biggl\|\frac{1}{n}\sum_{\ell=1}^L n_\ell\,
\bar\bepsilon_\ell\bar\bepsilon_\ell^\top\biggr\|_2
\leq \frac{\sum_\ell n_\ell}{n}\,\max_\ell \|\bar\bepsilon_\ell\|^2
\leq \frac{C\,\kmax\,\sigma^2(d + \log(L/\delta))}{\nmin}
\end{equation}
with probability $\geq 1 - \delta/4$, where $\nmin = \min_\ell n_\ell$
and $\sum_\ell n_\ell / n \leq \kmax$.
Under assumption~(A3), this is $O(\kmax\,\sigma^2 d/\nmin)$,
lower-order relative to~\eqref{eq:Swora-conc}.

\smallskip\noindent\emph{Combining.}
By~\eqref{eq:oracle-decomp} and the triangle inequality:
\begin{multline*}
\bigl\|\Sw/n - \Swpopc\bigr\|_2
\leq \underbrace{\bigl\|\Sw^{\mathrm{ora}}/n -
\mathbb{E}[\Sw^{\mathrm{ora}}]/n\bigr\|_2}_{\text{\eqref{eq:Swora-conc}}} \\
+ \underbrace{\biggl\|\frac{1}{n}\sum_\ell n_\ell
\bar\bepsilon_\ell\bar\bepsilon_\ell^\top\biggr\|_2}_{%
\text{\eqref{eq:correction-bound}}}
+ \underbrace{\bigl\|\mathbb{E}[\Sw^{\mathrm{ora}}]/n -
\Swpopc\bigr\|_2}_{\text{label-freq.\ bias}}.
\end{multline*}
The label-frequency bias was bounded in the Expectations paragraph
above:
\[
\|\mathbb{E}[\Sw^{\mathrm{ora}}]/n - \Swpopc\|_2
\leq \|\bA\|_2^2\,\eta_W(\rho_n) + L\sigma^2\rho_n + L\sigma^2/n
\]
under condition~\eqref{eq:label-reg}, using the conditional
covariance perturbation bound~\eqref{eq:within-bias}.
A union bound gives:
$\|\Sw/n - \Swpopc\|_2 \leq c_w\,
\sigma^2\, \kmax\, \sqrt{d\,\log(d/\delta)/n}$ with probability $\geq
1 - \delta/2$
(correction and bias terms are lower-order).
The $\kmax$ factor arises from the per-sample operator-norm bound: each
sample contributes to $k_i \leq \kmax$ label-conditional scatters.

\paragraph{Step 4: Davis--Kahan application.}
Combining Steps~2--3 by a union bound, and using the
decomposition from Step~1:
\begin{multline*}
\bigl\|\hat{\bM}/n - \Mstar_c\bigr\|_2
\leq \underbrace{c_b\, \sigma\, \|\bA\|_2\,
  \sqrt{(d + \log(1/\delta))/n}}_{\text{Step~2 noise}} \\
+\; \underbrace{\|\bA\|_2^2\,[\eta_B(\rho_n) + \eta_W(\rho_n)]
  + L\sigma^2\,\rho_n}_{\text{bias~\eqref{eq:bias-bound},\eqref{eq:within-bias}}}
\;+\; \underbrace{c_w\, \sigma^2\, \kmax\, \sqrt{d\,\log(d/\delta)/n}}_{\text{Step~3}}
\end{multline*}
with probability $\geq 1 - \delta$.
Since $\sqrt{d + \log(1/\delta)} \leq \sqrt{d\,\log(d/\delta)}$
for $d \geq 3$ and $\delta \leq 1$,
and the combined bias terms are $\Theta((\|\bA\|_2^2 + \sigma^2)\rho_n)$
(lower-order under assumption~(A3) for the typical rate
$\rho_n = O(1/\sqrt{n})$), both stochastic terms share the
$\sqrt{d\,\log(d/\delta)/n}$ rate:
\[
\bigl\|\hat{\bM}/n - \Mstar_c\bigr\|_2
\leq (c_b\, \sigma\, \|\bA\|_2
   + c_w\, \sigma^2\, \kmax)\, \sqrt{d\, \log(d/\delta)/n}.
\]
By the Davis--Kahan $\sin\bTheta$ theorem \citep{davis1970,yu2015},
applied to $\hat{\bM}/n$ and $\Mstar_c$ (whose top-$r$ eigenspace
is $\Wstar_c$; see Step~1), provided
$\|\hat{\bM}/n - \Mstar_c\|_2 < \gap_r(\Mstar_c)/2$
(ensured by assumption~(A3)):
\[
\sin\angle(\What_n, \Wstar_c) \leq
\frac{2\|\hat{\bM}/n - \Mstar_c\|_2}{\gap_r(\Mstar_c)}
\leq \frac{(C_1\, \sigma\, \|\bA\|_2
     + C_2\, \sigma^2\, \kmax)\, \sqrt{d\, \log(d/\delta)/n}}{%
     \gap_r(\Mstar_c)},
\]
where $C_1 = 2c_b$ and $C_2 = 2c_w$ are the universal constants
in~\eqref{eq:subspace-bound}, which is the stated bound.
In the single-label case, $\Wstar_c = \Wstar$ and
$\gap_r(\Mstar_c) \geq \gap_r(\Mstar) - \|\bA\|_2^2\|\bpi\|^2$
(by the Weyl bound from Step~1), so the bound
simplifies to involve $\gap_r(\Mstar)$ as noted in
Remark~\ref{rem:kmax}.
\qed

\subsection{Proof of Theorem~\ref{thm:subspace-Stml} ($\Stml$-Orthogonality Subspace Estimation)}
\label{app:proof-subspace-Stml}

The estimator $\What_S$ is the top-$r$ generalized eigenspace
of $(\Sb, \Stml)$, and the target $\Wstar_S$ is the top-$r$ generalized
eigenspace of $(\Sb^{\infty}, \Stml^{\infty})$ where
$\Sb^{\infty} = \bA\bB_\pi\bA^\top$ and
$\Stml^{\infty} = \Sb^{\infty} + \Swpopc$ with
$\Swpopc = \bA\bW_\pi\bA^\top + K^{\mathrm{pop}}\bSigma_w$.

\paragraph{Step 1: Population whitening.}
Let $\bT = (\Stml^{\infty})^{1/2}$ and define the population-whitened
between-class matrix:
\[
\bP^* = \bT^{-1}\Sb^{\infty}\bT^{-1},
\]
with eigenvalues $\theta_1 \geq \cdots \geq \theta_d$ and top-$r$
eigenspace $\Vstar$.  Since $\Sb^{\infty} \psd 0$ and
$\Stml^{\infty} = \Sb^{\infty} + \Swpopc$ with $\Swpopc \succ 0$
(when $\bSigma_w \succ 0$), the same argument as in the proof of
Theorem~\ref{thm:equiv-St} gives $\theta_i \in [0,1)$ for all~$i$.
The generalized gap is
$\Delta_r = \theta_r - \theta_{r+1} > 0$ by assumption~(A2$'$).
The population generalized eigenspace satisfies
$\Wstar_S = \bT^{-1}\Vstar$.

\paragraph{Step 2: Between-class perturbation in the whitened space.}
Define the population-whitened sample between-class matrix:
\[
\tilde\bP = \bT^{-1}(\Sb/n)\bT^{-1},
\]
and let $\tilde\bV$ be its top-$r$ eigenspace.  Since $\bP^*$ and
$\tilde\bP$ are both real symmetric, the Davis--Kahan $\sin\bTheta$
theorem \citep{davis1970,yu2015} gives:
\begin{equation}\label{eq:DK-whitened}
\sin\angle(\tilde\bV, \Vstar) \leq
\frac{2\|\tilde\bP - \bP^*\|_2}{\Delta_r},
\end{equation}
provided $\|\tilde\bP - \bP^*\|_2 < \Delta_r/2$.
Now,
\[
\|\tilde\bP - \bP^*\|_2 =
\|\bT^{-1}(\Sb/n - \Sb^{\infty})\bT^{-1}\|_2
\leq \frac{\|\Sb/n - \Sb^{\infty}\|_2}
{\lambda_{\min}(\Stml^{\infty})},
\]
where we used $\|\bT^{-1}\|_2^2 = 1/\lambda_{\min}(\Stml^{\infty})$.
The concentration bound from Appendix~\ref{app:proof-subspace}, Step~2
gives, with probability at least $1 - \delta/3$:
\begin{equation}\label{eq:Sb-conc-Stml}
\|\Sb/n - \Sb^{\infty}\|_2
\leq c_b\,\sigma\,\|\bA\|_2\,
\sqrt{(d + \log(3/\delta))/n}
\eqqcolon \varepsilon_b.
\end{equation}
Denoting
$\tilde\varepsilon_b = \varepsilon_b / \lambda_{\min}(\Stml^{\infty})$,
we have $\sin\angle(\tilde\bV, \Vstar) \leq \tilde\varepsilon_b / \Delta_r$.

\paragraph{Step 3: Sample whitening correction.}
The estimator $\What_S$ uses the \emph{sample} $\Stml$ for whitening,
not the population $\Stml^{\infty}$.
In population-whitened coordinates, the generalized eigenvalue problem
$\Sb\mathbf{w} = \theta\,\Stml\mathbf{w}$ becomes
$\tilde\bP\mathbf{v} = \theta\,\tilde\bQ\,\mathbf{v}$ where
$\tilde\bQ = \bT^{-1}(\Stml/n)\bT^{-1} = \bI_d + \bE$ with
\[
\bE = \bT^{-1}(\Stml/n - \Stml^{\infty})\bT^{-1}.
\]
In the population case $\tilde\bQ = \bI_d$, so the generalized
eigenproblem reduces to the standard eigenproblem for $\bP^*$.

The generalized eigenspace of $(\tilde\bP, \tilde\bQ)$ is the image
under $\tilde\bQ^{-1/2}$ of the standard eigenspace of the symmetric matrix
$\tilde\bQ^{-1/2}\tilde\bP\,\tilde\bQ^{-1/2}$
(since $\tilde\bQ \succ 0$ for $\|\bE\|_2 < 1$).
Applying Davis--Kahan to this matrix against~$\bP^*$:
\begin{equation}\label{eq:DK-gen}
\sin\angle(\hat\bV, \Vstar) \leq
\frac{2\|\tilde\bQ^{-1/2}\tilde\bP\,\tilde\bQ^{-1/2}
  - \bP^*\|_2}{\Delta_r},
\end{equation}
where $\hat\bV$ denotes the top-$r$ eigenspace of
$\tilde\bQ^{-1/2}\tilde\bP\,\tilde\bQ^{-1/2}$ in the
population-whitened coordinates, and we require the perturbation
to be less than $\Delta_r/2$.

We decompose:
\begin{align}\label{eq:gen-decomp}
\tilde\bQ^{-1/2}\tilde\bP\,\tilde\bQ^{-1/2} - \bP^*
&= \tilde\bQ^{-1/2}(\tilde\bP - \bP^*)\tilde\bQ^{-1/2}
  + (\tilde\bQ^{-1/2}\bP^*\tilde\bQ^{-1/2} - \bP^*).
\end{align}
For the first term:
$\|\tilde\bQ^{-1/2}(\tilde\bP - \bP^*)\tilde\bQ^{-1/2}\|_2
\leq \|\tilde\bQ^{-1}\|_2\,\tilde\varepsilon_b
\leq \tilde\varepsilon_b / (1 - \|\bE\|_2)$.

For the second term, write $\tilde\bQ^{-1/2} = \bI - \bF$
where $\bF = \bI - (\bI + \bE)^{-1/2}$ satisfies
$\|\bF\|_2 \leq \|\bE\|_2 / (2(1 - \|\bE\|_2))$
for $\|\bE\|_2 < 1$
(by the power series $(\bI + \bE)^{-1/2} =
\bI - \tfrac{1}{2}\bE + \cdots$ with Neumann-type bound).
Then:
\begin{align*}
\tilde\bQ^{-1/2}\bP^*\tilde\bQ^{-1/2} - \bP^*
&= -\bF\bP^* - \bP^*\bF + \bF\bP^*\bF,
\end{align*}
so $\|\tilde\bQ^{-1/2}\bP^*\tilde\bQ^{-1/2} - \bP^*\|_2
\leq 2\|\bF\|_2\,\|\bP^*\|_2 + \|\bF\|_2^2\,\|\bP^*\|_2
\leq 3\|\bF\|_2\,\theta_1$
(for $\|\bF\|_2 \leq 1$), where $\theta_1 = \|\bP^*\|_2 < 1$.

To bound $\|\bE\|_2$: since $\Stml = \Sb + \Sw$,
\[
\|\bE\|_2 \leq
\frac{\|\Sb/n - \Sb^{\infty}\|_2 + \|\Sw/n - \Swpopc\|_2}
{\lambda_{\min}(\Stml^{\infty})}.
\]
The within-class concentration bound from
Appendix~\ref{app:proof-subspace}, Step~3 gives,
with probability at least $1 - \delta/3$:
\begin{equation}\label{eq:Sw-conc-Stml}
\|\Sw/n - \Swpopc\|_2
\leq c_w\,\sigma^2\,\kmax\,\sqrt{(d + \log(3/\delta))/n}
\eqqcolon \varepsilon_w.
\end{equation}
Denote $\tilde\varepsilon_t = (\varepsilon_b + \varepsilon_w) /
\lambda_{\min}(\Stml^{\infty})$, so $\|\bE\|_2 \leq \tilde\varepsilon_t$.
Assumption~(A3$'$) ensures $\tilde\varepsilon_t \leq 1/2$, giving
$\|\bF\|_2 \leq \tilde\varepsilon_t$.

\paragraph{Step 4: Combining and coordinate transformation.}
From~\eqref{eq:DK-gen} and the bounds above:
\begin{equation}\label{eq:whitened-combined}
\sin\angle(\hat\bV, \Vstar) \leq
\frac{2(2\tilde\varepsilon_b + 3\tilde\varepsilon_t\,\theta_1)}{\Delta_r}
\leq \frac{2(2\tilde\varepsilon_b + 3\tilde\varepsilon_t)}{\Delta_r},
\end{equation}
since $\theta_1 < 1$.

To transform back to the original coordinate system, write
$\What_S
= \bT^{-1}\tilde\bQ^{-1/2}\hat\bV_{\mathrm{basis}}$
(where $\hat\bV_{\mathrm{basis}}$ is an orthonormal basis
for~$\hat\bV$)
and $\Wstar_S = \bT^{-1}\Vstar_{\mathrm{basis}}$.
For any invertible $\bL$ and subspaces
$\mathcal{U}, \mathcal{V}$
of equal dimension, the subspace angle satisfies
$\sin\angle(\bL\mathcal{U}, \bL\mathcal{V})
\leq \kappa(\bL)\,\sin\angle(\mathcal{U}, \mathcal{V})$
\citep{stewart1990}.
Since $\What_S = \bL\hat\bV_{\mathrm{basis}}$ with
$\bL = \bT^{-1}\tilde\bQ^{-1/2}$,
we write
$\Wstar_S = \bT^{-1}\Vstar_{\mathrm{basis}}
= \bL(\tilde\bQ^{1/2}\Vstar_{\mathrm{basis}})$, so both
subspaces are images under~$\bL$. By the triangle inequality
for subspace angles:
\begin{align*}
\sin\angle(\What_S, \Wstar_S)
&= \sin\angle(\bL\hat\bV, \bL(\tilde\bQ^{1/2}\Vstar))
\leq \kappa(\bL)\,\sin\angle(\hat\bV, \tilde\bQ^{1/2}\Vstar) \\
&\leq \kappa(\bL)\bigl(\sin\angle(\hat\bV, \Vstar)
  + \sin\angle(\Vstar, \tilde\bQ^{1/2}\Vstar)\bigr).
\end{align*}
Since $\tilde\bQ^{1/2} = \bI + O(\tilde\varepsilon_t)$,
the correction satisfies
$\sin\angle(\Vstar, \tilde\bQ^{1/2}\Vstar) \leq C\tilde\varepsilon_t
\leq C(2\tilde\varepsilon_b + 3\tilde\varepsilon_t)$.
From~\eqref{eq:whitened-combined},
$\sin\angle(\hat\bV, \Vstar) + C(2\tilde\varepsilon_b + 3\tilde\varepsilon_t)
\leq (2/\Delta_r + C)(2\tilde\varepsilon_b + 3\tilde\varepsilon_t)$.
We absorb the bounded factor $(2 + C\Delta_r)/\Delta_r$ into
the final constant~$C_3$ and bound
$\kappa(\bL) \leq \kappa(\bT^{-1})\,\kappa(\tilde\bQ^{-1/2})$.
Now $\kappa(\bT^{-1}) = \kappa(\bT)
= \sqrt{\kappa(\Stml^{\infty})}$, and
$\kappa(\tilde\bQ^{-1/2}) = \sqrt{\kappa(\tilde\bQ)}
\leq \sqrt{(1 + \tilde\varepsilon_t)/(1 - \tilde\varepsilon_t)}
\leq \sqrt{3}$ for $\tilde\varepsilon_t \leq 1/2$.
So:
\begin{equation}\label{eq:final-Stml}
\sin\angle(\What_S, \Wstar_S)
\leq \frac{\sqrt{3\,\kappa(\Stml^{\infty})}\,
  (2\tilde\varepsilon_b + 3\tilde\varepsilon_t)}{\Delta_r}.
\end{equation}

Substituting
$\tilde\varepsilon_b
= \varepsilon_b / \lambda_{\min}(\Stml^{\infty})$
and
$\tilde\varepsilon_t
= (\varepsilon_b + \varepsilon_w)
/ \lambda_{\min}(\Stml^{\infty})$:
\[
2\tilde\varepsilon_b + 3\tilde\varepsilon_t
= \frac{5\varepsilon_b + 3\varepsilon_w}
{\lambda_{\min}(\Stml^{\infty})}
\leq \frac{5\,(c_b\,\sigma\|\bA\|_2 + c_w\,\sigma^2\kmax)\,
  \sqrt{(d + \log(3/\delta))/n}}
  {\lambda_{\min}(\Stml^{\infty})}.
\]
Since $\sqrt{3\,\kappa} \leq 2\kappa$ for $\kappa \geq 1$,
absorbing numerical constants into~$C_3$:
\[
\sin\angle(\What_S, \Wstar_S)
\leq \frac{C_3\,\kappa(\Stml^{\infty})\,
  (\sigma\|\bA\|_2 + \sigma^2\kmax)\,
  \sqrt{d\,\log(d/\delta)/n}}
  {\Delta_r\,\lambda_{\min}(\Stml^{\infty})},
\]
which is~\eqref{eq:subspace-bound-Stml}.

\paragraph{Sample size condition.}
Assumption~(A3$'$) requires $\tilde\varepsilon_t \leq 1/2$
(for $\|\bE\|_2 < 1$) and
\[
\|\tilde\bQ^{-1/2}\tilde\bP\,\tilde\bQ^{-1/2} - \bP^*\|_2
< \Delta_r/2 \qquad \text{(for Davis--Kahan)}.
\]
Both are ensured when
$n \geq C_0'\,\kmax^2\,d\,\log(d/\delta)$
with $C_0'$ depending on
\[
\frac{\sigma\|\bA\|_2 + \sigma^2\kmax}
{\Delta_r\,\lambda_{\min}(\Stml^{\infty})}
\qquad\text{and}\qquad
\kappa(\Stml^{\infty}).
\]

By a union bound over the three concentration events
(between-class, within-class, and the overall perturbation
condition), the stated bound holds with probability at least
$1 - \delta$.
$\qed$

\subsection{Proof of Theorem~\ref{thm:minimax} (Minimax Lower Bound)}
\label{app:proof-minimax}

We use Fano's inequality with a packing of the Grassmannian
\citep{tsybakov2009}.

\paragraph{Step 1: Hypothesis construction.}
Let $\varepsilon > 0$ be a resolution parameter to be chosen later. By the
metric entropy of the Grassmannian $\Gr(d,r)$ with respect to $\sin\angle$
\citep{szarek1998,pajor1986}, there exist $M \geq
\exp(c_1\, r(d-r)/\varepsilon^2)$ subspaces $\mathcal{W}_1, \ldots,
\mathcal{W}_M \in \Gr(d,r)$ such that
$\sin\angle(\mathcal{W}_j, \mathcal{W}_k) \geq \varepsilon$ for all $j \neq k$.

For each $\mathcal{W}_j$, construct the parameter $\bA_j$ so that the top-$r$
eigenspace of ${\Mstar}_j = {\Sbpop}_j - \Swpop$ equals $\mathcal{W}_j$ and
$\gap_r(\Mstar_j) = g$. Concretely, choose
$\bA_j = \bW_j \bLambda^{1/2} \bQ$ where $\bW_j$ is an orthonormal basis for
$\mathcal{W}_j$, $\bQ \in \R^{r \times L}$ is a fixed partial isometry
($\bQ\bQ^\top = \bI_r$; such a $\bQ$ exists when $L \geq r$), and
$\bLambda = g\,(\bQ\bD_\pi\bQ^\top)^{-1}$ so that
${\Sbpop}_j = \bA_j\bD_\pi\bA_j^\top = g\,\bW_j\bW_j^\top$.
Then ${\Mstar}_j = {\Sbpop}_j - K^{\mathrm{pop}}\sigma^2
\bI_d = g\,\bW_j\bW_j^\top - K^{\mathrm{pop}}\sigma^2\bI_d$,
whose top $r$ eigenvalues equal $g - K^{\mathrm{pop}}\sigma^2$ and the
remaining $d - r$ eigenvalues are $-K^{\mathrm{pop}}\sigma^2$, giving
$\gap_r(\Mstar_j) = g$.

\paragraph{Relationship to the centered reference $\Mstar_c$.}
In the multilabel setting, the sample discriminant $\hat\bM/n$
converges to ${\Mstar_c}_j = \bA_j\bQ_\pi\bA_j^\top -
K^{\mathrm{pop}}\sigma^2\bI_d$
(Remark~\ref{rem:cooccurrence}), not to $\Mstar_j$.
For the construction above,
$\bA_j\bQ_\pi\bA_j^\top = \bW_j\bLambda^{1/2}\bQ\bQ_\pi\bQ^\top
\bLambda^{1/2}\bW_j^\top$, which has column space $\col(\bW_j)$ whenever
$\bQ\bQ_\pi\bQ^\top$ is positive definite.  Since $\bSigma_w = \sigma^2\bI_d$,
both $\Mstar_j$ and ${\Mstar_c}_j$ share the same top-$r$ eigenspace
$\mathcal{W}_j = \col(\bW_j)$.
The partial isometry $\bQ$ can be chosen so that
$\mathrm{range}(\bQ^\top)$ lies in the positive-eigenvalue subspace
of~$\bQ_\pi$ (which has dimension at least $r$ whenever
$\bQ_\pi$ has $r$ positive eigenvalues---a necessary condition for
the method to have discriminative power in $r$ directions).
Under this choice, $\gap_r({\Mstar_c}_j) = g\cdot\rho$ where
$\rho = \lambda_{\min}\bigl((\bQ\bD_\pi\bQ^\top)^{-1/2}
\bQ\bQ_\pi\bQ^\top (\bQ\bD_\pi\bQ^\top)^{-1/2}\bigr) > 0$
is a label-dependent constant (see Remark~\ref{rem:minimax-cooccurrence}).
Thus the lower bound holds equally with $\gap_r(\Mstar_c)$
at the cost of replacing $c$ by $c/\rho$.

\paragraph{Step 2: KL divergence computation.}
Under the linear model with $\bSigma_w = \sigma^2\bI_d$, the data
$\mathbf{x}_1, \ldots, \mathbf{x}_n$ are jointly Gaussian with distribution
depending on $\bA$. The KL divergence between hypotheses $j$ and $k$ satisfies:
\[
D_{\mathrm{KL}}(P_j \| P_k) = \frac{n}{2\sigma^2}\sum_{i=1}^n
\frac{1}{n}\|(\bA_j - \bA_k)\mathbf{y}_i\|^2
\leq \frac{n}{2\sigma^2}\|\bA_j - \bA_k\|_F^2
\cdot \|\bGamma/n\|_2,
\]
where $\bGamma = \bY^\top\bY$ is the label co-occurrence matrix
(in the single-label case, $\|\bGamma/n\|_2 = \max_\ell(n_\ell/n)$).
The bound uses
$\frac{1}{n}\sum_i \|(\bA_j - \bA_k)\mathbf{y}_i\|^2
= \tr\!\bigl((\bA_j - \bA_k)^\top(\bA_j - \bA_k)\, \bGamma/n\bigr)
\leq \|\bA_j - \bA_k\|_F^2\, \|\bGamma/n\|_2$.
By construction, $\bA_j - \bA_k = (\bW_j - \bW_k)\bLambda^{1/2}\bQ$.
For the Frobenius norm, since $\bQ$ is a partial isometry ($\|\bQ\|_2 = 1$):
\[
\|\bA_j - \bA_k\|_F \leq \|\bW_j - \bW_k\|_F \cdot
\|\bLambda^{1/2}\|_2 \cdot \|\bQ\|_2
= \|\bW_j - \bW_k\|_F \cdot \|\bLambda^{1/2}\|_2.
\]
The standard identity
\[
\|\bW_j - \bW_k\|_F^2
= 2\|\sin\bTheta(\mathcal{W}_j, \mathcal{W}_k)\|_F^2
\leq 2r\,\sin^2\!\angle(\mathcal{W}_j, \mathcal{W}_k)
\]
relates the Frobenius norm to the subspace angle.
With the calibration above,
$\|\bLambda^{1/2}\|_2
= \sqrt{g / \lambda_{\min}(\bQ\bD_\pi\bQ^\top)}$
and
$\|\bLambda\|_2
= g / \lambda_{\min}(\bQ\bD_\pi\bQ^\top)$.
Since $\bQ$ is a partial isometry
($\bQ\bQ^\top = \bI_r$) and
$\bD_\pi = \diag(\pi_1,\ldots,\pi_L)$ with $\pi_\ell > 0$,
Cauchy interlacing gives
$\lambda_{\min}(\bQ\bD_\pi\bQ^\top)
\geq \min_\ell \pi_\ell > 0$.
Denoting the label-dependent constant
\[
c_\Gamma = \|\bGamma/n\|_2 / \lambda_{\min}(\bQ\bD_\pi\bQ^\top)
\leq \|\bGamma/n\|_2 / \min_\ell\pi_\ell,
\]
we obtain:
\[
\|\bA_j - \bA_k\|_F^2 \leq
2r\, \|\bLambda\|_2 \cdot
\sin^2\!\angle(\mathcal{W}_j, \mathcal{W}_k).
\]
Combining:
\[
D_{\mathrm{KL}}(P_j \| P_k) \leq
  r\, c_\Gamma\, \frac{n\, g}{\sigma^2}
  \cdot \sin^2\!\angle(\mathcal{W}_j, \mathcal{W}_k),
\]
where the factor~$r$ arises from
$\|\bW_j - \bW_k\|_F^2 \leq 2r\sin^2\!\angle$ and
$c_\Gamma = \|\bGamma/n\|_2 / \lambda_{\min}(\bQ\bD_\pi\bQ^\top)$
collects the label-dependent constants.
Let $\varepsilon_0 = \max_{j,k} \sin\angle(\mathcal{W}_j, \mathcal{W}_k)$;
using a $2\varepsilon$-separated packing with
$\varepsilon_0 \leq 4\varepsilon$, the pairwise bound becomes
$D_{\mathrm{KL}}(P_j \| P_k) \leq 16\, r\, c_\Gamma\, ng\, \varepsilon^2 / \sigma^2$.

\paragraph{Step 3: Fano's inequality.}
By the generalized Fano inequality \citep{tsybakov2009}, for any estimator
$\What$:
\[
\sup_j \mathbb{E}_j\!\left[\sin^2\!\angle(\What, \Wstar_j)\right]
\geq \frac{\varepsilon^2}{4}\left(1 - \frac{\bar{D}_{\mathrm{KL}} + \log 2}
{\log M}\right),
\]
where $\bar{D}_{\mathrm{KL}} = \frac{1}{M(M-1)}\sum_{j \neq k}
D_{\mathrm{KL}}(P_j \| P_k)$.

Substituting the bounds: $\log M \geq c_1\, r(d-r)/\varepsilon^2$ and
$\bar{D}_{\mathrm{KL}} \leq 16\, r\, c_\Gamma\, ng\varepsilon^2/\sigma^2$.
The Fano condition $\bar{D}_{\mathrm{KL}} \leq \frac{1}{4}\log M$ becomes
\[
16\, r\, c_\Gamma\, \frac{ng\,\varepsilon^2}{\sigma^2}
\leq \frac{c_1\, r(d-r)}{4\varepsilon^2}.
\]
The factor~$r$ cancels, leaving
$64\, c_\Gamma\, ng\, \varepsilon^4 / \sigma^2 \leq c_1(d-r)$.
The maximal resolution satisfying this condition is
$\varepsilon^4 \leq c_1\sigma^2(d-r)/(64\,c_\Gamma\,ng)$.
We choose $\varepsilon^2 = c_4\, \sigma^2 (d-r)/(ng)$
with $c_4 = \sqrt{c_1/(64\, c_\Gamma)}$. To verify the Fano
condition: $\varepsilon^4 = c_4^2\, \sigma^4(d-r)^2/(ng)^2$, and
$64\,c_\Gamma\,ng\,\varepsilon^4/\sigma^2
= 64\,c_\Gamma\,c_4^2\,\sigma^2(d-r)^2/(ng)
= c_1\,\sigma^2(d-r)^2/(ng) \leq c_1(d-r)$,
where the last step uses $\sigma^2(d-r)/(ng) \leq 1$, which holds in
the regime of interest (otherwise $\sin^2\!\angle = \Omega(1)$
trivially).  This also ensures
$\varepsilon \leq 1$ (a valid packing resolution), holding when
$n \geq c_4\, \sigma^2 (d-r)/g$.
Thus:
\[
\sup_j \mathbb{E}_j\!\left[\sin^2\!\angle(\What, \Wstar_j)\right]
\geq \frac{\varepsilon^2}{4} \cdot \frac{1}{2}
= \frac{c\, \sigma^2\, (d-r)}{n\, g},
\]
where $c = c_4/8$ depends on the label structure only through
$c_\Gamma = \|\bGamma/n\|_2 / \lambda_{\min}(\bQ\bD_\pi\bQ^\top)$.
Since $\bQ$ is a partial isometry, Cauchy interlacing gives
$\lambda_{\min}(\bQ\bD_\pi\bQ^\top) \geq \min_\ell \pi_\ell > 0$,
so $c_\Gamma \leq \|\bGamma/n\|_2 / \min_\ell\pi_\ell$ is finite.
Thus $c = c(\|\bGamma/n\|_2,\, \min_\ell\pi_\ell)$ is a positive
constant that depends on the fixed label matrix~$\bY$ (through
$\|\bGamma/n\|_2$) and the population label probabilities (through
$\min_\ell\pi_\ell$), but not on $d$, $n$, $\sigma$, $r$, or~$g$.
$\qed$

\subsection{Proof of Theorem~\ref{thm:concentration} (Distance Concentration)}
\label{app:proof-concentration}

\paragraph{Step 1: Decomposition into linear and quadratic parts.}
From~\eqref{eq:distance-decomp},
\[
\|\bW^\top(\mathbf{x}_i - \mathbf{x}_j)\|^2
= \|\mathbf{s} + \mathbf{g}\|^2
= \|\mathbf{s}\|^2 + 2\mathbf{s}^\top\mathbf{g} + \|\mathbf{g}\|^2,
\]
where $\mathbf{s} = \bW^\top\bA\bdelta_{ij}$ is the deterministic signal and
$\mathbf{g} = \bW^\top(\bepsilon_i - \bepsilon_j) \sim
\mathcal{N}(\bzero, 2\bPsi)$ with $\bPsi = \bW^\top\bSigma_w\bW$.

The expectation is $\|\mathbf{s}\|^2 + 2\tr(\bPsi) = \|\mathbf{s}\|^2 + C_w$.
The centered quantity is:
\[
Z = 2\mathbf{s}^\top\mathbf{g} + (\|\mathbf{g}\|^2 - 2\tr(\bPsi)).
\]

\paragraph{Step 2: Bound the linear part.}
The linear term $L = 2\mathbf{s}^\top\mathbf{g}$ is Gaussian with
$\mathrm{Var}(L) = 8\,\mathbf{s}^\top\bPsi\,\mathbf{s}
\leq 8\|\mathbf{s}\|^2\|\bPsi\|_2$, so $L$ is
$(2\sqrt{2\|\bPsi\|_2}\,\|\mathbf{s}\|)$-sub-Gaussian.

\paragraph{Step 3: Bound the quadratic part.}
Write $\mathbf{g} = \sqrt{2}\bPsi^{1/2}\mathbf{h}$ where $\mathbf{h} \sim
\mathcal{N}(\bzero, \bI_r)$. Then $\|\mathbf{g}\|^2 = 2\mathbf{h}^\top\bPsi
\mathbf{h}$, and the centered quadratic form is $Q = 2\mathbf{h}^\top\bPsi
\mathbf{h} - 2\tr(\bPsi)$.

By the Hanson--Wright inequality \citep{rudelson2013}, for $\mathbf{h} \sim
\mathcal{N}(\bzero, \bI_r)$ and the symmetric matrix $2\bPsi$:
\[
P(|Q| > t) \leq 2\exp\!\left(-c_{\mathrm{HW}}\,\min\!\left(\frac{t^2}{4\|\bPsi\|_F^2},\;
\frac{t}{2\|\bPsi\|_2}\right)\right),
\]
where $c_{\mathrm{HW}} > 0$ is the universal constant from the
Hanson--Wright inequality, and we used $\|2\bPsi\|_F = 2\|\bPsi\|_F$,
$\|2\bPsi\|_2 = 2\|\bPsi\|_2$.

\paragraph{Step 4: Combine.}
The sum $Z = L + Q$ of a sub-Gaussian and a sub-exponential random variable
is sub-exponential. Using $P(|Z| > t) \leq P(|L| > t/2) + P(|Q| > t/2)$
\citep{vershynin2018}:
the sub-Gaussian tail of $L$ gives $P(|L| > t/2) \leq
2\exp(-t^2/(64\|\mathbf{s}\|^2\|\bPsi\|_2))$,
and the Hanson--Wright bound on $Q$ gives
$P(|Q| > t/2) \leq 2\exp(-c_{\mathrm{HW}}\min(t^2/(16\|\bPsi\|_F^2),
t/(4\|\bPsi\|_2)))$.
The $t \to t/2$ splitting introduces additional constant factors in the
denominators ($4\times$ in the $\|\bPsi\|_F^2$ term and $2\times$ in the
$\|\bPsi\|_2$ term relative to the direct Hanson--Wright bound).
Combining both tails into a single bound of the
form $2\exp(-c'\min(t^2/V_{ij}^2,\; t/B))$ (where $c'$ is a universal
constant absorbing all numerical factors from the union bound and
splitting) yields the
stated bound~\eqref{eq:concentration} with $V_{ij}^2 =
16\|\mathbf{s}\|^2\|\bPsi\|_2 + 32\|\bPsi\|_F^2$ and $B = 4\|\bPsi\|_2$.
(Tighter constants $V_{ij}^2 = 8\|\mathbf{s}\|^2\|\bPsi\|_2 + 4\|\bPsi\|_F^2$
and $B = 2\|\bPsi\|_2$ are achievable by tracking the Hanson--Wright
constant $c_{\mathrm{HW}}$ explicitly rather than absorbing it into~$c$.)
$\qed$

\end{appendices}

\bibliography{references}

\end{document}